\documentclass{article}

\usepackage{microtype}
\usepackage{graphicx}
\usepackage{subcaption}
\usepackage{booktabs} 

\usepackage{hyperref}

\usepackage[accepted]{icml2026}

\usepackage{amsmath}
\usepackage{amssymb}
\usepackage{mathtools}
\usepackage{amsthm}
\usepackage{mathabx}
\usepackage{bbm}
\usepackage{mathrsfs}
\usepackage{enumitem}

\usepackage{relsize}
\usepackage{exscale}

\usepackage{accents}

\usepackage{algorithm}

\usepackage[capitalize,noabbrev]{cleveref}

\theoremstyle{plain}
\newtheorem{theorem}{Theorem}[section]
\newtheorem{proposition}[theorem]{Proposition}
\newtheorem{lemma}[theorem]{Lemma}
\newtheorem{corollary}[theorem]{Corollary}
\theoremstyle{definition}

\newtheorem{assumption}[theorem]{Assumption}
\theoremstyle{remark}
\newtheorem{remark}[theorem]{Remark}

\usepackage[textsize=tiny]{todonotes}

\usepackage{caption}
\usepackage{graphicx}
\usepackage{subcaption}
\usepackage{multirow}

\usepackage{makecell}

\usepackage{graphicx}
\usepackage{mwe}

\usepackage{tikz}
\usetikzlibrary{shapes,arrows,positioning,fit,decorations.pathreplacing,calc}

\usepackage{blkarray}
\usepackage{bm}
\usepackage{todonotes}

\usepackage{abbreviation_definitions}

\definecolor{myblue}{RGB}{70, 160, 230}

\icmltitlerunning{Optimal Design for Multinomial Logit Model with
Applications to Best Assortment Identification}

\begin{document}

\twocolumn[
  \icmltitle{Optimal Design for Multinomial Logit Model
  with
  \\Applications to Best Assortment Identification}



  \icmlsetsymbol{equal}{*}

  \begin{icmlauthorlist}
    \icmlauthor{Joongkyu Lee}{sch}
    \icmlauthor{Min-hwan Oh}{sch}
  \end{icmlauthorlist}

    \icmlaffiliation{sch}{Seoul National University, Seoul, Korea}

    \icmlcorrespondingauthor{Min-hwan Oh}{minoh@snu.ac.kr}

  \icmlkeywords{Machine Learning, ICML}

  \vskip 0.3in
]



\printAffiliationsAndNotice{}  

\begin{abstract}
We study optimal experimental design for multinomial logit (MNL) bandits, where an agent repeatedly selects a subset of $K$ items from a ground set of size $N$ and observes single-choice feedback. Unlike linear or generalized linear bandits, MNL bandits have a \textit{combinatorial} action space, which makes classical optimal design approaches and naive optimization over all subsets computationally intractable. We propose a computationally efficient optimal design framework for MNL models that achieves both statistical efficiency and scalability through two complementary approaches: (i) an exact or certified-approximate reformulation of the design oracle as a $0$--$1$ mixed-integer linear program (MILP) with solver-certified early stopping, and (ii) a fully polynomial-time lifted design that replaces the nonlinear objective with a tractable surrogate. Using the Kiefer--Wolfowitz equivalence theorem, we establish near G-optimality guarantees and characterize the induced statistical--computational trade-offs. As an application, we develop a best assortment identification algorithm for MNL bandits with linear utilities and non-uniform revenues, and prove an instance-dependent sample complexity of $\tilde{\mathcal{O}}\big(\frac{d \log N}{\Delta^2}\big)$, where $d$ is the feature dimension, $N$ is the number of arms, and $\Delta$ is the minimum revenue gap. 
\end{abstract}

\section{Introduction}
\label{sec:intro}
We study optimal experimental design in multinomial logit (MNL) models~\citep{mcfadden1977modelling} with linear utilities.
The MNL model is a fundamental discrete choice model widely used in economics, marketing, and machine learning, where outcomes are generated from subset-dependent multinomial distributions parameterized by latent utilities.
For instance, in online advertising, a user selects (e.g., clicks) one item from a subset of \(K\) ads presented on a page.
Each ad can be represented by high-dimensional features, such as image or text embeddings produced by an off-the-shelf neural network.
This structured feedback setting naturally arises in large-scale applications, where the total number of available items \(N\) can be very large.

From an experimental design perspective, this setting poses significant challenges.
Unlike linear or generalized linear models—where the design space consists of individual arms—
the design space in MNL models is \emph{combinatorial}: each experiment corresponds to selecting a subset of arms, and the number of feasible experiments grows on the order of $N^{K}$.
Moreover, even for a fixed subset, the information induced by the MNL model is fundamentally different from that in linear or generalized linear models.
In particular, the Fisher information is matrix-valued, depends nonlinearly on the underlying parameter, and is determined by the entire subset through MNL choice probabilities, rather than decomposing additively across individual arms.
As a result, classical optimal design theory~\citep{kiefer1960equivalence, pukelsheim2006optimal}—which relies on additive, per-arm information contributions—does not directly apply.

In this paper, we develop an optimal design framework for MNL models with combinatorial experiments that addresses the structural and computational challenges described above.
Our approach builds on \textit{locally} optimal design principles to handle the inherent nonlinearity of the MNL model, and leads to a Frank--Wolfe algorithm~\citep{frank1956algorithm, dunn1978conditional} whose main computational challenge is a linear maximization oracle (LMO) over a \emph{combinatorial} collection of feasible subsets.
To address this bottleneck, we propose two complementary approaches:
(i) an \emph{exact} $0$--$1$ mixed-integer linear program (MILP) reformulation of the LMO, allowing \textit{early stopping} with guaranteed accuracy up to a user-specified tolerance, and
(ii) a \textit{polynomial-time} relaxation based on a Schur-complement lifting of the MNL Fisher information with a bounded approximation error.

As an application of our efficient optimal design framework, we study \textit{best assortment identification} in MNL bandits with linear utilities.
In this fixed-confidence setting, an agent sequentially offers assortments (i.e., subsets of arms) and observes a single MNL choice outcome, with the goal of identifying the unique revenue-maximizing assortment $S^\star$ with high probability while minimizing the total number of interactions.
Leveraging a $G$-optimal design computed at a nominal parameter, we propose an algorithm, \AlgName{}, that combines an initial exploration phase (which ensures the nominal parameter is uniformly close to the true parameter), design-based sampling, and a principled stopping rule.
We prove an instance-dependent sample complexity bound of
$\widetilde{\mathcal O}\!\left(d \log(N/\delta)\big(\Gap^{-2}+(\kappa\Gap)^{-1}\big)\right)$,
which in the small-gap regime simplifies to the leading-order guarantee
$\widetilde{\mathcal O}\!\big(\frac{d\log N}{\Gap^2}\big)$.
To the best of our knowledge, this yields the first sample complexity guarantee for best assortment identification in MNL bandits with linear utilities, while remaining computationally efficient via either MILP-based early stopping or the polynomial-time lifted relaxation.

Our main contributions are summarized as follows:
\begin{itemize}

    \item \textbf{Exact MILP reformulation:} 
    To address the primary computational bottleneck of the Frank--Wolfe (FW) algorithm—the linear maximization oracle (LMO) defined in~\eqref{eq:LMO}—we show that the optimization problem admits an \emph{exact} reformulation as a $0$--$1$ mixed-integer linear program (Theorem~\ref{thm:LMO_MILP}).
    This reformulation allows the LMO to be solved using off-the-shelf MILP solvers, which either return an exact solution or terminate early with a user-specified, solver-certified optimality gap.
    Moreover, we show that the FW algorithm retains its $\epsilon$-optimality guarantee even with certified inexact LMO solutions, at the cost of a small additional number of FW iterations (Proposition~\ref{prop:stopping_time_inexact_LMO_informal}).

    \item \textbf{Polynomial-time lifted surrogate:}
    As an alternative approach, we propose a novel lifted surrogate based on a Schur-complement construction.
    This relaxation yields a ratio-of-sums objective that can be optimized \emph{exactly} in polynomial time.
    The resulting design trades statistical efficiency for computational scalability, while incurring only a bounded and explicitly characterized approximation error (Theorem~\ref{thm:liftedFW}).

    \item \textbf{Best assortment identification in MNL bandits:}
    Leveraging the optimal design, we propose \AlgName{} for best assortment identification in MNL bandits, and establish an instance-dependent sample complexity of
    $\BigOTilde\!\big( \frac{d \log N}{\Gap^2} \big)$,
    where $d$ is the feature dimension, $N$ is the number of arms, and $\Gap$ is the minimum revenue gap (Theorem~\ref{thm:BSI_MNL}).
    To the best of our knowledge, this is the first sample complexity guarantee for best assortment identification in MNL bandits with linear utilities and non-uniform revenue parameters.
    Moreover, the algorithm remains computationally efficient, either via MILP-based early stopping with certified gaps or via a fully polynomial-time lifted relaxation.

\end{itemize}


\section{Related Work}
\label{sec:related}
\textbf{MNL model in Bandits.}\,\,
The MNL feedback model has been extensively studied in the bandit literature~\citep{agrawal2019mnl, agrawal2017thompson, ou2018multinomial, chen2020dynamic, oh2019thompson, oh2021multinomial, perivier2022dynamic, agrawal2023tractable, lee2024nearly}.
Compared to linear, generalized linear, and logistic bandits~\citep{abbasi2011improved, faury2020improved, abeille2021instance, faury2022jointly, lee2024unified}, MNL bandits present additional challenges due to their combinatorial action space and non-uniform revenue parameters.
Most prior work primarily focuses on regret minimization~\citep{chen2020dynamic, oh2019thompson, oh2021multinomial, perivier2022dynamic, lee2024nearly, lee2025improved}.
Although a limited number of studies address best-arm or best-assortment identification~\citep{rusmevichientong2010dynamic, saure2013optimal, yang2021fully, karpov2022instance}, they are largely restricted to context-free MNL settings.
Consequently, best assortment identification in MNL bandits with linear utilities remains underexplored.

\textbf{Pure Exploration in Linear Bandits.}\,\,
The pure exploration problem has been extensively investigated in linear bandits~\citep{soare2014best, degenne2020gamification, li2022instance, jedra2020optimal, fiez2019sequential, komiyama2022minimax, yang2022minimax}.
In this line of work, G-optimal design plays a central role in achieving near-optimal sample complexity.
These approaches aim to identify the best arm or to certify near-optimality with high confidence, yielding strong theoretical and practical guarantees.
However, existing design-based methods rely on enumerating individual arms and therefore do not extend to MNL bandits, where the action space is inherently combinatorial.
Consequently, prior work provides no sample-complexity guarantees 
for pure exploration in MNL bandits.

\section{Optimal Design for MNL Model}
\label{sec:opt_design_MNL}
\subsection{Notations.}
\label{subsec:notation}
For a positive integer $n$, we define $[n]$ as the set $\{1, 2, \ldots, n \}$. 
The $\ell_2$--norm of a vector $x$ is denoted by $\|x\|_2$.
For a positive semi-definite matrix $A$ and a vector $x$, we use $\|x \|_A$ to represent $\sqrt{x^\top A x}$.
For any two symmetric matrices $A$ and $B$ of the same dimensions, $A \succeq B$ indicates that $A-B$  is a positive semi-definite matrix.
Finally, we define $\mathcal{S}$ as the set of candidate assortments with a size constraint of at most $K$, i.e., $ \mathcal{S} = \{S \subseteq [N]: |S| \leq K \}$.

\subsection{MNL Model and Locally Optimal Design}
\label{subsec:mnl_local_opt}
\textbf{MNL model and loss function. }\,
We consider the problem of optimal experimental design under the multinomial logit (MNL) model~\citep{mcfadden1977modelling}.
Let $\Acal = \{\ab_i\}_{i=1}^N$ denote a finite set of $N$ arms, where each arm $i \in [N]$ is represented by a $d$-dimensional feature vector $\ab_i \in \RR^d$.
Let $\Scal$ denote the collection of feasible subsets of arms with maximum size $K$.
At each step $t$, the learner selects a subset $S_t \in \Scal$ with $2 \le |S_t| \le K$.\footnote{For a singleton subset, the choice probability is identically $1$; hence, we exclude this trivial case in this section.}
Conditional on $S_t$, the probability of choosing arm $i \in S_t$ follows the MNL model:
\begin{align}
    p(i | S_t, \thetab^\star)
    := \frac{\exp\!\big( \ab_i^\top \thetab^\star \big)}{\sum_{j \in S_t} \exp\!\big( \ab_j^\top \thetab^\star \big)},
    \label{eq:MNL_prob}
\end{align}
where $\thetab^\star \in \RR^d$ is an \emph{unknown} underlying parameter.
Note that when $K=2$, the MNL model reduces to the Bradley--Terry--Luce (BTL) model~\cite{bradley1952rank}.

At step $t$, let $\yb_t := (y_{ti})_{i \in S_t} \in \{0,1\}^{|S_t|}$ denote the one-hot choice feedback vector, where $y_{ti}=1$ indicates that arm $i \in S_t$ is selected. Then
$
\Pr(y_{ti}=1 | S_t)=p(i | S_t,\thetab^\star),$ 
where $ i \in S_t,
$
and
$
\sum_{i \in S_t} y_{ti}=1.
$

Now, we define the MNL loss function at step $t$ as:
\begin{align*}
    \ell_t(\thetab) := - \sum_{i \in S_t} y_{ti} \log p(i | S_t, \thetab).
\end{align*}
Then, the Fisher information matrix~\citep{fisher1925theory} is defined as the expected Hessian of the negative log-likelihood:
\begin{align*}
    \Ib_{\thetab}(S_t)
    &:= \EE_{\yb_t}\!\left[ \nabla^2 \ell_t(\thetab) \,\middle|\, S_t \right]
    = \nabla^2 \ell_t(\thetab)
    \\
    &= \sum_{i \in S_t} p(i | S_t, \thetab)\,
        \big( \ab_i - \bar{\ab}_{\thetab}(S_t) \big)
        \big( \ab_i - \bar{\ab}_{\thetab}(S_t) \big)^\top\!,
\end{align*}
where $\bar{\ab}_{\thetab}(S_t)
    \!:=\! \sum_{j \in S_t} p(j | S_t, \thetab)\, \ab_j$
denotes the mean feature vector under the MNL choice probabilities.
The second equality holds because, under the MNL model, the Hessian of the negative log-likelihood,
$\nabla^2 \ell_t(\thetab)$, does not depend on the \textit{realized} choice feedback.

Throughout the paper, we assume that the pairwise feature differences span the
$d$-dimensional space:
\begin{assumption}[Spanning condition]
\label{assm:span}
We assume that $\operatorname{span}\big\{\ab_i - \ab_j : i,j \in [N],\, i \neq j\big\} = \RR^{d}.$
\end{assumption}
This assumption holds \textit{without loss of generality}: if the span were a strict subspace of $\RR^d$,
we could restrict $\thetab$ to the identifiable subspace (and redefine $d$ accordingly),
so that Assumption~\ref{assm:span} holds throughout, without changing the MNL probabilities (see Appendix~\ref{app:wlog_span} for more details).

\textbf{Locally G-optimal design. }\,
We focus on solving the \textit{G-optimal design} problem~\citep{kiefer1960equivalence, pukelsheim2006optimal} under the MNL model, in which the set of feasible assortments is combinatorial and has cardinality on the order of $\BigO(N^K)$.
Let $\pi: \Scal \rightarrow [0,1]$ denote a probability distribution over feasible assortments. 
Given any fixed parameter $\thetab_0 \in \RR^d$—hence the term \emph{locally} optimal design—our goal is to find 
$\pi^\star_{\thetab_0} \in \argmin_{\pi} \, g_{\thetab_0}(\pi)$, where
\begin{align*}
    g_{\thetab_0}(\pi) = \max_{S \in \Scal }\,
    \operatorname{tr} \! \left(
        \Mb_{\thetab_0}(\pi)^{-1} \Ib_{\thetab_0} (S)
    \right)\!.
    \numberthis \label{eq:g_opt}
\end{align*}
Here, $\Mb_{\thetab_0}(\pi) := \EE_{S \sim \pi}[\Ib_{\thetab_0} (S)]$ denotes the expected Fisher information matrix induced by the design $\pi$.
Note that, since the objective in~\eqref{eq:g_opt} does not depend on the realized feedback, algorithms based on the optimal design $\pi^\star_{\thetab_0}$ are non-adaptive.

It is well known that solutions to the optimal design problem are characterized by the Kiefer–Wolfowitz (KW) equivalence theorem~\citep{kiefer1960equivalence, lattimore2020bandit}.
However, whereas the classical KW theorem is formulated in terms of feature vectors, we consider a matrix-valued information form $\Ib_{\thetab_0}(S) \in \RR^{d \times d}$.
Therefore, we establish the KW equivalence theorem for the MNL model, building on the matrix-valued KW theorem of~\citealt[Theorem~1]{mukherjee2024optimal}.
\begin{proposition}[KW equivalence theorem for MNL]
\label{prop:KW_MNL}
    Under Assumption~\ref{assm:span},
    the following statements are equivalent:
    \begin{enumerate}[label=(\alph*)]
        \item $\pi^\star_{\thetab_0}$ is a minimizer of $g_{\thetab_0}(\pi)$, defined in~\eqref{eq:g_opt}.
    
        \item $\pi^\star_{\thetab_0}$ is a maximizer of $f_{\thetab_0}(\pi) = \log \operatorname{det}\!\left(\Mb_{\thetab_0}(\pi)\right)$.

        \item  $g_{\thetab_0}(\pi^\star_{\thetab_0}) = d$.
    \end{enumerate}
    Furthermore, there exists a minimizer $\pi_{\thetab_0}^\star$ of $g_{\thetab_0}(\pi)$ such that $|\operatorname{supp}(\pi_{\thetab_0}^\star)|\leq d(d+1)/2$.
\end{proposition}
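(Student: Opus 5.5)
The plan is to reduce the statement to the matrix-valued Kiefer--Wolfowitz theorem of \citealt[Theorem~1]{mukherjee2024optimal}, whose hypotheses are a finite index set of positive semidefinite information matrices whose convex hull contains a nonsingular element. Here the index set is the finite collection $\Scal$ (restricted to $2\le |S|\le K$) and the matrices are $\Ib_{\thetab_0}(S)\succeq 0$. So the first step is nondegeneracy. For a pair $S=\{i,j\}$, a direct computation gives $\Ib_{\thetab_0}(\{i,j\}) = p_ip_j(\ab_i-\ab_j)(\ab_i-\ab_j)^\top$ with $p_ip_j>0$. Hence the uniform design over all pairs has $\Mb_{\thetab_0}(\pi)$ whose range contains every $\ab_i-\ab_j$, and this is nonsingular by Assumption~\ref{assm:span}. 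Consequently $f_{\thetab_0}$ is finite for some $\pi$, maximizers of $f_{\thetab_0}$ are nonsingular, and $g_{\thetab_0}(\pi)<\infty$ exactly on nonsingular designs.

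I would then give the standard argument directly, in case the citation's setup differs. The map $\pi\mapsto\log\det \Mb_{\thetab_0}(\pi)$ is concave on the simplex, since $\Mb$ is linear in $\pi$ and $\log\det$ is concave. The simplex is compact, and $f\to-\infty$ near singular designs, so a maximizer exists. Its directional derivative toward the point mass $\delta_S$ is $\operatorname{tr}(\Mb^{-1}\Ib(S)) - \operatorname{tr}(\Mb^{-1}\Mb) = \operatorname{tr}(\Mb^{-1}\Ib(S)) - d$. Concavity then shows that $\pi$ maximizes $f$ iff $\max_S \operatorname{tr}(\Mb(\pi)^{-1}\Ib(S))\le d$, which gives (b)$\Leftrightarrow$ ($g(\pi)\le d$). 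Conversely, for any nonsingular $\pi$ we have $\EE_{S\sim\pi}\operatorname{tr}(\Mb^{-1}\Ib(S)) = d$, so $g(\pi)\ge d$ always. Thus $\min g = d$ is attained by any $f$-maximizer, and (a), (b), (c) are all equivalent: a minimizer of $g$ has $g=d$, and $g(\pi)=d$ implies $\pi$ maximizes $f$ and minimizes $g$.

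For the support bound, I would take any optimal $\pi^\star$ and apply Carathéodory's theorem to $\Mb_{\thetab_0}(\pi^\star)$ as a point in the convex hull of $\{\Ib_{\thetab_0}(S)\}$ inside the space of symmetric $d\times d$ matrices, which has dimension $d(d+1)/2$. This yields a representation with at most $d(d+1)/2+1$ atoms. To remove the extra atom, I would use the standard refinement: by strict concavity of $\log\det$ the optimal information matrix $\Mb^\star$ is unique, and it lies on the boundary of the convex hull of $\{\Ib(S)\}$. The boundary claim holds because the hyperplane $\{A:\operatorname{tr}(\Mb^{\star-1}A)=d\}$ supports the hull by the inequality above. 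The hull intersected with this hyperplane is then a convex set of dimension at most $d(d+1)/2-1$, so Carathéodory there gives at most $d(d+1)/2$ atoms. Every atom $S$ in this support has $\operatorname{tr}(\Mb^{\star-1}\Ib(S))=d$, and the new design has the same $\Mb^\star$, so it is still optimal.

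The main obstacle is this support-size refinement, specifically justifying that the representation can be confined to the supporting face so the count drops from $d(d+1)/2+1$ to $d(d+1)/2$. The equivalence part is routine once nonsingularity is secured via the pair matrices and the spanning assumption.
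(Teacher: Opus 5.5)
Your proof is correct, but it takes a different route from the paper. The paper's proof is a pure reduction. It factors each $\Ib_{\thetab_0}(S)=\Ab_S\Ab_S^\top$ and observes that the design matrix and trace objective of \citet[Theorem~1]{mukherjee2024optimal} coincide with $\Mb_{\thetab_0}(\pi)$ and $g_{\thetab_0}$. It then imports all three equivalences and the $d(d+1)/2$ support bound wholesale, simply asserting that the cited theorem's spanning hypothesis follows from Assumption~\ref{assm:span}.

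You instead reprove the matrix Kiefer--Wolfowitz theorem from first principles. You use concavity of $\log\det$, the directional derivative $\operatorname{tr}(\Mb^{-1}\Ib_{\thetab_0}(S))-d$, the averaging identity $\EE_{S\sim\pi}\operatorname{tr}(\Mb^{-1}\Ib_{\thetab_0}(S))=d$, and Carath\'eodory on the supporting hyperplane $\{A:\operatorname{tr}(\Mb^{\star-1}A)=d\}$. You also verify nondegeneracy explicitly through $\Ib_{\thetab_0}(\{i,j\})=p_ip_j(\ab_i-\ab_j)(\ab_i-\ab_j)^\top$. This is exactly the step the paper leaves implicit, and both arguments tacitly need $K\ge 2$. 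Your route buys self-containedness and a transparent support bound; the paper's buys brevity, at the cost of trusting that the external theorem's setup matches.

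One small point on your support argument. Uniqueness of $\Mb^\star$ and the ``boundary'' claim are not needed. All you use is that every atom of an optimal design satisfies $\operatorname{tr}(\Mb^{\star-1}\Ib_{\thetab_0}(S))=d$. Hence $\Mb^\star$ lies in the convex hull of the finitely many $\Ib_{\thetab_0}(S)$ contained in an affine space of dimension $d(d+1)/2-1$. Apply Carath\'eodory to that finite generating set, not to the face as an abstract convex set, so the resulting atoms are themselves matrices $\Ib_{\thetab_0}(S)$.
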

Proposition~\ref{prop:KW_MNL} shows the equivalence between G-optimality and D-optimality under the MNL model, and further guarantees the existence of an optimal design with finite support.
The proof of Proposition~\ref{prop:KW_MNL} is provided in Appendix~\ref{subsec:proof_prop:KW_MNL}.
\subsection{Frank-Wolfe Algorithm for MNL model}
\label{subsec:opt_MNL}
To solve~\eqref{eq:g_opt},
we employ a Frank--Wolfe algorithm~\citep{frank1956algorithm,nocedal2006numerical,jaggi2013revisiting},
tailored to the MNL model
(see Algorithm~\ref{alg:FW_MNL}).
Rather than directly minimizing the G-optimality objective $g_{\thetab_0}(\pi)$,
we exploit its equivalent D-optimal formulation at the nominal parameter $\thetab_0$,
and instead maximize
$f_{\thetab_0}(\pi) := \log\det\!\big(\Mb_{\thetab_0}(\pi)\big)$ over $\pi\in\Delta(\Scal)$ (see Proposition~\ref{prop:KW_MNL}).
\begin{algorithm}[t]
   \caption{Frank–Wolfe Algorithm for MNL at $\thetab_0$}
   \label{alg:FW_MNL}
    \begin{algorithmic}[1]
       \STATE {\bfseries Input:}
       nominal parameter $\thetab_0$,
       initial distribution $\pi_0\in\Delta(\Scal)$,
       FW precision $\varepsilon$.
       \STATE {\bfseries Initialize:} 
       $\Mb_0 = \Mb_{\thetab_0}(\pi_0)$, 
       $m=0$.
       \WHILE{$g_{\thetab_0}(\pi_m) > (1+\varepsilon)d $}
            \STATE $S_m \leftarrow \argmax_{S\in\Scal}\ \mathrm{tr} \big(\Mb_m^{-1}\Ib_{\thetab_0}(S) 
            \big)$. 

            \STATE $\gamma_m \leftarrow \argmax_{\gamma \in [0,1]}
            f_{\thetab_0}
            \big(
                    (1-\gamma) \pi_m + \gamma \mathbbm{1}_{S_m}
            \big)
            $.

            \STATE $\pi_{m+1} = (1-\gamma_m)\pi_m  + \gamma_m \mathbbm{1}_{S_m}$.

            \STATE Update $\Mb_{m+1}$ and $m \leftarrow m+1$.
            
       \ENDWHILE

       \STATE Output the estimated policy $\widehat{\pi}_{\thetab_0} = \pi_m$.
    \end{algorithmic}
\end{algorithm}

In Algorithm~\ref{alg:FW_MNL}, we fix a nominal parameter $\thetab_0$ and 
 initialize $\pi_0$ such that $\Mb_{\thetab_0}(\pi_0)\!\succ\! 0$.
At iteration $m$, let $\pi_m$ denote the current design and define $\Mb_m:=\Mb_{\thetab_0}(\pi_m)$.
Using the matrix derivative $\nabla_{\Mb} \log\det(\Mb)=\Mb^{-1}$ together with the chain rule, the partial derivative of $ f_{\thetab_0}(\pi_m)$
with respect to the coordinate $\pi(S)$ is given by
\[
\big[\nabla_{\pi} f_{\thetab_0}(\pi_m)\big]_S
= \left\langle \Mb_m^{-1},\, \Ib_{\thetab_0}(S)\right\rangle
= \mathrm{tr}\big(\Mb_m^{-1} \Ib_{\thetab_0}(S)   \big).
\]
The \textit{linear maximization oracle} (LMO) therefore, selects an extreme point
$\mathbbm{1}_{S_m}\in\Delta(\Scal)$, where $\mathbbm{1}_{S}$ denotes the Dirac distribution at $S$,
corresponding to
\begin{equation}
\label{eq:LMO}
    S_m\in
    \argmax_{S\in\Scal}\ \mathrm{tr} \!\left(\Mb_m^{-1} \Ib_{\thetab_0}(S) \right)\!.
\end{equation}
We then update the design by taking a convex combination of the current iterate
$\pi_m$ and the extreme point $\mathbbm{1}_{S_m}$, i.e.,
$\pi_{m+1} = (1-\gamma_m)\pi_m + \gamma_m \mathbbm{1}_{S_m}$, 
where the stepsize $\gamma_m\in[0,1]$ is selected via a line search that maximizes
$f_{\thetab_0}\!\left((1-\gamma)\pi_m + \gamma \mathbbm{1}_{S_m}\right)$.
It is known that this procedure converges to a maximizer of $f_{\thetab_0}(\pi)$~\citep{zhao2023analysis}.\footnote{Since inexact line search is standard in Frank--Wolfe methods and the resulting error is typically negligible in practice, we omitted this approximation error from the presentation for simplicity.}
We refer to the returned policy $\widehat{\pi}_{\thetab_0}$ as a $(1+\varepsilon)$-approximation of the optimal policy $\pi^\star_{\thetab_0}$.

\textbf{Computational challenge in LMO. }\,
However, a naive implementation of Algorithm~\ref{alg:FW_MNL} is computationally infeasible, as the LMO in~\eqref{eq:LMO} entails exhaustive enumeration over the set of all assortments, whose cardinality is
$|\Scal| = \BigO( N^K )$.

In the following two subsections, we address this computational bottleneck in two complementary ways.
\textbf{First}, we reformulate~\eqref{eq:LMO} as a $0$–$1$ \textit{mixed-integer linear program (MILP)}, which enables the LMO to be solved using standard MILP solvers with early stopping and solver-certified optimality gaps (Subsection~\ref{subsec:lmo_milp}).
\textbf{Second}, we relax the original problem in~\eqref{eq:LMO} via a \textit{Schur-complement lifting}, resulting in a polynomial-time surrogate LMO at the cost of approximation error; importantly, the resulting worst-case error admits a provable upper bound (Subsection~\ref{subsec:lifted_lmo_mnl}).

\subsection{LMO via $0$--$1$ MILP with Certified Optimality Gaps}
\label{subsec:lmo_milp}

In this subsection, we show that the linear minimization oracle (LMO) in~\eqref{eq:LMO} can be reformulated as a $0$–$1$ \textit{mixed-integer linear program (MILP)}.
\begin{theorem}[MILP reformulation of the LMO]
\label{thm:LMO_MILP}
The optimization problem in~\eqref{eq:LMO} admits an \emph{exact} reformulation as a compact $0$--$1$ mixed-integer linear program (MILP) whose number of variables and linear constraints is polynomial in $N$. As a consequence, the LMO can be solved using standard off-the-shelf MILP solvers either to global optimality or, via early termination, with a solver-certified optimality gap at most $\ErrorLMO$.
\end{theorem}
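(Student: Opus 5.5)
Write $A:=\Mb_m^{-1}\succ 0$ and $w_i:=\exp(\ab_i^\top\thetab_0)>0$. The plan is to express the LMO objective as a ratio of quadratic forms in the selection indicators, linearize it, and then linearize the products that remain.

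\textbf{Step 1: rewrite the objective.} Expanding the Fisher information gives
\[
\operatorname{tr}\!\big(A\,\Ib_{\thetab_0}(S)\big)=\frac{\sum_{i\in S} w_i\, q_{ii}}{\sum_{i\in S} w_i}-\frac{\sum_{i,j\in S} w_i w_j\, q_{ij}}{\big(\sum_{i\in S} w_i\big)^2},
\]
where $q_{ij}:=\ab_i^\top A\ab_j$. With binary indicators $x_i\in\{0,1\}$, $2\le \sum_i x_i\le K$, and $W(x)=\sum_i w_i x_i$, this becomes
\[
\frac{W(x)\sum_i w_iq_{ii}x_i-\sum_{i,j}w_iw_jq_{ij}x_ix_j}{W(x)^2}.
\]
Symmetrizing the numerator yields $\tfrac12\sum_{i,j}w_iw_j\|\ab_i-\ab_j\|_A^2\,x_ix_j\ge 0$, so the objective is a ratio of a quadratic form in $x$ to the square of a positive linear form.

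\textbf{Step 2: Charnes--Cooper linearization.} Introduce the continuous variable $t:=1/W(x)$, which lies in the known interval $[1/W_{\max},1/W_{\min}]$. Here $W_{\max}$ is the sum of the $K$ largest weights and $W_{\min}$ is the sum of the two smallest. Define $z_i:=x_i t$ and $u_{ij}:=x_ix_j t^2$. The normalization $\sum_i w_i z_i=1$ characterizes $t$ exactly. The objective is then the linear function $\tfrac12\sum_{i<j}\cdots$, namely $\sum_{i<j}w_iw_j\|\ab_i-\ab_j\|_A^2\,u_{ij}$.

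\textbf{Step 3: linearize the remaining products.} The constraint $z_i=x_it$ is linearized exactly by McCormick inequalities, which are exact because $x_i$ is binary and $t$ is bounded:
\[
z_i\le \bar t\,x_i,\qquad z_i\ge \underline t\,x_i,\qquad z_i\le t-\underline t(1-x_i),\qquad z_i\ge t-\bar t(1-x_i).
\]
The term $u_{ij}=z_iz_j$ is the main obstacle, since it is a product of two continuous variables. To handle it, I would write $u_{ij}=x_j\,(z_i t)$ and observe that $z_it=x_it^2$. So I add a variable $s:=t^2$ and set $v_i=x_i s$ and $u_{ij}=x_jv_i$. Products of a binary with a bounded continuous variable are again exact by McCormick, but $s=t^2$ is not linear.

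To avoid the nonlinearity $s=t^2$, I would instead exploit the fact that we are maximizing and that the coefficients $c_{ij}:=w_iw_j\|\ab_i-\ab_j\|_A^2$ are nonnegative. It therefore suffices to impose linear upper bounds that are tight at the optimum. Multiplying the normalization $\sum_j w_jz_j=1$ by $z_i$ gives the valid linear equalities $\sum_j w_j u_{ij}=z_i$ (an RLT step). Combined with $u_{ij}\le \bar t\, z_j$, $u_{ij}\le \bar t\,z_i$, $u_{ij}\ge0$, and $u_{ij}\le \bar t^2 x_i$, $u_{ij}\le \bar t^2 x_j$, these should force $u_{ij}=z_iz_j$ whenever $x$ is integral. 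The argument: for fixed binary $x$, the $z$ are determined uniquely, $u_{ij}$ vanishes off $S\times S$, and on $S$ the row-sum equalities together with symmetry must pin down $u$. Verifying this pinning is the delicate point. If it fails, my fallback is to discretize, or to note that $z_j=w_j^{-1}\cdot(\text{choice probability})$, and linearize $u_{ij}=x_i z_j\cdot t$ through the factorization $u_{ij}=x_i\,y_{j}$ with $y_j:=z_jt$. Here $y_j$ is bounded, and $\sum_j w_jy_j=t$ is linear. Using this I can McCormick-linearize $u_{ij}=x_iy_j$ exactly, provided $y$ itself is determined by linear constraints with $x$ binary, namely $y_j=x_jt^2$, where $t^2$ is again a single scalar.

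The remaining scalar $t^2$ can be handled by enumerating the breakpoints of $W$, or by choosing the alternative variable $r:=1/W^2$ directly and recovering $t$ from it. Either way the model uses $O(N^2)$ variables and constraints. Exact equivalence follows by showing that every feasible integral point corresponds to an assortment with matching objective value, and conversely. The final claim then follows because the model is a finite MILP: branch-and-bound solvers return the global optimum, or stop early with a certified dual bound, giving a gap $\le\ErrorLMO$.
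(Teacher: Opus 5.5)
There is a gap exactly at the step you flag as the main obstacle, so the proposal does not yet exhibit an exact MILP. Your Steps 1--2 are correct, and the symmetrization into nonnegative coefficients $c_{ij}=w_iw_j\|\ab_i-\ab_j\|_A^2$ (with your $A$) is right.

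The RLT system does \emph{not} pin $u_{ij}=z_iz_j$. Take all $w_i=1$, $K\ge 4$ and $S=\{1,2,3,4\}$, so $t=1/4$, $\bar{t}=1/2$, and the intended values are $u_{ij}=1/16$. Add $+\epsilon$ to $u_{12},u_{34}$ and $-\epsilon$ to $u_{13},u_{24}$, symmetrically. This preserves every row equality $\sum_j w_ju_{ij}=z_i$. For small $\epsilon$ it also respects all your bounds ($0\le u_{ij}\le \bar{t}z_j=1/8$, $u_{ij}\le \bar{t}^{2}$). Yet the objective shifts by $\epsilon(c_{12}+c_{34}-c_{13}-c_{24})=2\epsilon(\ab_1-\ab_4)^\top A(\ab_3-\ab_2)$, which is generically nonzero. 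Since you maximize, the model overvalues $S$. In general, $|S|$ row equations cannot fix $\binom{|S|}{2}$ off-diagonal unknowns.

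Your fallback then stops at $y_j=x_jt^2$ and leaves the scalar $t^2$ to two options, neither of which works as stated. Enumerating the values of $W(x)$ means up to $\sum_{k=2}^{K}\binom{N}{k}$ cases, which is no better than brute force. ``Recovering $t$ from $r=1/W^2$'' means $t=\sqrt{r}$, which is not a linear constraint.

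The missing idea is to make the Charnes--Cooper scalar the reciprocal of the \emph{actual} denominator $W(x)^2=x^\top ww^\top x$, not of $W(x)$. Then only binary$\times$bounded-continuous products occur, and McCormick linearizes these exactly.

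In your own variables this costs one line:
\begin{itemize}
\item keep $s\in[\underline{t}^{2},\bar{t}^{2}]$ as a free variable;
\item impose $y_j=x_js$ by McCormick;
\item impose your identity $\sum_j w_jy_j=t$ as a \emph{constraint}.
\end{itemize}
For binary $x$ this gives $s\,W(x)=t=1/W(x)$, so $s=t^2$ holds automatically. After that, $u_{ij}=x_iy_j$ is McCormick-exact, and you obtain an exact MILP with $O(N^2)$ variables and constraints. Equivalently, you can drop $t$ entirely and normalize by $\sum_{i,j}w_iw_ju_{ij}=1$.

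This is the paper's route with $\alpha=1/W(x)^2$, done more compactly. The paper writes the objective as a ratio of two quadratic forms in $x$. It linearizes each form row-wise via a big-$M$ split: an auxiliary $y^1_i=x_i\,(Px+M\mathbf{1})_i$ gives $x^\top Px=\mathbf{1}^\top y^1-M\mathbf{1}^\top x$, and likewise for the denominator. Only then does it apply Charnes--Cooper, with a big-$M$ linearization of $v=\alpha x$. This needs $O(N)$ auxiliary variables instead of your pairwise $O(N^2)$.
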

The proof of Theorem~\ref{thm:LMO_MILP} is deferred to
Appendix~\ref{app_subsec:proof_thm:LMO_MILP}.

\textbf{Early stopping with $\ErrorLMO$-optimality. }\,
Although the resulting MILP is NP-hard in general, modern MILP solvers based on
branch-and-bound maintain both (i) the best feasible solution found so far and
(ii) a provable bound on how good any solution can possibly be.
From these two quantities, the solver reports a \emph{certified optimality gap} at runtime,
which upper-bounds the remaining suboptimality of the current incumbent solution
(the best feasible solution).
Therefore, the LMO computation can be safely terminated once this solver-certified gap
drops below a user-specified tolerance $\ErrorLMO$.
This yields an \textit{instance-dependent} and \textit{verifiable} \textit{early-stopping} rule:
the returned LMO solution comes with an explicit certificate of near-optimality,
providing a principled trade-off between computational cost and user-specified
optimization accuracy.

\textbf{Frank--Wolfe stopping under $\ErrorLMO$-approximate LMO. }\,
When the MILP-based LMO is solved approximately, the solver returns a candidate set $\hat S_m$
together with a certified optimality gap.
To account for this inexactness, we tighten the Frank--Wolfe stopping rule.
Specifically, if the solver is terminated once the certified gap is at most $\ErrorLMO$, we define
\[
    \tilde{\epsilon} \;\coloneqq\; \epsilon - \ErrorLMO/d,
\]
and require $\tilde{\epsilon}>0$.
The FW algorithm is then run with tolerance $\tilde{\epsilon}$.
See Algorithm~\ref{alg:FW_MNL_approx} for a complete description.

\begin{proposition}[FW stopping guarantee under an inexact LMO]
\label{prop:stopping_time_inexact_LMO_informal}
Let $\tilde{\epsilon} \in (0,1]$.
Then the Frank--Wolfe algorithm with tolerance $\tilde{\epsilon}$ terminates after at most
$ \tau = \tilde{\mathcal O}\!\left( d/\tilde{\epsilon} \right)$
iterations, and returns a design $\widehat{\pi}_{\thetab_0}$ that is $\epsilon$-accurate.
\end{proposition}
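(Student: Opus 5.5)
We analyze the standard D-optimal Frank--Wolfe scheme with an inexact oracle by tracking the objective $f_{\thetab_0}(\pi_m)=\log\det\Mb_m$. The key identity is that for every design $\pi$, $\sum_S \pi(S)\,\mathrm{tr}(\Mb_{\thetab_0}(\pi)^{-1}\Ib_{\thetab_0}(S)) = \mathrm{tr}(\Id_d)=d$. Hence the FW duality gap is $g_{\thetab_0}(\pi_m)-d$, and by concavity of $\log\det$ it upper-bounds the suboptimality $f^\star-f_{\thetab_0}(\pi_m)$. With an $\ErrorLMO$-certified oracle, the returned set $\hat S_m$ satisfies $\hat g_m:=\mathrm{tr}(\Mb_m^{-1}\Ib_{\thetab_0}(\hat S_m)) \ge g_{\thetab_0}(\pi_m)-\ErrorLMO$. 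The stopping test is run with the certified upper bound $\hat g_m+\ErrorLMO$ (or equivalently the solver's dual bound) against $(1+\tilde\epsilon)d$.

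\emph{Accuracy at termination.} Suppose the algorithm stops because $\hat g_m \le (1+\tilde\epsilon)d$. Then
\[
g_{\thetab_0}(\pi_m)\le \hat g_m+\ErrorLMO \le (1+\tilde\epsilon)d+\ErrorLMO=(1+\epsilon)d,
\]
using $\tilde\epsilon=\epsilon-\ErrorLMO/d$. So the output is $\epsilon$-accurate in the G-criterion. By Proposition~\ref{prop:KW_MNL} this is the natural notion of near-optimality, since the optimum equals $d$.

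\emph{Iteration count.} While running, the test has not fired, so $\hat g_m>(1+\tilde\epsilon)d$, i.e.\ the inexact direction has gap $\delta_m:=\hat g_m/d-1>\tilde\epsilon$. For the step toward $\mathbbm{1}_{\hat S_m}$ we lower-bound the line-search improvement by the value at a specific $\gamma$. Write $\Mb(\gamma)=(1-\gamma)\Mb_m+\gamma\Ib$ with $\Ib=\Ib_{\thetab_0}(\hat S_m)$. Then
\[
\log\det\Mb(\gamma)-\log\det\Mb_m = d\log(1-\gamma)+\log\det\!\Big(\Id+\tfrac{\gamma}{1-\gamma}\Mb_m^{-1/2}\Ib\Mb_m^{-1/2}\Big).
\]
Because $\Ib$ is PSD, the eigenvalues $\lambda_i$ of the middle matrix are nonnegative, and $\sum_i\lambda_i=\hat g_m$. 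We use $\log\det(\Id+tA)\ge \log(1+t\,\mathrm{tr}A)$ for PSD $A$. This follows since $\prod(1+t\lambda_i)\ge 1+t\sum\lambda_i$, and it sidesteps the rank issue, because $\Ib$ has rank up to $K-1$ rather than $1$. We thus obtain the same one-dimensional bound as in classical rank-one Fedorov--Wynn analysis:
\[
h(\gamma)\ge d\log(1-\gamma)+\log\!\big(1+\tfrac{\gamma}{1-\gamma}\hat g_m\big).
\]
Choosing $\gamma=(\hat g_m-d)/((\hat g_m-1)d)$ gives an increase of order $\min(\delta_m,\delta_m^2)\gtrsim \tilde\epsilon^2$ when $\tilde\epsilon\le 1$.

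Next we combine this with a bounded total budget. The suboptimality is at most $f^\star-f(\pi_0)$, and it is also at most the true gap $g-d\le d\delta_m+\ErrorLMO\le 2d\delta_m$. Following the Khachiyan / Todd ``two-phase'' counting, as in \citep{zhao2023analysis}, we obtain:
\begin{itemize}
  \item $\tilde{\mathcal O}(d\log\log)$ iterations to reduce the gap below a constant;
  \item in each dyadic phase $\delta\in(2^{-k-1},2^{-k}]$, the suboptimality is $\le 2d2^{-k}$ and each step gains $\gtrsim 2^{-2k}$, so at most $O(d2^{k})$ steps are needed.
\end{itemize}
Summing over phases up to $2^{-k}\approx\tilde\epsilon$ yields $\tau=\tilde{\mathcal O}(d/\tilde\epsilon)$. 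The logarithmic factor absorbs $f^\star-f(\pi_0)$, which depends on the conditioning of the initial design.

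The main obstacle is the phase argument, specifically ensuring that the per-step gain in terms of the \emph{inexact} gap $\delta_m$ can be matched against suboptimality bounded by the \emph{true} gap. This is where we use $\ErrorLMO\le d\epsilon$, so that $\ErrorLMO/d<\delta_m$ throughout and the true gap stays within a factor $2$ of $\delta_m$. If this pairing fails near the threshold, the fallback is to run the phases directly on the certified upper bound $\hat g_m+\ErrorLMO$ and accept a constant-factor loss.
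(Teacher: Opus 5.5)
Your accuracy-at-termination step matches the paper's: stopping on $\hat g_m\le(1+\tilde\epsilon)d$ and adding the certified gap gives $g_{\thetab_0}(\pi_\tau)\le(1+\epsilon)d$. Drop your earlier sentence saying the test is run on $\hat g_m+\ErrorLMO$; that is a stricter test, and it is not the one you use.

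For the iteration count you take a genuinely different route from the paper. The paper avoids any phase argument. It asserts, ``by AM--GM'', that $\prod_i\big((1-\gamma)+\gamma\lambda_i\big)\ge\big((1-\gamma)+\gamma\hat g_m/d\big)^d$. At $\gamma=1/\hat g_m$ this yields a per-step increase of $\log\det$ of at least $\tilde\epsilon/4$, and the paper then divides the budget $d\log(L/\lambda_0)$ by this increase. That inequality runs the wrong way: AM--GM gives $\le$. For rank-one atoms ($|S|=2$, e.g.\ the whole case $K=2$), exact line search gains only about $\tfrac{d}{2(d-1)}\delta_m^2$ for small $\delta_m$.

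So your bound $\log\det(\Ib_d+tA)\ge\log(1+t\,\mathrm{tr}A)$, combined with Khachiyan's step, is the correct local estimate. It gives a gain $\gtrsim\min\{1,\delta_m^2\}$; for $\delta_m\ge 1$ the gain is a constant rather than $\Omega(\delta_m)$, which is harmless. A Khachiyan--Todd phase count is then genuinely needed to reach $d/\tilde\epsilon$ rather than $d\log(L/\lambda_0)/\tilde\epsilon^2$.

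The gap is at the step you flagged. You claim that $\ErrorLMO\le d\epsilon$ gives $\ErrorLMO/d<\delta_m$ throughout, but it does not. Before stopping you only know $\delta_m>\tilde\epsilon=\epsilon-\ErrorLMO/d$, and this is below $\ErrorLMO/d$ whenever $\ErrorLMO>d\epsilon/2$. In that regime the bound $f^\star-f(\pi_m)\le d\delta_m+\ErrorLMO$ has a floor at $\ErrorLMO$. A dyadic phase with $2^{-k}<\ErrorLMO/d$ then costs $O(\ErrorLMO\,2^{2k})$ iterations rather than $O(d\,2^k)$. Summing down to $2^{-k}\approx\tilde\epsilon$ gives only $O\big(d\log(L/\lambda_0)+d/\tilde\epsilon+\ErrorLMO/\tilde\epsilon^2\big)$, which can be of order $d\epsilon/\tilde\epsilon^2$.

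Your fallback does not repair this. If you phase on $u_m:=\delta_m+\ErrorLMO/d$, the per-step gain is still only $\gtrsim\delta_m^2\ge(\tilde\epsilon/\epsilon)^2u_m^2$. That is a loss of $(\epsilon/\tilde\epsilon)^2$, not a constant.

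To close the argument, add the hypothesis $\ErrorLMO\le d\tilde\epsilon$, equivalently $\tilde\epsilon\ge\epsilon/2$ (split the accuracy budget evenly). More generally, $\ErrorLMO\le Cd\tilde\epsilon$ costs a factor $1+C$. Under this hypothesis $f^\star-f(\pi_m)\le 2d\delta_m$ at every non-terminal iterate, and the phase count gives $O\big(d\log(L/\lambda_0)+d/\tilde\epsilon\big)$ iterations. Make the count rigorous using the first-hitting times $m_k$ of $\{\delta_m\le 2^{-k}\}$ together with monotonicity of $f$ under line search, since $\delta_m$ itself is not monotone.
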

Proposition~\ref{prop:stopping_time_inexact_LMO_informal} shows that, even with an $\ErrorLMO$-approximate LMO, the FW algorithm terminates in at most $\tilde{\mathcal O}(d/\tilde\epsilon)$ iterations and returns an $\epsilon$-accurate design.
The proof of Proposition~\ref{prop:stopping_time_inexact_LMO_informal} is deferred to
Appendix~\ref{app_subsec:proof_prop_stopping}.

\textbf{Comparison with \citealt{thekumparampil2024comparing}. }\,
Recently, \citet{thekumparampil2024comparing} proposed a randomized FW algorithm
(\texttt{DopeWolfe}) that approximates the LMO by sampling a subset of candidate
assortments $\Rcal_m \subseteq \Scal$ at each iteration $m$.
However, to guarantee a small Frank--Wolfe approximation error, the required
sample size still scales as $|\Rcal_m| = \BigO(N^K)$
(see Corollary~4 in~\citealp{thekumparampil2024comparing}),
and thus the combinatorial nature of the LMO is not avoided.
Moreover, unlike our MILP-based approach, \texttt{DopeWolfe} requires a \emph{pre-specified} iteration budget and does not admit a solver-certified early-stopping rule.

\subsection{Polynomial-Time Surrogate LMO via Schur-Complement Lifting}
\label{subsec:lifted_lmo_mnl}

In this subsection, we develop a lifting-based reformulation of the MNL Fisher information that removes the set-dependent centering term $\bar{\ab}_S(\thetab)$ and enables an efficient surrogate linear maximization oracle (LMO).
Throughout, we fix a nominal parameter $\thetab_0\in\RR^d$ and write $p_{\thetab_0}\!(i|S):=p(i|S,\thetab_0)$ for brevity.

\textbf{Schur-complement lifting. }\,
For each arm $i\in[N]$, define the lifted feature vector $\tilde{\ab}_i \!:= \!(\ab_i^\top\!,1)^\top\!\in\!\RR^{d+1}$ and, f
or each assortment $S\in\Scal$, define the \textit{lifted} (uncentered) second-moment matrix
\begin{align*}
    \widetilde{\Ib}_{\thetab_0}(S)
    := \sum_{i\in S} p_{\thetab_0}(i|S)\, \tilde{\ab}_i \tilde{\ab}_i^\top
    \in \RR^{(d+1)\times(d+1)}.
    \numberthis
    \label{eq:lifted_I}
\end{align*}
Let
$
\bar{\Ab}_{\thetab_0}(S) := \sum_{i\in S} p_{\thetab_0}(i|S)\,\ab_i\ab_i^\top
$.
Then $\widetilde{\Ib}_{\thetab_0}(S)$ defined in~\eqref{eq:lifted_I} admits the block decomposition
\begin{align}
    \widetilde{\Ib}_{\thetab_0}(S)
    =
    \begin{bmatrix}
        \bar{\Ab}_{\thetab_0}(S) & \bar{\ab}_{\thetab_0}(S)\\
        \bar{\ab}_{\thetab_0}(S)^\top & 1
    \end{bmatrix},
    \label{eq:lifted_I_block_mnl}
\end{align}
and the (centered) MNL Fisher information satisfies
    $\Ib_{\thetab_0}(S)
    \!=\!
    \bar{\Ab}_{\thetab_0}(S)\!-\!\bar{\ab}_{\thetab_0}(S) \bar{\ab}_{\thetab_0}(S)^\top$\!,
which is precisely the Schur complement of the bottom-right block of~\eqref{eq:lifted_I_block_mnl}.\footnote{For a block matrix
$\left(\begin{smallmatrix} A & B \\ B^\top & C \end{smallmatrix}\right)$
with $C$ invertible,
the \emph{Schur complement} of the block $C$ is defined as
$A - B C^{-1} B^\top$.}
In particular, the dependence of $\Ib_{\thetab_0}(S)$ on the centering term $\bar{\ab}_{\thetab_0}(S)$ is captured linearly in the lifted matrix $\widetilde{\Ib}_{\thetab_0}(S)$.

Given a design $\pi\in\Delta(\Scal)$, we define the lifted design matrix
\[
\widetilde{\Mb}_{\thetab_0}(\pi)
:= \EE_{S \sim \pi} \left[ \widetilde{\Ib}_{\thetab_0}(S) \right]
=
\begin{bmatrix}
\bar{\Bb}_{\thetab_0}(\pi) & \bar{\bb}_{\thetab_0}(\pi)\\
\bar{\bb}_{\thetab_0}(\pi)^\top & 1
\end{bmatrix},
\]
where
$\bar{\Bb}_{\thetab_0}\!(\pi)\!:=\!\EE_{\pi}\big[\bar{\Ab}_{\thetab_0}(S)\big]$ and
$\bar{\bb}_{\thetab_0}\!(\pi)\!:=\!\EE_{\pi}\big[\bar{\ab}_{\thetab_0}(S)\big].$
Moreover, we define its Schur-complement matrix
\[
\widehat{\Mb}_{\thetab_0}(\pi)
:=
\bar{\Bb}_{\thetab_0}(\pi)-\bar{\bb}_{\thetab_0}(\pi)\bar{\bb}_{\thetab_0}(\pi)^\top
\in\RR^{d\times d},
\]
and the associated \emph{design-mismatch} matrix
\[\Delta_{\thetab_0}(\pi)
:=
\widehat{\Mb}_{\thetab_0}(\pi)-\Mb_{\thetab_0}(\pi).
\]
We call $\Delta_{\thetab_0}(\pi)$ a \emph{mismatch} matrix because $\widehat{\Mb}_{\thetab_0}(\pi)$ is the Schur complement of the lifted design matrix $\widetilde{\Mb}_{\thetab_0}(\pi)$, whereas $\Mb_{\thetab_0}(\pi)$ is the \emph{true} (centered) Fisher design matrix in the original space. 
Thus, $\Delta_{\thetab_0}(\pi)$ quantifies the discrepancy between the Schur-complement surrogate induced by lifting and the original design matrix.

\textbf{Lifted G-optimal design. }\,
The lifting in~\eqref{eq:lifted_I} naturally induces a \emph{lifted} G-optimal design criterion.
For any design $\pi\in\Delta(\Scal)$, define
\begin{equation*}
\widetilde g_{\thetab_0}(\pi)
\;:=\;
\max_{S\in\Scal}\ \tr\!\Big(\widetilde{\Mb}_{\thetab_0}(\pi)^{-1}\,\widetilde{\Ib}_{\thetab_0}(S)\Big).
\end{equation*}
We then define the \emph{lifted optimal design} as any minimizer
$\widetilde{\pi}^\star_{\thetab_0}
\in \argmin_{\pi\in\Delta(\Scal)} \widetilde g_{\thetab_0}(\pi).$
We solve this lifted design problem
by running Frank--Wolfe with $\widetilde g_{\thetab_0}$ as the objective. 
See Algorithm~\ref{alg:FW_MNL_lifted} for a full description.

\textbf{Polynomial-time lifted LMO. }\,
At iteration $m$ of Algorithm~\ref{alg:FW_MNL_lifted}, given the current design $\pi_m$ and
$\widetilde{\Mb}_m:=\widetilde{\Mb}_{\thetab_0}(\pi_m)$, we compute the next FW atom by solving
\begin{equation}
\label{eq:LMO_lifted_theta}
S_m\in
\argmax_{S\in\Scal}\ 
\tr \!\left(\widetilde{\Mb}_m^{-1}\, \widetilde{\Ib}_{\thetab_0}(S)\right).
\end{equation}
This lifted oracle admits a ratio-of-sums representation and can be solved in polynomial time.
Indeed, letting $s_i := \tilde{\ab}_i^\top \widetilde{\Mb}_m^{-1}\tilde{\ab}_i$
and $w_i := \exp(\ab_i^\top\thetab_0)$, for every $S\in\Scal$ we have
\[
    \tr \!\left(\widetilde{\Mb}_m^{-1}\widetilde{\Ib}_{\thetab_0}(S)\right)
    = \sum_{i\in S} p_{\thetab_0}(i\mid S)\, s_i
    = \frac{\sum_{i \in S} w_i\, s_i}{\sum_{j \in S} w_j}.
\]
Consequently,~\eqref{eq:LMO_lifted_theta} is a classical MNL ratio-of-sums assortment optimization
problem and can be reduced to a linear program (LP) and solved in time $\mathcal{O}(\mathrm{poly}(N))$
(e.g.,~\citealp{rusmevichientong2010dynamic, davis2014assortment}).

Let $\pi^\star_{\thetab_0} \in\argmin_{\pi\in\Delta(\Scal)} g_{\thetab_0}(\pi)$ be an original
G-optimal design.
Under the Kiefer--Wolfowitz equivalence (Proposition~\ref{prop:KW_MNL}), $g_{\thetab_0}(\pi^\star(\thetab_0))=d$.
Similarly, under the lifted criterion, one typically has
$ g_{\thetab_0}( \widetilde{\pi}^\star_{\thetab_0} )=d+1$ (reflecting the lifted dimension).

\begin{theorem}[Lifted FW guarantee]
\label{thm:liftedFW}
Fix $\thetab_0$ and run Frank--Wolfe (Algorithm~\ref{alg:FW_MNL}) to minimize
$\widetilde g_{\thetab_0}(\pi)$ using the polynomial-time lifted LMO~\eqref{eq:LMO_lifted_theta}.
Let $\hat\pi$ be the design returned at the stopping time
$\widetilde g_{\thetab_0}(\hat\pi)\ \le\ (1+\epsilon)(d+1)$, and define 
$\ErrorLift
:=
\inf\big\{\varepsilon\ge 0:\ \Delta_{\thetab_0}(\hat\pi)\preceq \varepsilon\,\Mb_{\thetab_0}(\hat\pi)\big\}$.
Then
\[
g_{\thetab_0}(\hat\pi)
\ \le\  2(1+\ErrorLift)(1+\epsilon)d.
\]
\end{theorem}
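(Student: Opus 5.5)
The plan is to compare the three quadratic forms in play: the lifted one $\tr(\widetilde{\Mb}^{-1}\widetilde{\Ib}(S))$, the Schur-complement one $\tr(\widehat{\Mb}^{-1}\Ib(S))$, and the true one $\tr(\Mb^{-1}\Ib(S))$. Throughout, write $\Mb=\Mb_{\thetab_0}(\hat\pi)$, $\widehat{\Mb}=\widehat{\Mb}_{\thetab_0}(\hat\pi)$, $\widetilde{\Mb}=\widetilde{\Mb}_{\thetab_0}(\hat\pi)$ and $\bb=\bar{\bb}_{\thetab_0}(\hat\pi)$. Two steps are needed: the stopping rule bounds the Schur-complement form for every $S$, and the definition of $\ErrorLift$ transfers that bound to the true form.

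\textbf{Step 1: positivity and ordering.} First, $\Mb\succ0$, because the FW iterates are mixtures containing $\pi_0$ with $\Mb_{\thetab_0}(\pi_0)\succ0$, and $\Mb_{\thetab_0}(\cdot)$ is linear in $\pi$. Second, the mismatch is a covariance:
\[
\Delta_{\thetab_0}(\hat\pi)=\EE_{\hat\pi}[\bar{\ab}(S)\bar{\ab}(S)^\top]-\bb\bb^\top\succeq0 .
\]
This follows by expanding $\bar{\Bb}-\bb\bb^\top$ together with $\bar{\Ab}(S)=\Ib(S)+\bar{\ab}(S)\bar{\ab}(S)^\top$. Hence $\widehat{\Mb}\succeq\Mb\succ0$, so $\widehat{\Mb}$ is invertible and $\widetilde{\Mb}$ is invertible with Schur complement $\widehat{\Mb}$. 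By definition of $\ErrorLift$ we have $\widehat{\Mb}\preceq(1+\ErrorLift)\Mb$. Operator monotonicity of the inverse then gives $\Mb^{-1}\preceq(1+\ErrorLift)\widehat{\Mb}^{-1}$.

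\textbf{Step 2: block-inverse computation (the main step).} The block inverse formula gives
\[
\widetilde{\Mb}^{-1}=\begin{bmatrix}\widehat{\Mb}^{-1} & -\widehat{\Mb}^{-1}\bb\\ -\bb^\top\widehat{\Mb}^{-1} & 1+\bb^\top\widehat{\Mb}^{-1}\bb\end{bmatrix}.
\]
From this, $\tilde{\ab}_i^\top\widetilde{\Mb}^{-1}\tilde{\ab}_i=(\ab_i-\bb)^\top\widehat{\Mb}^{-1}(\ab_i-\bb)+1$. Averaging over $i\sim p_{\thetab_0}(\cdot|S)$ and using the bias--variance identity
\[
\sum_{i\in S}p(i|S)(\ab_i-\bb)(\ab_i-\bb)^\top=\Ib(S)+(\bar{\ab}(S)-\bb)(\bar{\ab}(S)-\bb)^\top\succeq\Ib(S),
\]
we obtain, for every $S\in\Scal$,
\[
\tr(\widehat{\Mb}^{-1}\Ib(S))\le\tr(\widetilde{\Mb}^{-1}\widetilde{\Ib}(S))-1\le(1+\epsilon)(d+1)-1=(1+\epsilon)d+\epsilon\le 2(1+\epsilon)d,
\]
where the last inequality uses $d\ge1$ and $\epsilon\le 1+\epsilon$. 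The place I expect care to be needed is getting the sign and centering right in this step: it must be the centering at the design-averaged mean $\bb$ rather than at the set mean $\bar{\ab}(S)$ that appears, and that discrepancy only helps, via the positive rank-one term.

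\textbf{Step 3: combine.} For each $S$,
\[
\tr(\Mb^{-1}\Ib(S))\le(1+\ErrorLift)\,\tr(\widehat{\Mb}^{-1}\Ib(S))\le 2(1+\ErrorLift)(1+\epsilon)d,
\]
using Step 1 and $\Ib(S)\succeq0$. Taking the maximum over $S\in\Scal$ yields the claimed bound on $g_{\thetab_0}(\hat\pi)$.
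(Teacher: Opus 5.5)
Your proposal is correct and follows essentially the same route as the paper: the covariance argument for $\Delta_{\thetab_0}(\hat\pi)\succeq 0$, the block-inverse identity $\tr(\widetilde{\Mb}^{-1}\widetilde{\Ib}(S)) = 1+\tr(\widehat{\Mb}^{-1}\Ib(S)) + \|\bar{\ab}_{\thetab_0}(S)-\bar{\bb}\|^2_{\widehat{\Mb}^{-1}}$, and $\Mb^{-1}\preceq(1+\ErrorLift)\widehat{\Mb}^{-1}$, all combined with the stopping rule. The differences are cosmetic. You obtain the identity by averaging the per-item lifted leverages $\tilde{\ab}_i^\top\widetilde{\Mb}^{-1}\tilde{\ab}_i$ and applying a bias--variance decomposition, whereas the paper multiplies out the blocks and completes the square; you also explicitly justify $\Mb\succ 0$, which the paper takes for granted.
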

\textbf{Computational--statistical trade-off. }\,
Using the lifted LMO yields a polynomial-time Frank--Wolfe update.
The trade-off is a potential mismatch between the lifted surrogate and the true Fisher information, quantified by $\ErrorLift$.
Accordingly, the $G$-optimality guarantee degrades by at most a factor $(1+\ErrorLift)$.
A crude upper bound is $\BigO\big(1/\lambda_{\min}( \Mb_{\thetab_0}(\widetilde{\pi}^\star_{\thetab_0})) \big)$ (Proposition~\ref{prop:how_large_ErrorLift}).
In the standard MNL bandit setting with an outside option (see Equation~\eqref{eq:MNL_prob_bandit}),
we also obtain the bound 
$\ErrorLift \!=\! \BigO\big(
Ke^B \!\wedge\!
1/\lambda_{\min}( \Mb_{\thetab_0}(\widetilde{\pi}^\star_{\thetab_0}))
\big),$ 
when $\|\thetab_0\|\le B$ (Proposition~\ref{prop:outside_option_bound_eps_by_ratio}).
The proof of Theorem~\ref{thm:liftedFW} is provided in Appendix~\ref{app_subsec:proof_thm:liftedFW}.

\section{Best Assortment Identification in MNL Bandits with Linear Utilities}
\label{sec:BAI_MNL}
In this section, we consider \textit{\underline{b}est a\underline{s}sortment \underline{i}dentification (BSI)} in MNL bandits,
where the abbreviation BSI is used throughout to avoid confusion with best arm identification (BAI).
Following the previous literature~\citep{agrawal2019mnl, agrawal2017thompson, ou2018multinomial, chen2020dynamic, oh2019thompson, oh2021multinomial, perivier2022dynamic, agrawal2023tractable, lee2024nearly, lee2025improved},
we adopt the MNL choice model with an \textit{outside option} (corresponding to not choosing any item in the offered assortment $S$), where the choice probabilities are given by
\begin{align}
    p(i | S, \thetab^\star)
    :=\frac{\exp(\ab_i^\top \thetab^\star)}{1+\sum_{j\in S}\exp(\ab_j^\top \thetab^\star)},
    \qquad \forall i\in S,
\label{eq:MNL_prob_bandit}
\end{align}
and $p(0| S, \thetab^\star):=1-\sum_{i \in S}p(i| S, \thetab^\star)$. 
Here $0$ denotes the outside option.

We define the expected revenue of an assortment $S$ as:
\[
R(S, \thetab)
=\sum_{i\in S} p(i| S, \thetab) r_i
= \frac{\sum_{i \in S} \exp(\ab_i^\top \thetab^\star) r_i}{1+\sum_{j\in S}\exp(\ab_j^\top \thetab^\star)}
,
\]
where $r_i\in[0,1]$ is a given revenue parameter.
Let $S^\star\in\argmax_{S \in \Scal}R(S,\thetab^\star)$ denote the optimal assortment and assume the maximizer is unique.
Given a failure probability $\delta>0$, 
our goal is to interact with the environment until a (possibly random) stopping time $\tau$ and return an assortment $\hat S$ such that
\[
    \PP(\hat{S} = S^\star \wedge \tau < \infty ) \geq 1-\delta.
\]
This setting is known as the fixed-confidence setting~\citep{garivier2016optimal},
where the objective is to identify the optimal assortment with as few samples as possible.

We work under the standard boundedness assumption.
\begin{assumption}
\label{assum:bounded_assumption}
For all $i \in [N]$, $\|\ab_i\|_2 \!\le\! 1$, and $\|\thetab^\star\|_2 \! \le\! \BoundParam$.
\end{assumption}
Moreover, following the previous MNL bandit literature~\citep{oh2021multinomial, perivier2022dynamic, lee2024nearly}, we  introduce the problem-dependent constant:
\begin{equation} \label{eq:kappa}
    \kappa
    := \min_{S \in \Scal}
        \min_{i \in S}
       p(i |S, \thetab^\star)\, p(0 |S, \thetab^\star),
\end{equation}
which measures the degree of deviation from the linear model.
Since $1/\kappa = \Theta(K^2 e^{3B})$ grows exponentially in $B$,
it is crucial to derive bounds that do not depend on $1/\kappa$.

\textbf{Confidence bound. }\,
Let $\Dcal_t\! =\!  \{ (i_s, S_s) \}_{s=1}^t$ denote the data collected up to $t$.
We define the cumulative loss as:
\[
    \Lcal_{\Dcal_t}(\thetab)
    :=\sum_{s=1}^{t} \ell_s(\thetab) + \frac{\lambda}{2} \|\thetab \|_2^2,
    \quad
    \text{where }\,
    \lambda>0.
\]
We also define Hessian matrix at $\thetab$ as
    $\Hb_t(\thetab) 
    = \Hb_{\Dcal_t}(\thetab)
    := \sum_{s=1}^{t} \nabla^2 \ell_s(\thetab) + \lambda \Ib_d.$
The following lemma provides a confidence bound of order $\sqrt{\log N}$, independent of $d$.
\begin{lemma} [Theorem~1 of~\citealt{han2026improved}]
    \label{lemma:estimation_error_main}
    Given $\Dcal := \{ (i_t, S_t) \}_{t=1}^T$, let $\widehat{\thetab}_{\Dcal} \in \argmin_{\thetab \in \RR^d} \Lcal_{\Dcal}(\thetab)$.
    Assume that, conditional on $\{S_t\}_{t \in [T]}$, 
    the observed choices
    $\{i_t\}_{t \in [T]}$ are mutually independent and 
    that Assumption~\ref{assum:bounded_assumption} holds.
    Define $\Hb_{\Dcal}(\thetab) := \sum_{t \in [T]} \nabla^2 \ell_t (\thetab) + \lambda \Ib_d$.
    Suppose that the event
    \begin{align*}
        64 
        \max_{s \leq t, i \in S_s} \|\ab_i \|_{\Hb_{\Dcal}(\thetab^\star)^{-1}}
        \le
        \frac{1}{\sqrt{d \log (N/\delta)}}
        \wedge
        \frac{1}{\sqrt{\lambda}\,B}
        \numberthis \label{eq:warmup_condition_main}
    \end{align*}
    holds. 
    Let $\beta(\delta) := \big(
                36 \sqrt{\log (N/\delta)}
                + 64 \sqrt{\lambda} B
            \big)$.
    Then, with probability at least $1-\delta$, for all $i \in [N]$ we have
    \[
        \big| \ab_i^\top (\widehat{\thetab}_{\Dcal} - \thetab^\star)  \big|
            \leq \beta(\delta) \| \ab_i  \|_{\Hb_{\Dcal}(\thetab^\star)^{-1}}.
    \]
\end{lemma}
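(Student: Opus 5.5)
The argument is a self-concordance-based local analysis of the regularized MLE: a Newton-type linearization around $\thetab^\star$, followed by a union bound over the $N$ directions $\ab_i$ rather than a covering of $\RR^d$. That union bound is what produces $\sqrt{\log(N/\delta)}$ instead of $\sqrt{d}$.

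\textbf{Step 1: first-order optimality and the exact Taylor identity.} Write $\mathbf{g} := \nabla \Lcal_{\Dcal}(\thetab^\star)$. Since $\widehat{\thetab}_{\Dcal}$ minimizes $\Lcal_{\Dcal}$, the mean-value form of Taylor's theorem gives
\[
\widehat{\thetab}_{\Dcal} - \thetab^\star = -\mathbf{G}^{-1} \mathbf{g},
\]
where $\mathbf{G} := \int_0^1 \Hb_{\Dcal}\big(\thetab^\star + v(\widehat{\thetab}_{\Dcal} - \thetab^\star)\big)\, dv$.

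\textbf{Step 2: control $\mathbf{G}$ by self-concordance.} The MNL loss is generalized self-concordant. With $\|\ab_i\|_2 \le 1$, this gives
\[
\nabla^2 \ell_t(\thetab) \succeq e^{-c \max_{i\in S_t} |\ab_i^\top(\thetab - \thetab^\star)|}\, \nabla^2 \ell_t(\thetab^\star).
\]
It is therefore enough to first establish a crude localization $\max_{s,\, i\in S_s} |\ab_i^\top(\widehat{\thetab}_{\Dcal} - \thetab^\star)| \le 1$. Once this holds, $\mathbf{G} \succeq c' \Hb_{\Dcal}(\thetab^\star)$ up to a constant; the regularizer part is handled identically.

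I would get the localization by a convexity argument. Restrict $\Lcal_{\Dcal}$ to the region where all observed directions move by at most $1$, and show that the minimizer cannot lie on the boundary of that region. This uses the warm-up condition \eqref{eq:warmup_condition_main}: $\|\ab_i\|_{\Hb_{\Dcal}^{-1}} \le 1/(64\sqrt{d\log(N/\delta)})$ makes
\[
|\ab_i^\top \Hb_{\Dcal}^{-1}\mathbf{g}| \le \|\ab_i\|_{\Hb_{\Dcal}^{-1}}\, \|\mathbf{g}\|_{\Hb_{\Dcal}^{-1}}
\]
small, because $\|\mathbf{g}\|_{\Hb_{\Dcal}^{-1}} = \tilde O\big(\sqrt{d\log(1/\delta)} + \sqrt{\lambda}B\big)$ by a vector martingale (or, since the $S_t$ are fixed, independent-sum) concentration. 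Here $\sqrt{\lambda}B$ comes from the bias term $\lambda\thetab^\star$.

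\textbf{Step 3: the directional bound with a decomposition.} Split
\[
\ab_i^\top(\widehat{\thetab}_{\Dcal} - \thetab^\star) = -\ab_i^\top \Hb_{\Dcal}(\thetab^\star)^{-1}\mathbf{g} \;-\; \ab_i^\top\big(\mathbf{G}^{-1} - \Hb_{\Dcal}(\thetab^\star)^{-1}\big)\mathbf{g}.
\]
For the leading term, observe that conditional on $\{S_t\}$, $\ab_i^\top \Hb_{\Dcal}^{-1}\big(\mathbf{g} - \lambda\thetab^\star\big)$ is a sum of independent, bounded, mean-zero terms $\ab_i^\top \Hb_{\Dcal}^{-1}\sum_{j\in S_t}(p_{tj} - y_{tj})\ab_j$. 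The variance of this sum is at most $\|\ab_i\|^2_{\Hb_{\Dcal}^{-1}}$, because the covariance of $\yb_t$ equals the Fisher information. Each increment is bounded by $2\max_j \|\ab_j\|_{\Hb_{\Dcal}^{-1}} \|\ab_i\|_{\Hb_{\Dcal}^{-1}}$. Bernstein's inequality plus a union bound over $i \in [N]$ then gives the bound $36\sqrt{\log(N/\delta)}\,\|\ab_i\|_{\Hb_{\Dcal}^{-1}}$, where the warm-up condition absorbs the range term. The bias contributes $\lambda|\ab_i^\top \Hb_{\Dcal}^{-1}\thetab^\star| \le \sqrt{\lambda}B\,\|\ab_i\|_{\Hb_{\Dcal}^{-1}}$.

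\textbf{Step 4 and main obstacle: the remainder.} The hardest part is the second-order remainder $\ab_i^\top(\mathbf{G}^{-1} - \Hb_{\Dcal}^{-1})\mathbf{g}$. A naive bound via Cauchy--Schwarz gives
\[
\|\ab_i\|_{\Hb_{\Dcal}^{-1}}\, \big\|\Hb_{\Dcal}^{1/2}(\mathbf{G}^{-1} - \Hb_{\Dcal}^{-1})\Hb_{\Dcal}^{1/2}\big\|\, \|\mathbf{g}\|_{\Hb_{\Dcal}^{-1}}.
\]
The operator norm in the middle is $O(\max_{s,j}|\ab_j^\top(\widehat{\thetab}_{\Dcal} - \thetab^\star)|)$ by self-concordance. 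I would bound that maximum by $O\big(\max\|\ab_j\|_{\Hb^{-1}}(\sqrt{\log(N/\delta)} + \sqrt{\lambda}B)\big)$, either bootstrapping from Step 3 or applying a fixed-point argument. Multiplied by $\|\mathbf{g}\|_{\Hb^{-1}} = O(\sqrt{d\log(N/\delta)} + \sqrt{\lambda}B)$, the warm-up condition (both $1/\sqrt{d\log}$ and $1/(\sqrt{\lambda}B)$ branches) makes the product $O(1)\cdot\big(\sqrt{\log(N/\delta)} + \sqrt{\lambda}B\big)\|\ab_i\|_{\Hb_{\Dcal}^{-1}}$ with a small constant. This yields the stated constants $36$ and $64$ in $\beta(\delta)$.
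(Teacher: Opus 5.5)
The paper does not prove this lemma: it is imported verbatim as Theorem~1 of \citealt{han2026improved}, so there is no in-paper argument to compare against. Your outline is the standard route to bounds of exactly this form, and it is sound. Its ingredients are:
\begin{itemize}
\item the exact Taylor identity $\widehat{\thetab}_{\Dcal}-\thetab^\star=-\mathbf{G}^{-1}\mathbf{g}$;
\item a self-concordant localization;
\item a conditional (fixed-design) Bernstein bound with a union bound over the $N$ directions for the leading term;
\item a bootstrap showing $r:=\max_{s,\,j\in S_s}|\ab_j^\top(\widehat{\thetab}_{\Dcal}-\thetab^\star)|=O\big(\max_j\|\ab_j\|_{\Hb_{\Dcal}(\thetab^\star)^{-1}}(\sqrt{\log(N/\delta)}+\sqrt{\lambda}B)\big)$, which makes the remainder $r\,\|\mathbf{g}\|_{\Hb_{\Dcal}(\thetab^\star)^{-1}}\|\ab_i\|_{\Hb_{\Dcal}(\thetab^\star)^{-1}}$ dimension-free.
\end{itemize}
The bootstrap is the essential step: the crude localization $r\le 1$ alone would leave a $\sqrt{d}$ factor.

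When you write it out, three points need care.

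First, every use of $\|\mathbf{g}\|_{\Hb_{\Dcal}(\thetab^\star)^{-1}}$, in both the localization and the bootstrap, must be bounded by $C\sqrt{d+\log(1/\delta)}+\sqrt{\lambda}B$ with no $\log T$ or $\log\det$ factor. The warm-up condition only pays for $\sqrt{d\log(N/\delta)}\vee\sqrt{\lambda}B$. The fixed-design route gives such a bound: take an $\epsilon$-net of the unit sphere, apply scalar Bernstein, and absorb the range term with the warm-up condition. The self-normalized martingale alternative you mention carries a $\log\det$ term that the warm-up condition cannot absorb, so drop it.

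Second, Step~4 needs the two-sided Hessian comparison (Lemma~\ref{lemma:self_hessian_norm}), not only the lower bound stated in Step~2. Both directions are required to get $\|\Hb^{1/2}\mathbf{G}^{-1}\Hb^{1/2}-I\|=O(r)$.

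Third, the constants $36$ and $64$ are asserted, not derived. In particular, you must check that the coefficient of $r$ on the right-hand side of the bootstrap inequality is below $1$ under the factor $64$ in the warm-up condition. That requires tracking the self-concordance constant explicitly rather than hiding it in $O(\cdot)$.
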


\begin{algorithm*}[t]
   \caption{\AlgName (\textbf{B}est A\textbf{s}sortment \textbf{I}denficiation in \textbf{MNL} bandits)}
   \label{alg:MNL_BAI}
    \begin{algorithmic}[1]
       \STATE {\bfseries Input:}
       failure level $\delta$,
       warm-up threshold $\WarmupThreshold$,
       regularization $\lambda$,
       problem-dependent constant $\kappa$,
       precision $\varepsilon$ and $\ErrorLMO$.
       \STATE {\bfseries Initialize:} 
       $t=1$,
       $\Dcal_w = \Dcal_w'= \emptyset$,
       $\Vb_0 = \lambda \Ib_d$.
       \WHILE{$ \max_{i \in [N]} \|\ab_i \|_{\Vb_{t-1}^{-1}} > \WarmupThreshold$}
            \STATE Select an arbitrary assortment $S_t$ that contains item $i$ such that $\|\ab_i \|_{\Vb_{t-1}^{-1}}
        > \WarmupThreshold$.
            \STATE Offer $S_t$ twice and observe two independent choice feedbacks $i_t, i_t'$.
            \STATE Update $\Dcal_w \leftarrow \Dcal_w \cup \{(S_t, i_t ) \}\,$,
            $\Dcal_w' \leftarrow \Dcal_w' \cup \{(S_t, i_t' ) \}\,$,
              $\Vb_{t} \leftarrow \Vb_{t-1} + \sum_{i \in S_t} \ab_i \ab_i^\top$, and $t \leftarrow t+1$.
       \ENDWHILE
        \STATE Compute $\thetab_0 \leftarrow \argmin_{\thetab \in \RR^d} \Lcal_{\Dcal_w'}(\thetab)$
        and 
        $\widehat{\thetab}_{t-1} \leftarrow \argmin_{\thetab \in \RR^d} \Lcal_{\Dcal_w}(\thetab)$.

        \STATE Construct a locally optimal design $\widehat{\pi}_{\thetab_0}$ at $\thetab_0$ by running
        Algorithm~\ref{alg:FW_MNL_approx} (or Algorithm~\ref{alg:FW_MNL_lifted}).
        
        \STATE Set
        $\Dcal_{t-1} \leftarrow \Dcal_w$ and
        initialize $\Hb_{t-1} (\thetab_0) \leftarrow \Hb_{\Dcal_w}\! (\thetab_0) $.
        
        \WHILE{\eqref{eq:stop_rule} is not true}
            \STATE $S_t \sim \widehat{\pi}_{\thetab_0}$ and observe choice feedback $i_t$.
            \STATE Update $\Dcal_{t} \leftarrow \Dcal_{t-1} \cup \{(S_t, i_t ) \}$
            and  $\Hb_t (\thetab_0) \leftarrow \Hb_{t-1} (\thetab_0) + \nabla^2 \ell_t(\thetab_0)$.
            \STATE Compute $\widehat{\thetab}_t \leftarrow \argmin_{\thetab \in \RR^d} \Lcal_{\Dcal_t}(\thetab)$,
            and update $t \leftarrow t+1$.
        \ENDWHILE

        \STATE {\bfseries Return:} $\BestS_t$
    \end{algorithmic}
\end{algorithm*}
\subsection{Algorithm: \AlgName}
\label{subsec:algorithm}
We now describe our algorithm for BSI (Algorithm~\ref{alg:MNL_BAI}).

\textbf{Step 1: Initial exploration. }\,
In Algorithm~\ref{alg:MNL_BAI}, we begin with a pure exploration phase that continues until the burn-in condition
of Lemma~\ref{lemma:estimation_error_main} is met.
The purpose of this phase is to ensure that the uncertainty term is uniformly controlled over all arms.
Since the true parameter $\thetab^\star$ is unknown, we cannot directly verify the condition
in~\eqref{eq:warmup_condition_main}.
Instead, we use the fact that
$\Hb_t(\thetab^\star) \succeq \kappa\,\Vb_t$, where
 $\Vb_t := \sum_{s=1}^{t} 
    \sum_{i \in S_s}
        \ab_{i} 
        \ab_{i}^\top
    + \lambda \Ib_d$.
Therefore, it suffices to ensure that $\kappa^{-1/2}
        \max_{i \in [N]} \|\ab_i \|_{\Vb_t^{-1}}$ is uniformly bounded:
\begin{align*}
        \max_{i \in [N]} \|\ab_i \|_{\Vb_t^{-1}}
        \leq \frac{\kappa^{1/2}}{256} \Big( \frac{1}{\sqrt{d \log (N/\delta)}}
        \wedge   
        \frac{1}{\sqrt{\lambda}\,B} \Big) =: \WarmupThreshold
        .
\end{align*}
%

\textbf{Step 2: Fixing nominal parameter. }\,
After the initial exploration phase, we estimate a nominal parameter $\thetab_0$ using an
\emph{independent} feedback dataset $\Dcal_w'$.
During the initial exploration, each selected assortment $S_t$ is offered twice, yielding two independent
choice outcomes $i_t$ and $i_t'$.
These outcomes are stored in two separate datasets,
$\Dcal_w := \{(S_s,i_s)\}_{s=1}^{t_w}$ and
$\Dcal_w' := \{(S_s,i_s')\}_{s=1}^{t_w}$, which are conditionally independent given the offered assortments.
We then compute the nominal parameter as the regularized MLE, i.e., 
$
\thetab_0 \in \argmin_{\thetab\in\RR^d} \Lcal_{\Dcal_w'}(\thetab).
$
By construction, $\thetab_0$ depends only on $\Dcal_w'$ and is therefore independent of the main dataset $\Dcal_w$
used in the subsequent phase.
This independence is essential for satisfying the conditional independence requirement in
Lemma~\ref{lemma:estimation_error_main} when we later sample assortments according to the design computed at $\thetab_0$.

Finally, by running the warm-up phase for a sufficient number of rounds, we ensure that the nominal parameter $\thetab_0$ is uniformly close to $\thetab^\star$, which in turn implies that $p(i |S, \thetab_0) \approx p(i |S, \thetab^\star)$ for all $S$ and $i$ (see Equation~\ref{eq:p_star_bound}).

\textbf{Step 3: Computing a $G$-optimal design at $\thetab_0$. }\,
Given the nominal parameter $\thetab_0$, we compute a (near) $G$-optimal design $\widehat{\pi}_{\thetab_0}$
by running the Frank--Wolfe algorithm (Algorithm~\ref{alg:FW_MNL}).
The design can be computed either using a \textit{MILP solver} (Subsection~\ref{subsec:lmo_milp}) 
or via a relaxed but polynomial-time \textit{lifted G-optimal design} formulation (Subsection~\ref{subsec:lifted_lmo_mnl}).

\textbf{Step 4: Main rounds and stopping rule. }\,
Given the design $\widehat{\pi}_{\thetab_0}$, we run the main rounds by repeatedly sampling assortments
$S_t \sim \widehat{\pi}_{\thetab_0}$ and observing the corresponding choice feedback.
At each time $t$, we form the optimistic ($u^+_{t,i}$) and pessimistic ($u^{-}_{t,i}$) utilities
\[
u^\pm_{t,i}=\ab_i^\top\widehat{\thetab}_{t} \pm \sqrt{2}\,\beta(\delta)\,\|\ab_i\|_{\Hb_{t}(\thetab_0)^{-1}},
\]
and define the corresponding optimistic/pessimistic expected revenues for any $S\in\Scal$ as
\[
\widetilde R_t(S)
:=
\frac{\sum_{i\in S} e^{u^+_{t,i}} r_i}{1+\sum_{j\in S} e^{u^+_{t,j}}},
\quad
\widecheck R_t(S)
:=
\frac{\sum_{i\in S} e^{u^-_{t,i}} r_i}{1+\sum_{j\in S} e^{u^-_{t,j}}}.
\]
We then select the pessimistic best assortment and the optimistic best alternative as
\[
\BestS_t \in \argmax_{S \in \Scal} \widecheck R_t(S),
\qquad
\AltS_t \in \argmax_{S \in \Scal:\, S \neq \BestS_t} \widetilde R_t(S).
\]
\begin{remark}[Polynomial-time assortment selection]
    Note that each of these optimization problems can be solved in polynomial time in $N$~\citep{rusmevichientong2010dynamic, davis2014assortment}, analogously to~\eqref{eq:LMO_lifted_theta}.
    The only subtlety is the constraint $S\neq \BestS_t$ in the definition of $\AltS_t$.
    This can be enforced exactly with at most $K$ additional polynomial-time solves: for each item $i\in \BestS_t$, solve the same maximization problem under the extra constraint $i\notin [N]$, and then take the best solution among these $K$ candidates.
\end{remark}

Finally, we stop at the first time $\tau$ such that
\begin{equation}
    \widecheck R_\tau(\BestS_\tau) > \widetilde R_\tau(\AltS_{\tau}),
    \label{eq:stop_rule}
\end{equation}
and output $\BestS_\tau$.
In Lemma~\ref{lemma:S_tau_on_stopping} , we show that the above stopping rule is sufficient to guarantee $\BestS_\tau=S^\star$.


\begin{figure*}[h]
    \centering
    \begin{subfigure}{0.245\textwidth}
        \centering
        \includegraphics[width=\linewidth]{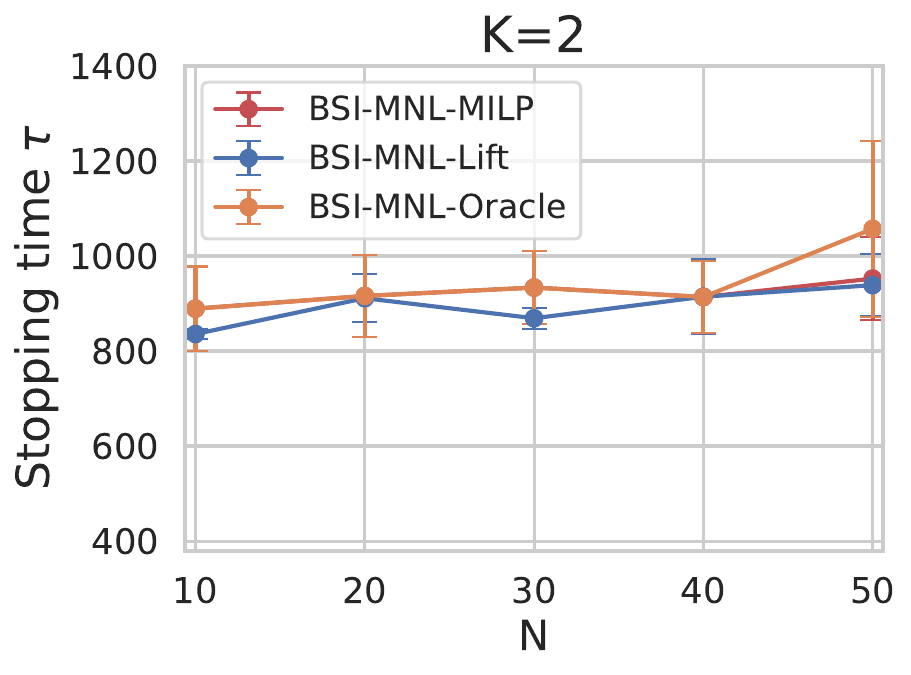}
    \end{subfigure}
    \begin{subfigure}{0.245\textwidth}
        \centering
        \includegraphics[width=\linewidth]{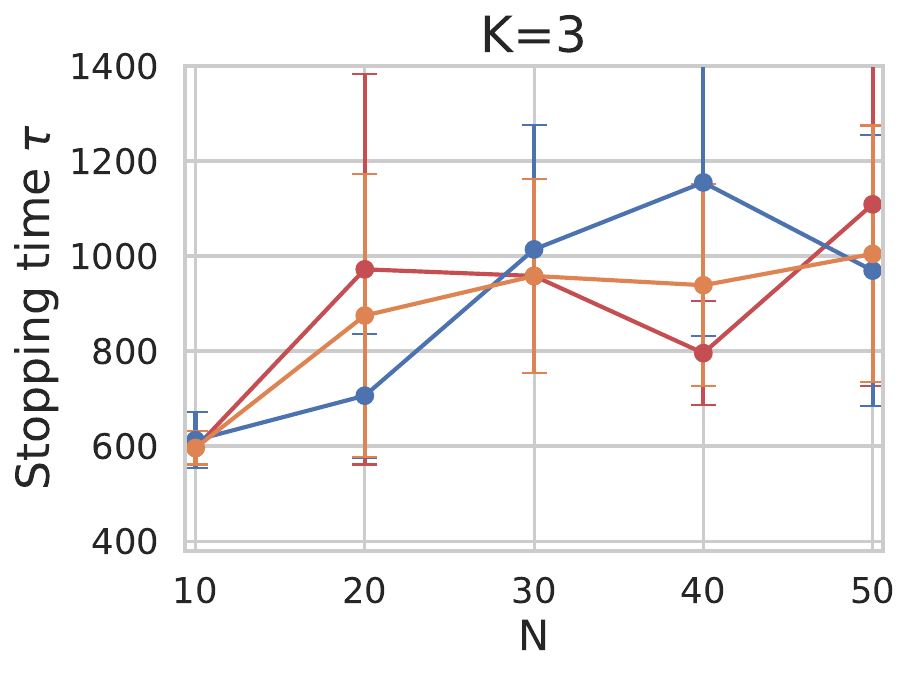}
    \end{subfigure}
    \begin{subfigure}{0.245\textwidth}
        \centering
        \includegraphics[width=\linewidth]{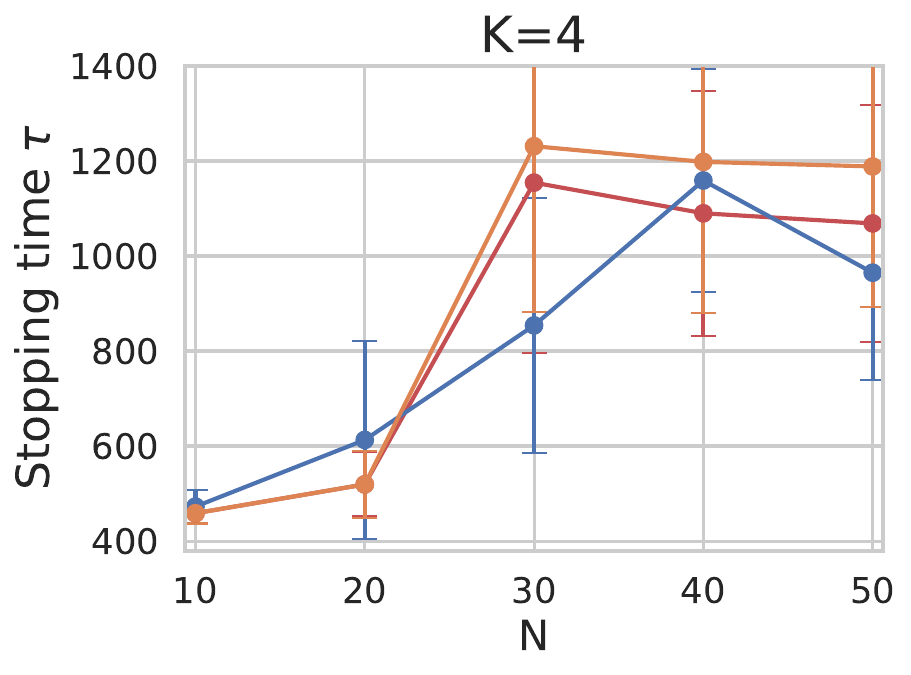}
    \end{subfigure}
    \begin{subfigure}{0.245\textwidth}
        \centering
        \includegraphics[width=\linewidth]{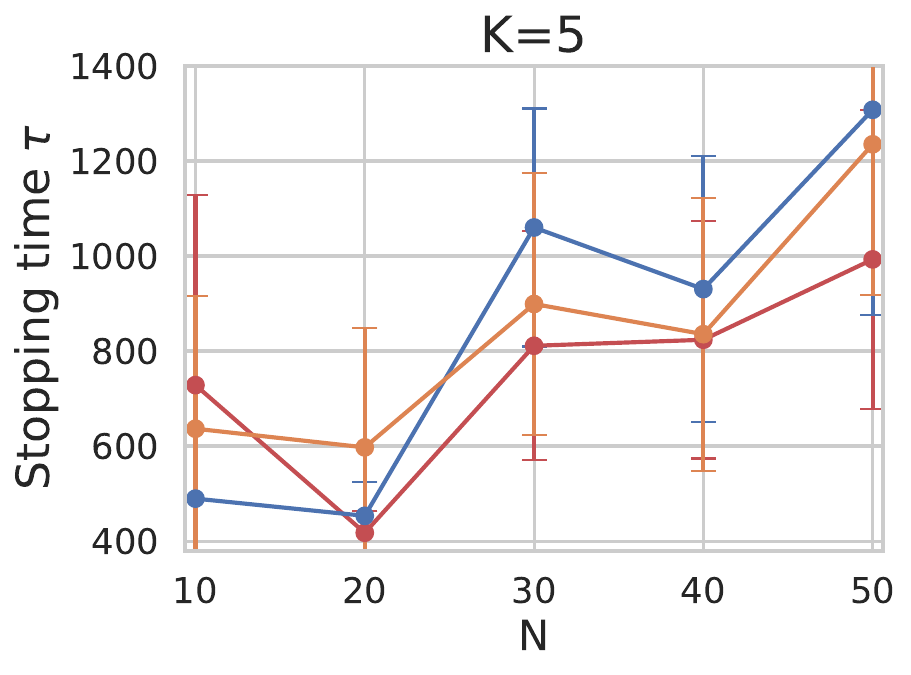}
    \end{subfigure}

    \caption{Average stopping time $\tau$ for varying $N$ and $K$ over 
     $10$ independent random seeds, with one standard deviation.}
    \label{fig:four_panels}
\end{figure*}

%
\subsection{Sample Complexity}
\label{subsec:sample_complexity}
We define the (instance-dependent) expected revenue gap
\[
\Gap := R(S^\star,\thetab^\star)-\max\{R(S,\thetab^\star)\!:\! S \in \Scal,\ S\neq S^\star\}.
\]
We now establish an instance-dependent upper bound on the sample complexity of
Algorithm~\ref{alg:MNL_BAI}.
\begin{theorem}[Sample complexity, MILP solver]
\label{thm:BSI_MNL}
    Let $\delta  \in\! (0,1]$,
    $\lambda \!= 1$, and
    $\WarmupRounds \!= \!\widetilde{\Omega}\big( \kappa^{-1} d^2
    \log(N/\delta) \big)$.
    Suppose that Assumption~\ref{assum:bounded_assumption} holds.
    Then,
    with probability at least $1-\delta$,
    Algorithm~\ref{alg:MNL_BAI}, when implemented with the MILP solver described in Subsection~\ref{subsec:lmo_milp}, returns the optimal assortment $S^\star$,
    and
    the stopping time $\tau$ satisfies
    \begin{align*}
        \tau
        =
        \BigOTilde\!\left(
            d \log\!\left(N/\delta\right)
            \left(
                \frac{1}{\Gap^2}
                + \frac{1}{\kappa\,\Gap}
            \right)
            + \WarmupRounds
        \right)\!.
    \end{align*}
\end{theorem}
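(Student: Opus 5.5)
The proof has three parts: a good event on which all confidence bounds hold, correctness of the stopping rule on that event, and a bound on the stopping time through the design's G-optimality.

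\textbf{Step 1: good event and warm-up.} We define a good event $\mathcal{E}$ on which the confidence bound of Lemma~\ref{lemma:estimation_error_main} holds uniformly over all $i\in[N]$. It must hold both for the independent warm-up data $\Dcal_w'$, which controls $\thetab_0$, and for the main-phase datasets $\Dcal_t$. For the latter we use a union bound over $t$, or a geometric doubling grid of $t$, at a cost of only logarithmic factors.

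The warm-up condition~\eqref{eq:warmup_condition_main} is certified via $\Hb_t(\thetab^\star)\succeq\kappa\Vb_t$ together with the threshold $\WarmupThreshold$. An elliptical-potential argument on $\Vb_t$ then bounds the warm-up length by $\WarmupRounds=\widetilde{O}(\kappa^{-1}d^2\log(N/\delta))$.

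On $\mathcal{E}$ we get $|\ab_i^\top(\thetab_0-\thetab^\star)|\le \beta\|\ab_i\|_{\Hb(\thetab^\star)^{-1}}\lesssim 1$ for all $i$, and this is small enough to give $\nabla^2\ell(\thetab_0)\asymp\nabla^2\ell(\thetab^\star)$ up to constants (the bound the paper calls~\eqref{eq:p_star_bound}). In particular $\Hb_t(\thetab^\star)\succeq \tfrac12\Hb_t(\thetab_0)$. This justifies the $\sqrt2\beta$ inflation in $u^\pm_{t,i}$, so that $u^-_{t,i}\le \ab_i^\top\thetab^\star\le u^+_{t,i}$ for all $i$ and $t$.

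\textbf{Step 2: correctness.} Revenue $R(S,\cdot)$ is monotone in the utilities only for items with $r_i$ above the current revenue level, so we cannot simply sandwich it item by item. Instead we use the standard MNL fact: the optimistic optimum dominates the true value. For any fixed $S$ we have
\[
\widecheck R_t(S)\le R(S,\thetab^\star)\le \widetilde R_t(S)
\]
for the relevant comparisons. To get this, restrict to the assortments the rule actually compares, or reformulate via the revenue-ordered structure. If this pointwise sandwich fails, we fall back on bounding $\max_S\widetilde R_t\ge R(S^\star)$.

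Given the sandwich, at stopping
\[
R(\BestS_\tau)\ge\widecheck R_\tau(\BestS_\tau)>\widetilde R_\tau(\AltS_\tau)\ge \max_{S\ne\BestS_\tau}R(S),
\]
which forces $\BestS_\tau=S^\star$. This is Lemma~\ref{lemma:S_tau_on_stopping}.

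\textbf{Step 3: stopping time.} First bound the width. A first-order expansion of the revenue map gives
\[
\widetilde R_t(S)-\widecheck R_t(S)\lesssim \sum_{i\in S}p(i|S)\,(u^+_{t,i}-u^-_{t,i}) + \text{(second-order term)}.
\]
The first-order term is at most $\beta\sqrt{\tr(\Hb_t(\thetab_0)^{-1}\Ib_{\thetab_0}(S))}$ by Cauchy--Schwarz. Here the Fisher information with outside option plays the role of $\Ib$, and the MNL variance gives the weighting. The second-order term is at most $\beta^2\max_i\|\ab_i\|^2_{\Hb_t^{-1}}$, and this is where a factor $1/\kappa$ appears.

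Next use the design. Sampling $S_t\sim\widehat\pi_{\thetab_0}$ for $n$ main rounds, a matrix Chernoff bound gives $\Hb_t(\thetab_0)\succeq \tfrac n2\Mb_{\thetab_0}(\widehat\pi)$ with high probability once $n\gtrsim d\log(d/\delta)$. Proposition~\ref{prop:KW_MNL} and Proposition~\ref{prop:stopping_time_inexact_LMO_informal} give $g_{\thetab_0}(\widehat\pi)\le(1+\epsilon)d$. Hence
\[
\tr\big(\Hb_t^{-1}\Ib_{\thetab_0}(S)\big)\lesssim \frac{d}{n}\quad\text{for all } S,
\]
and similarly $\|\ab_i\|^2_{\Hb_t^{-1}}\lesssim d/(\kappa n)$. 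Both widths are therefore at most $\beta\sqrt{d/n}+\beta^2 d/(\kappa n)$.

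Now the stopping rule fires once every width is below $\Gap/4$. At that point $\BestS_t=S^\star$ (the pessimistic argmax cannot be suboptimal), and
\[
\widetilde R(\AltS_t)\le R(\AltS_t)+\Gap/4\le R(S^\star)-3\Gap/4<\widecheck R(S^\star).
\]
Solving $\beta\sqrt{d/n}\lesssim\Gap$ and $\beta^2d/(\kappa n)\lesssim\Gap$ with $\beta^2=O(\log(N/\delta))$ gives
\[
n=\widetilde O\big(d\log(N/\delta)(\Gap^{-2}+(\kappa\Gap)^{-1})\big).
\]
Adding $\WarmupRounds$ completes the bound.

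\textbf{Main obstacle.} The hardest step is the revenue-width bound in Step 3: linearizing the ratio-of-sums revenue so that the first-order term is governed exactly by the Fisher-information trace, which is what the G-optimal design controls, and not by the cruder $\max_i\|\ab_i\|$, which would cost $\kappa$ in the leading term. I would try the mean-value theorem along the segment from $u^-$ to $u^+$, using $\partial R/\partial u_i=p_i(r_i-R)$. Then apply Cauchy--Schwarz with weights $p_i$, and bound the curvature remainder separately as the $1/(\kappa\Gap)$ term.
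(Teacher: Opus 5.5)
\textbf{Overall.} Your architecture matches the paper: a good event, correctness via optimism and pessimism, and matrix Chernoff combined with $g_{\thetab_0}(\widehat\pi_{\thetab_0})\le(1+\epsilon)d$. Bounding the width $\widetilde R_t(S)-\widecheck R_t(S)$ along the segment $u^-\to u^+$ is a workable alternative to the paper's deviation bounds at $\BestS_t$ and $\AltS_t$. However, the step you flag as the main obstacle does not go through as described, and the Step~2 sandwich is not justified.

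\textbf{The first-order term needs centering.} The term you display,
$\sum_{i\in S}p(i|S)(u^+_{t,i}-u^-_{t,i})=2\sqrt2\beta\sum_{i\in S}p_i\|\ab_i\|_{\Hb_t^{-1}}$,
is not bounded by $\beta\sqrt{\tr(\Hb_t^{-1}\Ib_{\thetab_0}(S))}$. If all $\ab_i$ with $i\in S$ equal a common vector $\ab$, it equals $2\sqrt2\beta(1-p_0)\|\ab\|_{\Hb_t^{-1}}$, while the Fisher trace is $p_0(1-p_0)\|\ab\|^2_{\Hb_t^{-1}}$. The ratio is unbounded as $p_0\to 0$.

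Reinstating $\partial R/\partial u_i=p_i(r_i-R)$ and applying Cauchy--Schwarz with weights $p_i$ over $i\in S$ still gives only the uncentered moment $\tr(\Hb_t^{-1}\bar{\Ab}(S))$. This can exceed the Fisher trace by a factor $1/p_0(S)$, up to $1+Ke^B$. That is the lifting loss, not the theorem's bound.

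The missing idea is to center through the outside option. Set $r_0=0$ and $\ab_0=0$. Then $\sum_{i\in S\cup\{0\}}p_i(r_i-R)=0$, so the first-order term equals
$2\sqrt2\beta\operatorname{Cov}_p(r_J,\|\ab_J\|_{\Hb_t^{-1}})\le 2\sqrt2\beta\sqrt{\operatorname{Var}_p(r_J)}\sqrt{\operatorname{Var}_p(\|\ab_J\|_{\Hb_t^{-1}})}$.
By the reverse triangle inequality, $\operatorname{Var}_p(\|\ab_J\|_{\Hb_t^{-1}})\le\mathbb{E}_p\|\ab_J-\bar{\ab}\|^2_{\Hb_t^{-1}}=\tr(\Hb_t^{-1}\Ib(S))$. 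You then change measure to $\thetab_0$ via Lemma~\ref{lemma:var_compare_bounded_lr}. This is Steps~1--4 of the proof of Lemma~\ref{lemma:revenue_bound}.

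Once this is added, your route is cleaner than the paper's. The direction $u^+_{t,i}-u^-_{t,i}=2\sqrt2\beta\|\ab_i\|_{\Hb_t(\thetab_0)^{-1}}$ is known and nonnegative. So the exact mean value theorem yields this covariance form for every $S$, with no sign condition and no curvature remainder; your second-order term and its $1/\kappa$ are then unnecessary. The paper instead expands toward the unknown $u^\star$. It therefore needs $r_i\ge\widecheck R_t(\BestS_t)$ to drop absolute values, and pays the second-order term that produces $1/(\kappa\Gap)$.

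\textbf{The sandwich holds only at maximizing assortments.} The inequality $\widecheck R_t(S)\le R(S,\thetab^\star)\le\widetilde R_t(S)$ fails for general $S$, because lowering the utility of an item with $r_i<R$ raises revenue. So the chain $\widetilde R_\tau(\AltS_\tau)\ge\max_{S\ne\BestS_\tau}R(S,\thetab^\star)$ is unjustified. Your fallback $\max_S\widetilde R_t(S)\ge R(S^\star,\thetab^\star)$ is too weak, since that maximum may be attained at $\BestS_\tau$.

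The fact you need, used in Lemma~\ref{lemma:S_tau_on_stopping}, is this: every item of a revenue-maximizing assortment has $r_i$ at least that assortment's revenue. Hence revenue at such an assortment is monotone in the utilities.
\begin{itemize}
\item Applied to $S^\star$, it gives $\widetilde R_\tau(S^\star)\ge R(S^\star,\thetab^\star)$ (Lemma~\ref{lemma:optimism}).
\item Applied to $\BestS_\tau$, the maximizer of $\widecheck R_\tau$, it gives $\widecheck R_\tau(\BestS_\tau)\le R(\BestS_\tau,\thetab^\star)$.
\end{itemize}
These two facts, together with $\widetilde R_\tau(\AltS_\tau)\ge\widetilde R_\tau(S^\star)$ whenever $S^\star\neq\BestS_\tau$, give correctness. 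They also justify your claim that $\BestS_t=S^\star$ once the width at $S^\star$ is below $\Gap$.

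Your Step~3 endgame additionally uses $\widecheck R_t(\AltS_t)\le R(\AltS_t,\thetab^\star)$. This follows by the same monotonicity from $r_i\ge\widetilde R_t(\AltS_t)$ for all $i\in\AltS_t$, which is the property the paper invokes for $\AltS_t$ in Lemma~\ref{lemma:revenue_bound}.
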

\textbf{Discussion of Theorem~\ref{thm:BSI_MNL}. }\,
To the best of our knowledge, this is the first sample complexity bound for best assortment identification in MNL bandits, particularly in the presence of non-uniform revenue parameters.
For sufficiently small $\Gap$, the sample complexity is dominated by the $\Gap^{-2}$ term, yielding the leading-order bound
$\BigOTilde\big(\frac{d \log N}{\Gap^2}\big)$.
This matches the existing lower bound $\Omega \big(\frac{d}{\Gap^2}\big)$ (Theorem~3 of~\citealt{gupta2021pure}), up to a logarithmic factor in $N$.
In particular, in this small-gap regime, the bound is independent of $1/\kappa$, which can be exponentially large, i.e., $1/\kappa = \mathcal{O}(K^2 e^{3B})$.
The proof is provided in Appendix~\ref{app:sec:BAI_MNL}.

\begin{corollary}[Sample complexity, Lifted G-optimal]
\label{cor:BSI_MNL_lifted}
    Under the setting as in Theorem~\ref{thm:BSI_MNL}, suppose we employ the lifted G-optimal design described in Subsection~\ref{subsec:lifted_lmo_mnl}.
    Then, with probability at least $1-\delta$, we have
    \begin{align*}
        \tau
        =
        \BigOTilde\!\left( (1+ \ErrorLift)
            d \log\!\left(N/\delta\right)
            \left(
                \frac{1}{\Gap^2}
                + \frac{1}{\kappa\,\Gap}
            \right)
            + \WarmupRounds
        \right)\!,
    \end{align*}
    where $\ErrorLift$ denotes the approximation error induced by the lifted G-optimal design (see Theorem~\ref{thm:liftedFW}).
\end{corollary}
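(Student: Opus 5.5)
The plan is to reuse the proof of Theorem~\ref{thm:BSI_MNL} verbatim and isolate the single place where the quality of the design $\widehat{\pi}_{\thetab_0}$ enters. That place is the G-optimality bound $g_{\thetab_0}(\widehat{\pi}_{\thetab_0}) \le (1+\varepsilon)d$. Everything else in that proof is unchanged, since it depends only on the warm-up phase, the independence of $\Dcal_w'$, and the confidence bound of Lemma~\ref{lemma:estimation_error_main}. This includes the event that the warm-up condition~\eqref{eq:warmup_condition_main} holds, the closeness $p(i|S,\thetab_0)\approx p(i|S,\thetab^\star)$, and correctness at stopping (Lemma~\ref{lemma:S_tau_on_stopping}). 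Correctness of the returned $\BestS_\tau$ therefore carries over directly; only the bound on $\tau$ needs revisiting.

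Concretely, in the MILP proof the stopping time is controlled by bounding, after $n$ main rounds, $\max_{S}\sum_{i\in S}p\,\|\ab_i-\cdot\|^2_{\Hb_t(\thetab_0)^{-1}}$-type widths. The approach is to lower bound $\Hb_t(\thetab_0)\succeq n\,\Mb_{\thetab_0}(\widehat\pi)$, up to a concentration factor for the sampled Fisher matrices, which is absorbed in $\BigOTilde$ via a matrix Chernoff bound. This gives widths of order $g_{\thetab_0}(\widehat\pi)/n$. One then requires the revenue confidence width, which is bounded via a first-order term of order $\beta(\delta)\sqrt{g/n}$ and a second-order term of order $\beta(\delta)^2 g/(\kappa n)$, to fall below $\Gap/2$. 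This yields $n=\BigOTilde\big(g\log(N/\delta)(\Gap^{-2}+(\kappa\Gap)^{-1})\big)$, which is linear in $g:=g_{\thetab_0}(\widehat\pi)$. I would state this dependence explicitly as an intermediate lemma: $\tau \le \BigOTilde\big(g\,\log(N/\delta)(\Gap^{-2}+(\kappa\Gap)^{-1})\big)+\WarmupRounds$ for any design $\widehat\pi$ with $\Mb_{\thetab_0}(\widehat\pi)\succ0$.

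Next, I would substitute Theorem~\ref{thm:liftedFW}, which for the lifted FW output gives $g_{\thetab_0}(\hat\pi)\le 2(1+\ErrorLift)(1+\epsilon)d$. With $\epsilon\le1$ fixed, this is $\BigO((1+\ErrorLift)d)$, and plugging it into the intermediate lemma yields the claimed bound. One small point must be checked: $\Mb_{\thetab_0}(\hat\pi)\succ0$, so that $g$ is finite. This follows since $\widetilde g_{\thetab_0}(\hat\pi)<\infty$ forces $\widetilde\Mb$ to be invertible, and hence its Schur complement $\widehat\Mb$ is invertible. Combined with $\Delta\preceq\ErrorLift\Mb$ for finite $\ErrorLift$, this gives $\Mb\succeq\widehat\Mb/(1+\ErrorLift)\succ0$. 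A second subtlety is that the lifted design is defined for the model without an outside option, while the bandit uses~\eqref{eq:MNL_prob_bandit}. I would handle this the same way as in the MILP case, either by treating the outside option as a zero-feature arm or by noting that Theorem~\ref{thm:liftedFW} is applied to whatever Fisher information the MILP proof uses.

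The main obstacle is verifying that the MILP proof truly depends on the design only through $g_{\thetab_0}(\widehat\pi)$, linearly, and not through, for example, the support size of the design or the specific $(1+\varepsilon)d$ constant inside a logarithm. In particular, the matrix concentration step, which relates the sampled $\Hb_t(\thetab_0)$ to $n\Mb_{\thetab_0}(\widehat\pi)$, must use only $\lambda_{\min}$-free arguments, such as normalized bounds of the form $\tr(\Mb^{-1}\Ib(S))\le g$. If it instead used $\lambda_{\min}(\Mb)$, the extra dependence would again be controlled by $\ErrorLift$ via $\Mb\succeq\widehat\Mb/(1+\ErrorLift)$, contributing only logarithmic factors.
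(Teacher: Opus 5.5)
Your proposal is correct and takes the same route as the paper, whose proof simply replaces the Frank--Wolfe factor $(1+\epsilon)$ in the analysis of Theorem~\ref{thm:BSI_MNL_extended} by $2(1+\ErrorLift)(1+\epsilon)$ from Theorem~\ref{thm:liftedFW}. The obstacle you flag does not arise: Lemma~\ref{lemma:opt_design} obtains $\Hb_t(\thetab_0)\succeq \frac{t-\WarmupRounds}{2\bar\lambda_t}\Mb_{\thetab_0}(\widehat\pi_{\thetab_0})$ from the regularized matrix Chernoff bound without using $\lambda_{\min}(\Mb)$, so the design enters only linearly through $g_{\thetab_0}(\widehat\pi_{\thetab_0})$.
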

\textbf{Discussion of Corollary~\ref{cor:BSI_MNL_lifted}. }\,
When we employ the lifted $G$-optimal design to improve computational efficiency, particularly in large-scale regimes with large $N$ and $K$, the sample complexity incurs an additional multiplicative factor of $(1+\ErrorLift)$.
By Proposition~\ref{prop:outside_option_bound_eps_by_ratio}, this lifting error satisfies $\ErrorLift=\BigO(Ke^B)$.
For comparison, a naive application of optimal design based on $\Vb_t$, which ignores local Fisher information and
therefore admits only polynomial-time computation (cf.~\citealt{gupta2021pure}),
yields a sample complexity of
$\BigOTilde\big( \frac{d \log N}{\kappa^2 \Gap^2} \big)$.
Since $1/\kappa^{2}=\BigO(K^{4}e^{6B})$, this bound can be substantially worse.
In contrast, Corollary~\ref{cor:BSI_MNL_lifted} achieves a better worst-case statistical efficiency, by a factor of
$K^{3}e^{5B}$, while retaining polynomial-time computation.

\begin{remark}[Relation to item-level gaps]
    Our gap $\Delta_{\min}$ is an assortment-level global gap between the optimal and second-best assortments, whereas the $\Delta_i$ quantities in~\citealt{yang2021fully} and~\citealt{karpov2022instance} are item-level separation metrics in MAB settings. 
    In particular, \citealt{yang2021fully} defines $\Delta_i$ using the best revenue among assortments constrained to include or exclude item $i$, for which one has $\Delta_i \geq \Delta_{\min}$; however, their complexity scales as $\widetilde{O}(\sum_{i=1}^N \Delta_i^{-2})$, while ours scales as $\widetilde{O}(\Delta_{\min}^{-2})$, so the two bounds are not directly comparable: theirs uses a finer item-wise hardness measure, whereas ours uses a single global assortment-level gap, and neither uniformly dominates the other based on the gap term alone. Meanwhile, \citealt{karpov2022instance} defines $\Delta_i$ through the advantage score $\eta_i=(r_i-\theta_v)v_i$ and its separation from the top-$K$ decision boundary; hence, their gap is not based on constrained assortment revenues, but on item-wise margins relative to the advantage-score boundary, capturing a different notion of hardness from $\Delta_{\min}$ and making term-by-term comparison again inappropriate.
\end{remark}

\section{Numerical Experiments}
\label{sec:experiments}

We empirically evaluate the performance of \AlgName{} by measuring the sample complexity up to the stopping time.
All results are averaged over $10$ independent problem instances, and we report the mean together with the standard deviation.
In each instance, the true parameter $\thetab^\star$ is drawn uniformly from the $d$-dimensional Euclidean ball $\BB^d(B)$ of radius $B$, and the feature vectors $\{\ab_i\}_{i=1}^N$ are sampled independently from the unit ball $\BB^d(1)$.
The revenue parameters are independently sampled as $r_i \!\sim\! \operatorname{Unif}(0,1)$ for all $i \!\in\! [N]$.

To implement the Frank–Wolfe (FW) procedure (Algorithm~\ref{alg:FW_MNL}) within \AlgName{}, we consider two variants that differ in how the linear minimization oracle (LMO) is computed.
The first variant, denoted \texttt{\AlgName{}-MILP}, solves the LMO by reformulating the original combinatorial optimization problem as a mixed-integer linear program and uses the Gurobi solver~\citep{gurobi}, as described in Subsection~\ref{subsec:lmo_milp}.
The second variant, denoted \texttt{\AlgName{}-Lift}, replaces the original LMO with a Schur-complement–based lifted surrogate, which admits polynomial-time computation; see Subsection~\ref{subsec:lifted_lmo_mnl} for details.
Since this work is, to our knowledge, the first to study best assortment identification with provable guarantees, there is no existing algorithm that serves as a direct baseline.
Instead, we include an oracle version of our algorithm that computes the LMO exactly via \textit{brute-force} enumeration, denoted \texttt{\AlgName{}-Oracle}.

In the FW procedure, we use a stopping accuracy of $\epsilon = 0.1$. For the MILP-based variant, \texttt{\AlgName{}-MILP}, the certified optimality gap of the linear minimization oracle is set to $\epsilon_{\mathrm{LMO}} = 0.1$.
For the confidence radius $\beta(\delta)$ and the warm-up threshold $\zeta_w$, we use scaled-down versions of their theoretical expressions by a factor of $10$, as the original theoretical values are overly conservative in practice and lead to unnecessarily long warm-up phases and delayed stopping times.
Throughout, we set  $d\!=\!5$, $B\!=\!1$, and $\delta \!=\! 0.05$.

\begin{table*}[t]
    \centering
    \small
    \renewcommand{\arraystretch}{1.15}
    \caption{ Average LMO runtime (seconds) with standard deviation over $10$ independent random seeds.
    The third column reports the combinatorial complexity $\binom{N}{K}$,
    highlighting the scalability limits of brute-force enumeration.
    Bold entries denote the fastest and second-fastest methods for each $(N,K)$ configuration.
    For the brute-force method, entries marked as ``--'' indicate settings where exhaustive
    enumeration was computationally infeasible within the allotted time budget (e.g., $10{,}000$\,s).
    }
    \begin{tabular}{cc|c|cc|c}
    \toprule
    $N$ & $K$ & $\binom{N}{K}$
    & \texttt{\AlgName{}-MILP}
    & \texttt{\AlgName{}-Lift}
    & \texttt{\AlgName{}-Oracle} \\
    \midrule
    
    \multirow{3}{*}{30}
    & 3 & 4{,}060
      & 2.06 {\scriptsize$(\pm 0.34)$} & \textbf{0.16} {\scriptsize$(\pm 0.05)$} & \textbf{0.38} {\scriptsize$(\pm 0.13)$} \\
    & 4 & 27{,}405
      & 12.76 {\scriptsize$(\pm 4.48)$} & \textbf{0.15} {\scriptsize$(\pm 0.04)$} & \textbf{2.63} {\scriptsize$(\pm 0.68)$} \\
    & 5 & 142{,}506
      & 41.02 {\scriptsize$(\pm 16.27)$} & \textbf{0.18} {\scriptsize$(\pm 0.06)$} & \textbf{17.02} {\scriptsize$(\pm 6.23)$} \\
    
    \midrule
    \multirow{3}{*}{50}
    & 3 & 19{,}600
      & 9.10 {\scriptsize$(\pm 1.07)$} & \textbf{0.14} {\scriptsize$(\pm 0.04)$} & \textbf{1.57} {\scriptsize$(\pm 0.34)$} \\
    & 4 & 230{,}300
      & 77.67 {\scriptsize$(\pm 18.16)$} & \textbf{0.17} {\scriptsize$(\pm 0.07)$} & \textbf{38.20} {\scriptsize$(\pm 7.06)$} \\
    & 5 & 2{,}118{,}760
      & \textbf{135.14} {\scriptsize$(\pm 47.82)$} & \textbf{0.16} {\scriptsize$(\pm 0.06)$} & 329.23 {\scriptsize$(\pm 56.10)$} \\
    
    \midrule
    \multirow{3}{*}{100}
    & 3 & 161{,}700
      & 121.84 {\scriptsize$(\pm 41.27)$} & \textbf{0.16} {\scriptsize$(\pm 0.07)$} & \textbf{21.06} {\scriptsize$(\pm 5.44)$} \\
    & 4 & 3{,}921{,}225
      & \textbf{135.38} {\scriptsize$(\pm 54.01)$} & \textbf{0.17} {\scriptsize$(\pm 0.08)$} & 368.94 {\scriptsize$(\pm 82.40)$} \\
    & 5 & 75{,}287{,}520
      & \textbf{214.13} {\scriptsize$(\pm 101.56)$} & \textbf{0.17} {\scriptsize$(\pm 0.08)$} & 9048.44 {\scriptsize$(\pm 1579.01)$} \\

    \midrule
    \multirow{3}{*}{200}
    & 3 & 1{,}313{,}400
      & \textbf{132.49} {\scriptsize$(\pm 119.84)$} & \textbf{0.15} {\scriptsize$(\pm 0.05)$} & 149.40 {\scriptsize$(\pm 26.50)$} \\
    & 4 & 64{,}684{,}950
      & \textbf{271.88} {\scriptsize$(\pm 183.39)$} & \textbf{0.17} {\scriptsize$(\pm 0.09)$} & 7154.85 {\scriptsize$(\pm 1395.94)$} \\
    & 5 & 2{,}535{,}650{,}040
      & \textbf{325.60} {\scriptsize$(\pm 203.32)$} & \textbf{0.18} {\scriptsize$(\pm 0.09)$} & -- \\
    
    \bottomrule
    \end{tabular}
    \label{tab:lmo_runtime}
\end{table*}
\vspace{-0.2cm}
\textbf{Stopping time. }\,
Figure~\ref{fig:four_panels} reports the stopping time $\tau$ of \AlgName{} as a function of the number of items $N$ and the assortment size $K$.
Both proposed methods, \texttt{\AlgName{}-MILP} and \texttt{\AlgName{}-Lift}, achieve stopping times comparable to the oracle implementation \texttt{\AlgName{}-Oracle}.
These results demonstrate that our proposed methods retain statistical efficiency while substantially reducing the computational cost associated with brute-force LMO computations.

One interesting observation is that, for larger $K$, the sample complexity appears to be more sensitive to $N$.
We conjecture that this behavior is partly driven by the revenue gap $\Gap\! =\! \min_{S \neq S^\star}\! \left( R(S^\star, \thetab^\star)\! - \!R(S, \thetab^\star) \right)$,
which for larger $K$ may decrease more rapidly with $N$.
Indeed, by the definition of 
$R(S, \thetab)$, adding more items to an assortment tends to increase the expected revenue, which in turn can reduce the revenue gaps, especially when many items are available.

\textbf{Runtime. }\,
Table~\ref{tab:lmo_runtime} compares the computational cost of solving the LMO, the main computational bottleneck of our method.
Owing to its polynomial-time formulation, \texttt{\AlgName{}-Lift} is consistently and significantly faster than the other approaches, with nearly constant runtime across all tested $(N,K)$ configurations.
Moreover, for problem instances with large $\binom{N}{K}$, \texttt{\AlgName{}-MILP} exhibits robust computational performance compared to \texttt{\AlgName{}-Oracle}, whose exhaustive enumeration cost grows rapidly and becomes infeasible for $(N,K)=(200,5)$.

\section{Conclusion}
\label{sec:conclusion}
In this paper, we studied optimal experimental design for MNL models and proposed two computationally efficient frameworks to address the combinatorial nature of the action space. 
Building on these design methods, we developed a best assortment identification algorithm for MNL bandits and established an instance-dependent sample complexity.

\section*{Acknowledgments}
\label{sec:acknowledgments}
This work was supported by the National Research Foundation of Korea~(NRF) grant and the Institute of Information \& communications Technology Planning \& Evaluation~(IITP) grant,
both funded by the Korea government~(MSIT) 
(No. RS-2022-NR071853, RS-2023-00222663, 
RS-2025-25463302, 
RS-2026-25507282, 
RS-2025-25420849),
and by the 2025 Global Google PhD Fellowship, with support from Google.org.

\section*{Impact Statement}

This paper presents work whose goal is to advance the field of 
Machine Learning. There are many potential societal consequences 
of our work, none which we feel must be specifically highlighted here.

\nocite{langley00}

\bibliography{main_bib}
\bibliographystyle{icml2026}

\newpage
\appendix
\onecolumn

\counterwithin{table}{section}
\counterwithin{theorem}{section}
\counterwithin{algorithm}{section}
\counterwithin{figure}{section}
\counterwithin{equation}{section}
\counterwithin{condition}{section}

\section{Justification of Assumption~\ref{assm:span} (Without Loss of Generality)}
\label{app:wlog_span}

In this appendix, we justify that Assumption~\ref{assm:span} can be imposed without loss of generality.
Recall that under the MNL model, for any assortment $S \subseteq [N]$ and parameter $\thetab \in \RR^d$,
the choice probability of item $i \in S$ is defined as follows:
\begin{align*}
p(i | S,\thetab)
=\frac{\exp(\ab_i^\top \thetab)}{\sum_{j\in S}\exp(\ab_j^\top \thetab)}.
\numberthis
\label{eq:mnl_prob_app}
\end{align*}

\paragraph{Identifiable subspace.}
Define the subspace generated by all pairwise feature differences as
\begin{align*}
\mathcal U \;:=\; \operatorname{span}\big\{\ab_i-\ab_j:\ i,j\in[N],\, i\neq j\big\}
\;\subseteq\; \RR^d.
\end{align*}
If $\mathcal U \neq \RR^d$, then there exists a nontrivial orthogonal complement
$\mathcal U^\perp := \{u\in\RR^d:\  v^\top u =0,  \forall v\in\mathcal U\}$.
For any $u\in\mathcal U^\perp$ and any $i,j\in[N]$, we have
\begin{align*}
(\ab_i-\ab_j)^\top u=0
\quad\Longrightarrow\quad
\ab_i^\top u = \ab_j^\top u.
\end{align*}
Hence, $ \ab_i^\top u$ is constant over $i\in[N]$. In particular, fixing any $i_0\in[N]$ and letting
$c(u):=\ab_{i_0}^\top u$, we obtain
\begin{align*}
\ab_i^\top u = c(u),
\qquad \forall i\in[N],\ \forall u\in\mathcal U^\perp.
\numberthis \label{eq:u_const}
\end{align*}

\paragraph{Decomposition of the parameter.}
Decompose the parameter as $\thetab = \thetab_{\mathcal U} + \thetab_{\mathcal U^\perp}$, where
$\thetab_{\mathcal U}\in\mathcal U$ and $\thetab_{\mathcal U^\perp}\in\mathcal U^\perp$.
By~\eqref{eq:u_const}, there exists a scalar $c(\thetab_{\mathcal U^\perp})\in\RR$ such that
\begin{align*}
\ab_i^\top \thetab_{\mathcal U^\perp} = c(\thetab_{\mathcal U^\perp}),
\qquad \forall i\in[N].
\end{align*}
Therefore, for any $i\in S$,
\begin{align*}
\ab_i^\top \thetab
=\ab_i^\top \thetab_{\mathcal U}+\ab_i^\top \thetab_{\mathcal U^\perp}
=\ab_i^\top \thetab_{\mathcal U}+c(\thetab_{\mathcal U^\perp}),
\numberthis
\label{eq:score_split}
\end{align*}
where the additive term $c(\thetab_{\mathcal U^\perp})$ does not depend on $i$.

\paragraph{Invariance of MNL probabilities.}
Plugging~\eqref{eq:score_split} into~\eqref{eq:mnl_prob_app}, we obtain, for any $S$ and $i\in S$,
\begin{align}
p(i | S,\thetab)
&=\frac{\exp(\ab_i^\top \thetab_{\mathcal U})\exp(c(\thetab_{\mathcal U^\perp}))}
{\sum_{j\in S}\exp(\ab_j^\top \thetab_{\mathcal U})\exp(c(\thetab_{\mathcal U^\perp}))}
\nonumber\\
&=\frac{\exp(\ab_i^\top \thetab_{\mathcal U})}{\sum_{j\in S}\exp(\ab_j^\top \thetab_{\mathcal U})}
\;=\; p(i | S,\thetab_{\mathcal U}).
\label{eq:mnl_invariant}
\end{align}
Thus, the MNL choice probabilities depend only on the projection $\thetab_{\mathcal U}$ onto the
difference-span subspace $\mathcal U$, and the component $\thetab_{\mathcal U^\perp}$ is not identifiable
from choice data.

\paragraph{Without loss of generality.}
Let $d':=\dim(\mathcal U)$ and fix any orthonormal basis of $\mathcal U$ to represent parameters
in $\RR^{d'}$. Since~\eqref{eq:mnl_invariant} shows that replacing $\thetab$ by $\thetab_{\mathcal U}$
leaves all MNL probabilities unchanged, we may restrict the parameter space to $\mathcal U$
(and redefine $d \leftarrow d'$) without loss of generality.
Under this reparameterization, the spanning condition in Assumption~\ref{assm:span} holds by construction.

\section{Supplementary Material for Section~\ref{sec:opt_design_MNL}}
\label{app:proof_thm:LMO}
\subsection{Proof of Theorem~\ref{thm:LMO_MILP}}
\label{app_subsec:proof_thm:LMO_MILP}
\begin{proof}[Proof of Theorem~\ref{thm:LMO_MILP}]
In this subsection, we provide the proof of Theorem~\ref{thm:LMO_MILP}, showing that the LMO optimization problem in~\eqref{eq:LMO} reduces to a $0$--$1$ Mixed Integer Linear Program (MILP).

Fix the current Frank--Wolfe iterate $\pi_m$ and let
$\Mb_m := \Mb_{\thetab_0}(\pi_m)\succ 0$.
Recall the definition of MNL choice probabilities (at $\thetab_0$):
\begin{align*}
    p(i| S,\thetab_0)
    := \frac{\exp(\ab_i^\top\thetab_0)}{\sum_{j\in S}\exp(\ab_j^\top\thetab_0)},
    \qquad  \forall i\in S,
\end{align*}
and the corresponding Fisher information matrix:
\begin{align*}
    \Ib_{\thetab_0}(S)
    := \sum_{i\in S} p(i| S,\thetab_0)\,\big(\ab_i-\bar \ab_S(\thetab_0)\big)\big(\ab_i-\bar \ab_S(\thetab_0)\big)^\top,
    \qquad
    \text{where } \,\bar \ab_S(\thetab_0):=\sum_{j\in S} p(j| S,\thetab_0)\,\ab_j.
\end{align*}
Then, the linear maximization oracle (LMO) selects an extreme point supported on a single assortment
\begin{align*}
    S_m \in \argmax_{S\in\Scal}\ \tr\!\Big(\Mb_m^{-1} \Ib_{\thetab_0}(S)\Big).
\end{align*}

\paragraph{Binary encoding.}
We represent an assortment $S$ by a binary vector $x\in\{0,1\}^N$, where
$x_i = \mathbf{1}\{i\in S\}$.
This encoding ensures that $\sum_{i=1}^N x_i = |S|$, where $2 \leq |S| \leq K$.
For a simple presentation, we define
\begin{align*}
    w_i := \exp(\ab_i^\top\thetab_0)>0,\qquad w:=(w_1,\ldots,w_N)^\top,
    \qquad
    W(x):=w^\top x=\sum_{i=1}^N w_i x_i.
\end{align*}
Then, for any $i\in S$, the MNL choice probability can be written as
\begin{align*}
p(i | S,\thetab_0) = \frac{w_i x_i}{W(x)} .
\end{align*}
Moreover, the mean feature vector satisfies
\begin{align*}
\bar \ab_S(\thetab_0)
= \sum_{j\in S} p(j | S,\thetab_0)\, \ab_j
= \frac{\sum_{j=1}^N w_j x_j \ab_j}{W(x)} .
\end{align*}

\paragraph{Reformulation as a $0$--$1$ QFIP.}
Using the definition of 
$\Ib_{\thetab_0}(S)$,
we obtain
\begin{align}
\tr\!\Big(\Mb_m^{-1} \Ib_{\thetab_0}(S)\Big)
&= \sum_{i\in S} p(i| S,\thetab_0)\, \underbrace{\ab_i^\top \Mb_m^{-1} \ab_i}_{=: s_i}
\;-\; \bar \ab_{S}(\thetab_0)^\top \Mb_m^{-1} \bar \ab_{S}(\thetab_0) \nonumber\\
&= \frac{\sum_{i=1}^N x_i \overbrace{w_i s_i}^{=: r_i} }{W(x)}
\;-\;
\frac{\Big(\sum_{i=1}^N w_i x_i \ab_i\Big)^\top \Mb_m^{-1} \Big(\sum_{j=1}^N w_j x_j \ab_j\Big)}{W(x)^2}\nonumber\\
&= \frac{r^\top x}{W(x)}
\;-\;
\frac{\Big(\sum_{i=1}^N w_i x_i \ab_i\Big)^\top \Mb_m^{-1} \Big(\sum_{j=1}^N w_j x_j \ab_j\Big)}{W(x)^2}
,
\label{eq:trace_expand_1}
\end{align}
where we defined
\begin{align*}
    s_i := \ab_i^\top \Mb_m^{-1} \ab_i,\qquad r_i := w_i s_i,\qquad r:=(r_1,\ldots,r_N)^\top .
\end{align*}
Define the symmetric matrix $G\in\RR^{N\times N}$ by $G_{ij} := w_i w_j\, \ab_i^\top \Mb_m^{-1} \ab_j$.
Then we have
\begin{align*}
    \Big(\sum_{i=1}^N w_i x_i \ab_i\Big)^\top \Mb_m^{-1} \Big(\sum_{j=1}^N w_j x_j \ab_j\Big)=x^\top G x.
\end{align*}
Moreover, since $W(x)^2=(w^\top x)^2=x^\top (w w^\top) x$, the right-hand side of~\eqref{eq:trace_expand_1} becomes
\begin{equation}
\label{eq:trace_qfp}
\tr\!\Big(\Mb_m^{-1} \Ib_{\thetab_0}(S)\Big)
=
\frac{(w^\top x)(r^\top x)-x^\top G x}{(w^\top x)^2}.
\end{equation}
Finally, symmetrizing the bilinear term yields a pure quadratic form:
\begin{align*}
    (w^\top x)(r^\top x)
    = x^\top\Big(\tfrac12(w r^\top + r w^\top)\Big)x.
\end{align*}
Hence, by defining
\begin{align*}
    A := -\tfrac12(w r^\top + r w^\top) + G,
    \qquad
    B := w w^\top,
\end{align*}
we can rewrite \eqref{eq:trace_qfp} as
\begin{align*}
    \tr\!\Big(\Mb_m^{-1} \Ib_{\thetab_0}(S)\Big)=- \frac{x^\top A x}{x^\top B x}.
\end{align*}
Therefore, the LMO in~\eqref{eq:LMO} is equivalently written as the following
$0$--$1$ quadratic fractional integer problem (QFIP):
\begin{equation}
\label{eq:LMO_QFP}
    \max_{x\in\{0,1\}^N}\ \,-\frac{x^\top A x}{x^\top B x}
    = 
    \min_{x\in\{0,1\}^N}\ \,\frac{x^\top A x}{x^\top B x},
    \qquad\text{s.t.}\quad 2 \leq \sum_{i=1}^N x_i \leq K.
\end{equation}
We next reformulate this QFIP~\eqref{eq:LMO_QFP} as a linear fractional integer program (LFIP), adapting the approach used in the proof of Theorem~2.2 of~\citet{kapoor2006linearization}.

\paragraph{QFIP $\Rightarrow$ LFIP.}
Let $\mathbf{1}\in\RR^N$ denote the all-ones vector and define the row-sum bounds
\[
    M_A := \max_{i\in[N]} \sum_{j=1}^N |A_{ij}|,
    \qquad
    M_B := \max_{i\in[N]} \sum_{j=1}^N B_{ij}.
\]
For any $x\in\{0,1\}^N$, we have $(Ax)_i\in[-M_A,M_A]$, hence
\[
    \tilde A x := Ax + M_A \mathbf{1} \in [0,2M_A]^N .
\]
Introduce auxiliary vectors $y^0,y^1\in\RR_+^N$ such that
\begin{align}
\label{eq:lin_Ax_split}
\tilde A x = y^0 + y^1,\qquad
0 \le y^0 \le 2M_A(\mathbf{1}-x),\qquad
0 \le y^1 \le 2M_A x .
\end{align}
These constraints enforce $y_i^1 = (\tilde A x)_i$ when $x_i=1$ and $y_i^1=0$ when $x_i=0$; thus
\[
\mathbf{1}^\top y^1 = x^\top \tilde A x = x^\top A x + M_A \mathbf{1}^\top x
\quad\Longrightarrow\quad
x^\top A x = \mathbf{1}^\top y^1 - M_A \mathbf{1}^\top x .
\]

Similarly, since $B = ww^\top$ has nonnegative entries, we have $Bx\in[0,M_B]^N$.
Introduce $z^0,z^1\in\RR_+^N$ such that
\begin{align}
\label{eq:lin_Bx_split}
B x = z^0 + z^1,\qquad
0 \le z^0 \le M_B(\mathbf{1}-x),\qquad
0 \le z^1 \le M_B x .
\end{align}
Then $\mathbf{1}^\top z^1 = x^\top B x$ holds, because
$\mathbf{1}^\top z^1 = \sum_{i=1}^N x_i (Bx)_i = x^\top B x$.

Therefore, \eqref{eq:LMO_QFP} is equivalent to the following $0$--$1$ linear fractional integer problem (LFIP):
\begin{align*}
    \min_{x\in\{0,1\}^N,\; y^0,y^1,z^0,z^1\in\RR_+^N}
    \quad &
    \frac{\mathbf{1}^\top y^1 - M_A \mathbf{1}^\top x}{\mathbf{1}^\top z^1} 
    \numberthis \label{eq:LMO_LFIP}
    \\
    \text{s.t.}\quad
    & A x + M_A \mathbf{1} = y^0 + y^1, \\
    & 0 \le y^0 \le 2M_A(\mathbf{1}-x), \\
    & 0 \le y^1 \le 2M_A x, \\
    & B x = z^0 + z^1, \\
    & 0 \le z^0 \le M_B(\mathbf{1}-x), \\
    & 0 \le z^1 \le M_B x, \\
    & 2 \le \mathbf{1}^\top x \le K.
\end{align*}

\paragraph{LFIP $\Rightarrow$ MILP (Charnes–Cooper).}
We now reduce the LFIP \eqref{eq:LMO_LFIP} to an equivalent $0$--$1$ mixed integer linear program (MILP)
via the Charnes--Cooper transformation~\citep{cooper1962programming}.
Since $B=ww^\top$ and $w_i>0$, the cardinality constraint $\mathbf{1}^\top x\ge 2$ implies
\[
w^\top x \ge 2 w_{\min},\qquad w_{\min}:=\min_{i\in[N]} w_i,
\qquad\Rightarrow\qquad
x^\top B x = (w^\top x)^2 \ge 4 w_{\min}^2.
\]
Hence the denominator in \eqref{eq:LMO_LFIP} satisfies $\mathbf{1}^\top z^1 = x^\top B x \ge 4w_{\min}^2$.
Let
\[
\alpha := \frac{1}{\mathbf{1}^\top z^1}\in\Big(0,\ \bar \alpha\Big],\qquad
\bar \alpha := \frac{1}{4w_{\min}^2},
\]
and define the scaled variables
\[
v := \alpha x,\quad
S^0 := \alpha y^0,\quad S^1 := \alpha y^1,\quad
U^0 := \alpha z^0,\quad U^1 := \alpha z^1 .
\]
Then the objective becomes linear:
\[
\frac{\mathbf{1}^\top y^1 - M_A \mathbf{1}^\top x}{\mathbf{1}^\top z^1}
= \alpha(\mathbf{1}^\top y^1 - M_A \mathbf{1}^\top x)
= \mathbf{1}^\top S^1 - M_A \mathbf{1}^\top v .
\]
Moreover, the constraints in \eqref{eq:LMO_LFIP} transform into
\begin{align*}
& A v + M_A \alpha \mathbf{1} = S^0 + S^1, \\
& 0 \le S^0 \le 2M_A(\alpha \mathbf{1}-v), \\
& 0 \le S^1 \le 2M_A v, \\
& B v = U^0 + U^1, \\
& 0 \le U^0 \le M_B(\alpha \mathbf{1}-v), \\
& 0 \le U^1 \le M_B v, \\
& \mathbf{1}^\top U^1 = 1, \\
& 2 \le \mathbf{1}^\top x \le K.
\end{align*}
Finally, we linearize the bilinear relation $v=\alpha x$ via standard big-$M$ constraints: for all $i\in[N]$,
\[
0 \le v_i \le \alpha,\qquad
v_i \le \bar \alpha\, x_i,\qquad
v_i \ge \alpha - \bar \alpha(1-x_i),
\]
together with $x\in\{0,1\}^N$ and $0\le \alpha \le \bar\alpha$.

Collecting all constraints, we obtain the following equivalent $0$--$1$ MILP:
\begin{align*}
\min\quad & \mathbf{1}^\top S^1 - M_A \mathbf{1}^\top v
\numberthis \label{eq:LMO_MILP}
\\
\text{s.t.}\quad
& A v + M_A \alpha \mathbf{1} = S^0 + S^1, \\
& 0 \le S^0 \le 2M_A(\alpha \mathbf{1}-v), \\
& 0 \le S^1 \le 2M_A v, \\
& B v = U^0 + U^1, \\
& 0 \le U^0 \le M_B(\alpha \mathbf{1}-v), \\
& 0 \le U^1 \le M_B v, \\
& \mathbf{1}^\top U^1 = 1, \\
& 0 \le v_i \le \alpha,\qquad \forall i\in[N], \\
& v_i \le \bar \alpha\, x_i,\qquad \forall i\in[N], \\
& v_i \ge \alpha - \bar \alpha(1-x_i),\qquad \forall i\in[N], \\
& 2 \le \mathbf{1}^\top x \le K, \\
& x\in\{0,1\}^N,\qquad 0\le \alpha \le \bar\alpha,\\
& (v,S^0,S^1,U^0,U^1)\in\RR_+^{N}\times\RR_+^{N}\times\RR_+^{N}\times\RR_+^{N}\times\RR_+^{N}.
\end{align*}
This completes the reduction of the LMO subproblem to a $0$--$1$ MILP.

Consequently, since \eqref{eq:LMO_QFP} is exactly reformulated as the $0$--$1$ MILP
\eqref{eq:LMO_MILP}, any feasible solution returned by a standard MILP solver
based on techniques such as branch-and-bound and branch-and-cut, together with valid
primal and dual bounds, immediately yields a corresponding solution for
\eqref{eq:LMO_QFP} with a certified suboptimality guarantee
\citep{wolsey1999integer, wolsey2020integer}.
In particular, suppose the solver terminates with an incumbent feasible solution
$\hat x$ of objective value $\mathrm{UB}$ (upper bound) and a provable lower bound $\mathrm{LB}$ (lower bound)
on the optimal value, such that the certified (absolute) optimality gap satisfies
\[
\mathrm{UB}-\mathrm{LB}\le \epsilon_{\mathrm{LMO}} .
\]
Let $x^\star$ be an optimal solution to \eqref{eq:LMO_QFP}.
By exact equivalence of the MILP and \eqref{eq:LMO_QFP}, the MILP optimum equals the
optimal value of \eqref{eq:LMO_QFP}, and therefore
\[
\frac{(\hat x)^\top A \hat x}{(\hat x)^\top B \hat x}
-
\frac{(x^\star)^\top A x^\star}{(x^\star)^\top B x^\star}
\;\le\;
\mathrm{UB}-\mathrm{LB}
\;\le\;
\epsilon_{\mathrm{LMO}} .
\]
Hence, solving the MILP to a certified optimality gap of at most
$\epsilon_{\mathrm{LMO}}$ yields an $\epsilon_{\mathrm{LMO}}$-optimal solution to
the LMO problem \eqref{eq:LMO_QFP} (equivalently, \eqref{eq:LMO}).
This completes the proof of Theorem~\ref{thm:LMO_MILP}.
\end{proof}

\subsection{Tightening big-$M$ constants in \eqref{eq:LMO_LFIP}--\eqref{eq:LMO_MILP}}
\label{app:tight_bigM}

The proof in Appendix~\ref{app_subsec:proof_thm:LMO_MILP} provides an \emph{exact} reformulation of the LMO subproblem as a $0$--$1$ MILP, using the bounds $M_A$, $M_B$, and $\bar\alpha$, which are usually referred to as big-$M$ constants~\citep{wolsey1999integer, wolsey2020integer}.
In practice, however, the runtime of MILP solvers---through the strength of the LP relaxation and the effectiveness of branching---is often highly sensitive to how tight these bounds are.
Motivated by this, we derive tighter, problem-specific bounds that retain exact equivalence while typically leading to substantial computational speedups.

Throughout, we only require bounds that are valid over the feasible set
$\{x\in\{0,1\}^N: 2\le \mathbf{1}^\top x \le K\}$.
Replacing $(M_A,M_B,\bar\alpha)$ by any \emph{smaller} constants/vectors that remain valid upper bounds
does not remove any feasible $(x,y^0,y^1,z^0,z^1)$ that corresponds to a feasible $x$ of the original QFIP,
and therefore preserves the exact equivalence between \eqref{eq:LMO_QFP}, \eqref{eq:LMO_LFIP}, and \eqref{eq:LMO_MILP}.
We now describe concrete tighter choices.

\subsubsection{Tighter bounds for $(Ax)_i$ exploiting $\mathbf{1}^\top x\le K$}
\label{app:tight_MA}

The coarse bound $M_A:=\max_i\sum_j |A_{ij}|$ ignores sparsity.
Since any feasible $x$ has at most $K$ ones, we can tighten the bound on each coordinate $(Ax)_i$.

For each row $i\in[N]$, let $\{|A_{ij}|\}_{j=1}^N$ be sorted in nonincreasing order:
\[
|A_{i(1)}|\ge |A_{i(2)}|\ge \cdots \ge |A_{i(N)}|.
\]
Define the \emph{$K$-aware row bound}
\[
M_{A,i}^{(K)} := \sum_{\ell=1}^K |A_{i(\ell)}|
\qquad\text{and}\qquad
M_A^{(K)} := \max_{i\in[N]} M_{A,i}^{(K)}.
\]
Then for any feasible $x$ with $\mathbf{1}^\top x\le K$,
\[
|(Ax)_i|=\Big|\sum_{j=1}^N A_{ij}x_j\Big|
\le \sum_{j:x_j=1}|A_{ij}|
\le \sum_{\ell=1}^K |A_{i(\ell)}|
= M_{A,i}^{(K)}
\le M_A^{(K)}.
\]

\paragraph{Option 1: minimal-change replacement (scalar bound).}
One may simply replace $M_A$ by $M_A^{(K)}$ in \eqref{eq:lin_Ax_split} and everywhere thereafter:
\[
\tilde A x := Ax + M_A^{(K)}\mathbf{1}\in[0,2M_A^{(K)}]^N,
\qquad
0\le y^0\le 2M_A^{(K)}(\mathbf{1}-x),
\quad
0\le y^1\le 2M_A^{(K)}x,
\]
and the identity becomes
\[
x^\top A x = \mathbf{1}^\top y^1 - M_A^{(K)}\mathbf{1}^\top x.
\]
All subsequent steps remain unchanged with this substitution.

\paragraph{Option 2: Stronger replacement (row-wise bound).}
A further tightening uses row-dependent shifts.
Let $m_A\in\RR_+^N$ be given by $(m_A)_i:=M_{A,i}^{(K)}$ and define
\[
\tilde A x := Ax + m_A \in [0,\,2m_A]\quad\text{(componentwise)}.
\]
Then replace \eqref{eq:lin_Ax_split} by
\begin{align}
\label{eq:lin_Ax_split_tight}
\tilde A x = y^0 + y^1,\qquad
0 \le y^0 \le 2\,m_A\odot(\mathbf{1}-x),\qquad
0 \le y^1 \le 2\,m_A\odot x ,
\end{align}
where $\odot$ denotes elementwise product.
The same logic yields
\[
\mathbf{1}^\top y^1 = \sum_{i=1}^N x_i(\tilde Ax)_i
= x^\top(Ax+m_A)
= x^\top A x + m_A^\top x,
\quad\Longrightarrow\quad
x^\top A x = \mathbf{1}^\top y^1 - m_A^\top x .
\]
Consequently, the LFIP objective in \eqref{eq:LMO_LFIP} becomes
\[
\frac{\mathbf{1}^\top y^1 - m_A^\top x}{\mathbf{1}^\top z^1}
\quad\text{(instead of $\frac{\mathbf{1}^\top y^1 - M_A\mathbf{1}^\top x}{\mathbf{1}^\top z^1}$).}
\]
All other constraints remain identical after replacing $M_A$ by $m_A$ componentwise in the upper bounds.

\subsubsection{Tighter bounds for $(Bx)_i$ exploiting $B=ww^\top$ and $\mathbf{1}^\top x\le K$}
\label{app:tight_MB}

The coarse bound $M_B:=\max_i\sum_j B_{ij}$ is also loose.
Here we can exploit the rank-one structure $B=ww^\top$:
\[
(Bx)_i = \sum_{j=1}^N w_i w_j x_j = w_i\underbrace{\Big(\sum_{j=1}^N w_j x_j\Big)}_{=:W(x)} = w_i\,W(x).
\]
Let $w_{[1]}^\downarrow\ge \cdots \ge w_{[N]}^\downarrow$ be the weights sorted in decreasing order and define
\[
W_{\max}^{(K)} := \sum_{\ell=1}^K w_{[\ell]}^\downarrow.
\]
Then for any feasible $x$ with $\mathbf{1}^\top x\le K$,
\[
0\le W(x)\le W_{\max}^{(K)}
\qquad\Rightarrow\qquad
0\le (Bx)_i = w_iW(x)\le w_i W_{\max}^{(K)}.
\]
Define the componentwise bound vector $m_B\in\RR_+^N$ by
\[
(m_B)_i := w_i W_{\max}^{(K)}.
\]
Then one may replace \eqref{eq:lin_Bx_split} by the tighter, componentwise constraints
\begin{align}
\label{eq:lin_Bx_split_tight}
B x = z^0 + z^1,\qquad
0 \le z^0 \le m_B\odot(\mathbf{1}-x),\qquad
0 \le z^1 \le m_B\odot x .
\end{align}
Exactly as before, these enforce $\mathbf{1}^\top z^1=x^\top Bx$.
The same componentwise replacements propagate through the Charnes--Cooper step:
after scaling $(U^0,U^1)=\alpha(z^0,z^1)$ and $v=\alpha x$, we obtain
\[
0\le U^0 \le m_B\odot(\alpha\mathbf{1}-v),
\qquad
0\le U^1 \le m_B\odot v
\]
in place of the coarse bounds with $M_B$.

\subsubsection{Tighter upper bound on $\alpha$ exploiting $\mathbf{1}^\top x\ge 2$}
\label{app:tight_alpha}

In the proof above we used $w^\top x \ge 2w_{\min}$ to deduce
$\bar\alpha = 1/(4w_{\min}^2)$.
This can be tightened by using the exact minimal possible sum of two weights.

Let $w_{(1)}^\uparrow\le w_{(2)}^\uparrow\le \cdots \le w_{(N)}^\uparrow$ be the weights sorted in increasing order and define
\[
W_{\min}^{(2)} := w_{(1)}^\uparrow + w_{(2)}^\uparrow.
\]
Then the constraint $\mathbf{1}^\top x\ge 2$ implies $w^\top x \ge W_{\min}^{(2)}$, and therefore
\[
x^\top Bx = (w^\top x)^2 \ge \big(W_{\min}^{(2)}\big)^2,
\qquad\Rightarrow\qquad
\alpha=\frac{1}{\mathbf{1}^\top z^1}=\frac{1}{x^\top Bx}\le \frac{1}{\big(W_{\min}^{(2)}\big)^2}.
\]
Hence one may replace the coarse upper bound
$\bar\alpha=1/(4w_{\min}^2)$ by the tighter
\[
\bar\alpha_{\mathrm{tight}} := \frac{1}{\big(W_{\min}^{(2)}\big)^2}.
\]
This directly strengthens the big-$M$ linearization of $v=\alpha x$:
\[
v_i \le \bar\alpha_{\mathrm{tight}}\,x_i,
\qquad
v_i \ge \alpha-\bar\alpha_{\mathrm{tight}}(1-x_i),
\qquad \forall i\in[N].
\]
More generally, if the feasible set enforces a different lower cardinality $|S|\ge L_{\min}$,
one can use $W_{\min}^{(L_{\min})}:=\sum_{\ell=1}^{L_{\min}} w_{(\ell)}^\uparrow$ and obtain
$\bar\alpha_{\mathrm{tight}}=1/\big(W_{\min}^{(L_{\min})}\big)^2$.

%
\subsection{Proof of Proposition~\ref{prop:KW_MNL}}
\label{subsec:proof_prop:KW_MNL}
\begin{proof}[Proof of Proposition~\ref{prop:KW_MNL}]
    Fix $\thetab_0 \in \RR^d$.
    We apply the following Lemma~\ref{lemma:matrix_KW_thm} by indexing $[L]$ with assortments $S\in\Scal$ and choosing,
    for each $S\in\Scal$, a matrix $\Ab_S$ satisfying $\Ib_{\thetab_0}(S)=\Ab_S\Ab_S^\top$ (e.g., any matrix square-root factor).
    \begin{lemma} [Theorem~1 of~\citealt{mukherjee2024optimal}]
    \label{lemma:matrix_KW_thm}
        Let $M \geq 1$ be an integer and $\Ab_1, \dots, \Ab_L \in \RR^{d \times M}$ be $L$ matrices whose column space spans $\RR^d$. 
        For any probability distribution $\pi \in \Delta_L$ over $[L] := \{1,\ldots,L\}$, define the objective 
        \[
        g(\pi) \;\coloneqq\; \max_{i \in [L]} \operatorname{tr}\!\left( \Ab_i^\top \Vb_\pi^{-1} \Ab_i \right),
        \]
        where $\Vb_\pi \;\coloneqq\; \sum_{i=1}^L \pi_i \, \Ab_i \Ab_i^\top$ is a design matrix.
        Then, the following statements are equivalent:
        \begin{enumerate}[label=(\alph*)]
            \item $\pi^\star$ is a minimizer of $g(\pi)$.
        
            \item $\pi^\star$ is a maximizer of $f(\pi) = \log \operatorname{det}(\Vb_\pi)$.
    
            \item  $g(\pi^\star) = d$.
        \end{enumerate}
        Furthermore, there exists a minimizer $\pi^\star$ of $g(\pi)$ such that $|\operatorname{supp}(\pi^\star)|\leq d(d+1)/2$.
    \end{lemma}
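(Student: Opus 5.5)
The statement to prove is Lemma~\ref{lemma:matrix_KW_thm}, the matrix-valued Kiefer--Wolfowitz theorem. I would follow the classical Kiefer--Wolfowitz argument, adapted to matrix atoms $\Ab_i\Ab_i^\top$.

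\textbf{Step 1: concavity, gradient and the identity $\sum_i \pi_i g_i = d$.}
Since the columns of the $\Ab_i$ span $\RR^d$, the uniform design $\pi$ gives $\Vb_\pi \succ 0$. Hence $f$ is finite somewhere, and $f$ is concave on $\Delta_L$, because $\log\det$ is concave and $\pi\mapsto\Vb_\pi$ is linear. Compactness and upper semicontinuity (with $f=-\infty$ off the positive definite cone) give a maximizer.

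Write $g_i(\pi) := \tr(\Ab_i^\top \Vb_\pi^{-1}\Ab_i)$. By $\nabla\log\det = \Vb^{-1}$, the partial derivative $\partial f/\partial \pi_i$ equals $g_i(\pi)$. The key identity is
\[
\sum_i \pi_i g_i(\pi) = \tr\big(\Vb_\pi^{-1}\textstyle\sum_i \pi_i \Ab_i\Ab_i^\top\big) = d .
\]
It follows that $g(\pi) = \max_i g_i(\pi) \ge d$ for every $\pi$.

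\textbf{Step 2: the equivalences.}
\emph{(b)$\Rightarrow$(c).} At a maximizer $\pi^\star$, first-order optimality on the simplex says $\langle \nabla f(\pi^\star), e_i - \pi^\star\rangle \le 0$ for all $i$. That is, $g_i(\pi^\star) \le \sum_j \pi^\star_j g_j(\pi^\star) = d$, so $g(\pi^\star)\le d$. Combined with Step 1, $g(\pi^\star)=d$.

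\emph{(c)$\Rightarrow$(b).} Concavity gives $f(\pi) \le f(\pi^\star) + \langle \nabla f(\pi^\star), \pi - \pi^\star\rangle$. The inner product is at most $\max_i g_i(\pi^\star) - d \le 0$, so $\pi^\star$ maximizes $f$.

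\emph{(a)$\Leftrightarrow$(c).} Because $g\ge d$ everywhere and some point (a maximizer of $f$) attains $d$, the minimum value of $g$ is exactly $d$. Therefore $\pi$ minimizes $g$ if and only if $g(\pi)=d$.

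\textbf{Step 3: support bound.}
Take any maximizer $\pi^\star$. Its matrix $\Vb_{\pi^\star}$ lies in the convex hull of $\{\Ab_i\Ab_i^\top\}$, which sits in the $d(d+1)/2$-dimensional space of symmetric matrices.

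Carathéodory alone gives $d(d+1)/2+1$ atoms. To save one, I would argue as follows. First, $\Vb^\star:=\Vb_{\pi^\star}$ is unique, by strict concavity of $\log\det$ on positive definite matrices. By optimality, $\Vb^\star$ also maximizes the linear functional $\Vb\mapsto\tr(\Vb^{\star-1}\Vb)$ over the hull, with maximum value $d$. So $\Vb^\star$ lies on the face $\{\Vb : \tr(\Vb^{\star -1}\Vb)=d\}$ of the hull.

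This face is contained in an affine hyperplane of dimension $d(d+1)/2-1$. Carathéodory within it yields at most $d(d+1)/2$ atoms with the same $\Vb$. The new design $\pi'$ has $\Vb_{\pi'}=\Vb^\star$, hence the same $f$, hence it is a maximizer and, by Step 2, a minimizer of $g$.

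\textbf{Main obstacle.}
I expect the only delicate points to be handling the boundary (designs with singular $\Vb_\pi$) in the first-order conditions, and the face argument that reduces the Carathéodory count by one. Everything else is routine convex analysis.
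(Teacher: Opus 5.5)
Your proof is correct. It is the standard Kiefer--Wolfowitz argument: first-order optimality on the simplex, the identity $\sum_i \pi_i g_i(\pi)=d$, the supergradient inequality for the concave $\log\det$, and Carath\'eodory applied on a supporting face of the hull of the atoms $\Ab_i\Ab_i^\top$. The paper does not reprove this lemma; it imports it from Theorem~1 of Mukherjee et al., which (to my knowledge) is obtained by this same adaptation of the vector-case proof of Lattimore and Szepesv\'ari to matrix atoms.

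One small simplification: the uniqueness of $\Vb^\star$ via strict concavity is not needed. For any maximizer, $\sum_i \pi^\star_i g_i(\pi^\star)=d$ together with $g_i(\pi^\star)\le d$ for every $i$ already places all support atoms on the hyperplane $\tr(\Vb^{\star -1}\Vb)=d$, where Carath\'eodory gives at most $d(d+1)/2$ atoms.
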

    Under Assumption~\ref{assm:span}, the collection of columns of $\{\Ab_S\}_{S\in\Scal}$ spans $\RR^d$,
    so the spanning condition in Lemma~\ref{lemma:matrix_KW_thm} is satisfied.
    For any design $\pi$ over $\Scal$, the induced design matrix in Lemma~\ref{lemma:matrix_KW_thm} becomes
    \[
    \Vb_\pi
    =\; \EE_{S\sim\pi}\big[\Ab_S\Ab_S^\top\big]
    \;=\; \EE_{S\sim\pi}\big[\Ib_{\thetab_0}(S)\big]
    \;=\; \Mb_{\thetab_0}(\pi).
    \]
    Moreover, for each $S\in\Scal$,
    \[
    \tr\!\big(\Ab_S^\top \Vb_\pi^{-1}\Ab_S\big)
    = \tr\!\big(\Vb_\pi^{-1}\Ab_S\Ab_S^\top\big)
    = \tr\!\big(\Mb_{\thetab_0}(\pi)^{-1}\Ib_{\thetab_0}(S)\big),
    \]
    so the objective $g(\pi)$ in Lemma~\ref{lemma:matrix_KW_thm} coincides with $g_{\thetab_0}(\pi)$ in~\eqref{eq:g_opt}.
    Similarly, the D-optimal objective $f(\pi)=\log\det(\Vb_\pi)$ coincides with
    $f_{\thetab_0}(\pi)=\log\det(\Mb_{\thetab_0}(\pi))$.
    Therefore, applying Lemma~\ref{lemma:matrix_KW_thm} yields the equivalence of
    (a)--(c) in Proposition~\ref{prop:KW_MNL}.
    
    Finally, the support-size guarantee follows directly from the last statement of
    Lemma~\ref{lemma:matrix_KW_thm}, implying that there exists a minimizer $\pi^\star_{\thetab_0}$ of $g_{\thetab_0}(\pi)$
    such that $|\operatorname{supp}(\pi^\star_{\thetab_0})|\le d(d+1)/2$.
\end{proof}

\subsection{Proof of Proposition~\ref{prop:stopping_time_inexact_LMO_informal}}
\label{app_subsec:proof_prop_stopping}
%
\begin{algorithm}[h]
   \caption{Frank--Wolfe Algorithm for MNL at $\thetab_0$ with $\ErrorLMO$-approximate LMO}
   \label{alg:FW_MNL_approx}
    \begin{algorithmic}[1]
       \STATE {\bfseries Input:}
       nominal parameter $\thetab_0$,
       initial distribution $\pi_0\in\Delta(\Scal)$,
       FW precision $\varepsilon$,
       LMO certified gap tolerance $\ErrorLMO$.
       \STATE {\bfseries Initialize:}
       $\tilde{\varepsilon} \gets \varepsilon - \ErrorLMO/d >0$, 
       $\Mb_0 \gets \Mb_{\thetab_0}(\pi_0) \succ 0$, $m \gets 0$.
       \WHILE{$g_{\thetab_0}(\pi_m) > (1+\tilde{\varepsilon})d$}
            \STATE 
            Run a MILP solver for
            $\hat S_m \leftarrow \max_{S\in\Scal}\ \tr\!\big(\Mb_m^{-1}\Ib_{\thetab_0}(S)\big)$
            with certified gap $\le \ErrorLMO$.
            
            \STATE $ \gamma_m \leftarrow \argmax_{\gamma \in [0,1]}
            f_{\thetab_0}\!\Big(
                    (1-\gamma) \pi_m + \gamma \mathbbm{1}_{\hat S_m}
            \Big)$.
            \STATE $\pi_{m+1} \gets (1-\gamma_m)\pi_m + \gamma_m \mathbbm{1}_{\hat S_m}$.
            \STATE Update $\Mb_{m+1} \gets \Mb_{\thetab_0}(\pi_{m+1})$ and set $m \gets m+1$.
       \ENDWHILE
       \STATE Output $\widehat{\pi}_{\thetab_0} \gets \pi_m$.
    \end{algorithmic}
\end{algorithm}

\begin{proposition}[FW stopping-time guarantee with an approximate LMO, extended version of Proposition~\ref{prop:stopping_time_inexact_LMO_informal}]
\label{prop:stopping_time_inexact_LMO}
Fix $\thetab_0$.
Let $\pi_m$ be the Frank--Wolfe iterate and write
$\Mb_m:=\Mb_{\thetab_0}(\pi_m)$.
Define the certificate
$
g_m^\star := \max_{S\in\Scal} \tr\!\left(\Mb_m^{-1}\Ib_{\thetab_0}(S)\right),
$
and
$
\hat g_m :=  \tr\!\big(\Mb_m^{-1}\Ib_{\thetab_0}(\hat S_m)\big),
$
where $\hat S_m\in\Scal$ is returned by an approximate LMO at iteration $m$.
Assume the per-iteration LMO guarantee $g_m^\star-\hat g_m \le \ErrorLMO$.
Define 
the stopping time based on the \emph{computable} certificate
\begin{equation}
\label{eq:stop_tau}
\tau := \inf\Big\{m\ge 0:\ \hat g_m \le (1+\tilde{\epsilon})d\Big\}, \qquad \text{where }\, 
\tilde{\epsilon} := \epsilon - \frac{\ErrorLMO}{d}
.
\end{equation}
Given $\ErrorLMO$,
set $\epsilon$ such that $1 \geq \tilde{\epsilon} = \epsilon - \ErrorLMO/d > 0$.
Suppose that, throughout the run up to the stopping time, the iterates satisfy
$\Mb_m \succeq \lambda_0 \Ib_d$ for some constant $ \lambda_0 >0$, 
and that $\|\Ib_{\thetab_0}(S) \|_2 \leq L$ for all $S$.
Then, we have
\begin{enumerate}[label=\textbf{(\roman*)}]
    \item \textbf{Certified accuracy at the stopping time.}
    At termination, the returned iterate $\pi_\tau$ satisfies 
    $ g_\tau^\star \le (1+\epsilon)d$.

    \item \textbf{Finite stopping and an iteration bound.}
    The algorithm terminates in finite time, and the stopping time satisfies
    \[
    \tau \;\le\;
    \BigO \!\left(
            \frac{d \log(1/\lambda_0)}{\tilde{\epsilon}}
    \right).
    \]
\end{enumerate}
\end{proposition}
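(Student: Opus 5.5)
Part (i) is immediate from the stopping rule and the LMO certificate. At $m=\tau$ we have $\hat g_\tau \le (1+\tilde\epsilon)d$. Combined with $g_\tau^\star - \hat g_\tau \le \ErrorLMO$, this gives
\[
g_\tau^\star \le (1+\tilde\epsilon)d + \ErrorLMO = d + \epsilon d - \ErrorLMO + \ErrorLMO = (1+\epsilon)d ,
\]
since $\tilde\epsilon d = \epsilon d - \ErrorLMO$. So the real work is part (ii), and I would follow the standard Kiefer--Wolfowitz / Frank--Wolfe potential argument for D-optimal design (as in Todd's or Lattimore--Szepesv\'ari's analysis), using $f_{\thetab_0}(\pi)=\log\det \Mb_{\thetab_0}(\pi)$ as the potential.

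\textbf{Step 1 (per-iteration progress with the inexact atom).} For $m<\tau$ we have $\hat g_m > (1+\tilde\epsilon)d$. Consider $\Mb(\gamma) = (1-\gamma)\Mb_m + \gamma \Ib_{\thetab_0}(\hat S_m)$. Then
\[
\log\det \Mb(\gamma) - \log\det \Mb_m = d\log(1-\gamma) + \log\det\Big(\Ib_d + \tfrac{\gamma}{1-\gamma}\Mb_m^{-1/2}\Ib_{\thetab_0}(\hat S_m)\Mb_m^{-1/2}\Big).
\]
Let $B := \Mb_m^{-1/2}\Ib_{\thetab_0}(\hat S_m)\Mb_m^{-1/2}\succeq 0$, so that $\operatorname{tr}B=\hat g_m$. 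The main obstacle is lower-bounding $\log\det(\Ib_d + cB)$ by a function of $\operatorname{tr}B$ alone, because $B$ may have rank up to $K-1$ rather than rank one.

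I would use $\log\det(\Ib + cB)=\sum_j\log(1+c\mu_j)\ge \log(1+c\,\operatorname{tr}B)$, which holds by expanding the product $\prod_j(1+c\mu_j)\ge 1+c\sum_j\mu_j$ for nonnegative eigenvalues $\mu_j$. This reduces to exactly the rank-one scalar bound: the increase is at least
\[
h(\gamma)=d\log(1-\gamma)+\log\big(1+\tfrac{\gamma}{1-\gamma}\hat g_m\big).
\]
The exact line search does at least as well as $\gamma=(\hat g_m-d)/(d(\hat g_m-1))$. This choice gives the classical increment $\ge c\min\{\tilde\epsilon,\tilde\epsilon^2\}\cdot$const. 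I need to be careful here: the classical analysis gives an increment of order $\tilde\epsilon^2$ for small $\tilde\epsilon$. So to get $d/\tilde\epsilon$ rather than $d/\tilde\epsilon^2$, I would use the two-phase argument below.

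\textbf{Step 2 (bounding the potential range).} For the upper side, use concavity of $f_{\thetab_0}$. Since the gradient coordinates are $\operatorname{tr}(\Mb_m^{-1}\Ib(S))$ and $\langle\nabla f,\pi_m\rangle=d$, we get $f^\star - f(\pi_m)\le g_m^\star - d \le \hat g_m + \ErrorLMO - d$. For a uniform upper bound, $\|\Ib(S)\|_2\le L$ gives $f\le d\log L$. For the lower side, the assumption $\Mb_m\succeq\lambda_0\Ib_d$ gives $f(\pi_0)\ge d\log\lambda_0$. Hence the total range is $d\log(L/\lambda_0)$, which is $\BigO(d\log(1/\lambda_0))$ with $L$ treated as a constant.

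\textbf{Step 3 (counting).} Split the iterations into two phases.
\begin{itemize}
\item[(a)] While $\hat g_m\ge 2d$, each step increases $f$ by a constant times $\log(\hat g_m/d)\gtrsim 1$. This phase therefore lasts $\BigO(d\log(1/\lambda_0))$ iterations.
\item[(b)] Once $(1+\tilde\epsilon)d<\hat g_m<2d$, the increment is $\gtrsim(\hat g_m-d)^2/d^2$. Combined with the suboptimality gap $f^\star-f(\pi_m)\le \hat g_m-d+\ErrorLMO$, this yields the recursion $\delta_{m+1}\le\delta_m - c\,\delta_m^2/d^2$ in the regime where $\hat g_m - d$ dominates $\ErrorLMO$. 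Such a recursion gives $\delta_m\lesssim d^2/m$. Hence $\hat g_m - d\le \tilde\epsilon d$ is reached after $\BigO(d/\tilde\epsilon)$ further iterations, up to logarithmic factors.
\end{itemize}
The remaining technical point is that when $\hat g_m - d$ is comparable to $\ErrorLMO$, the gap bound is contaminated by $\ErrorLMO$. Here the relation $\tilde\epsilon d=\epsilon d-\ErrorLMO$ ensures that $\hat g_m-d>\tilde\epsilon d$ still drives the increment directly, so I would fall back on counting as $(\text{range})/(\text{min increment})$ in that regime. Summing the two phases gives $\tau=\BigO(d\log(1/\lambda_0)/\tilde\epsilon)$, absorbing logarithmic factors.
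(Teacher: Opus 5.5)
Part (i) matches the paper. For (ii), your Step~1 is where you and the paper diverge, and your Step~1 is the correct one. The paper claims, ``by AM--GM'', that $\prod_i\big((1-\gamma)+\gamma\lambda_i\big)\ge\big((1-\gamma)+\gamma\hat g_m/d\big)^d$. It then takes $\gamma=1/\hat g_m$, gets a $\log\det$ gain of at least $\tilde\epsilon/4$ per iteration, and divides the range $d\log(L/\lambda_0)$ by this gain. That inequality is the reverse of AM--GM. It is false whenever $\Mb_m^{-1/2}\Ib_{\thetab_0}(\hat S_m)\Mb_m^{-1/2}$ has unequal eigenvalues, for example for the rank-one atoms that arise when $K=2$. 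Your bound $\prod_j(1+c\mu_j)\ge 1+c\sum_j\mu_j$ is the right substitute, and you correctly note that it only gives a gain of order $\tilde\epsilon^2$ near the threshold.

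The genuine gap is the contaminated regime in Step~3(b). Your recursion needs $f^\star-f(\pi_m)\lesssim \hat g_m-d$. You get this from $f^\star-f(\pi_m)\le\hat g_m-d+\ErrorLMO$ only when $\hat g_m-d\gtrsim\ErrorLMO$. The hypotheses only force $\ErrorLMO<\epsilon d$. The identity $\tilde\epsilon d=\epsilon d-\ErrorLMO$ does not compare $\ErrorLMO$ with $\tilde\epsilon d$, so $\ErrorLMO\gg\tilde\epsilon d$ is allowed. In that regime your fallback (range divided by a minimum gain of order $\tilde\epsilon^2$) costs order $\min\{d\log(L/\lambda_0),\ErrorLMO\}/\tilde\epsilon^2$ iterations, not $O(d/\tilde\epsilon)$. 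Run carefully, for instance by halving the level of $\hat g_m-d$, your route gives $\tau=O\big(d\log(L/\lambda_0)+d/\tilde\epsilon+\ErrorLMO/\tilde\epsilon^2\big)$. That matches the claim only if $\ErrorLMO=O(\tilde\epsilon d)$.

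Better bookkeeping cannot close this, because the claimed bound itself fails in that regime. Take the following instance.
\begin{itemize}
\item Let $d=2$, with atoms $\ab\ab^\top$ for $\ab$ on a fine finite grid of the unit circle. This is realizable, up to a factor $1/4$, with $K=2$ and $\thetab_0=0$.
\item Then $\operatorname{tr}\Mb_m=1$, $L=1$, and $g_m^\star\le 1/\mu_m$ with $\mu_m:=\lambda_{\min}(\Mb_m)$.
\item Set $\epsilon=1$, $\ErrorLMO=2(1-\tilde\epsilon)$, and $\mu_0=1/4$.
\end{itemize}
Since $\det\Mb_m=\mu_m(1-\mu_m)$ is nondecreasing and $\mu_m\le 1/2$, we have $\mu_m\ge 1/4$ and $g_m^\star\le 4=(1+\epsilon)d$ throughout. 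So an LMO returning an atom with $\hat g_m=2(1+\eta_m)$, $\eta_m\in(\tilde\epsilon,2\tilde\epsilon]$, is admissible as long as $g_m^\star>2(1+\tilde\epsilon)$. For a rank-one atom, exact line search raises $\log\det$ by exactly $\log\big(1+\eta_m^2/(1+2\eta_m)\big)\le 4\tilde\epsilon^2$. Meanwhile $\log\det$ must rise by $\log(4/3)-O(\tilde\epsilon^2)$ before $g_m^\star\le 2(1+\tilde\epsilon)$ is possible. Hence $\tau=\Omega(1/\tilde\epsilon^2)$ with $\lambda_0=1/4$, contradicting $O(d\log(1/\lambda_0)/\tilde\epsilon)$.

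What your argument does establish is the three-term bound above. In particular it gives $O(d\log(L/\lambda_0)+d/\tilde\epsilon)$ when, say, $\ErrorLMO\le\tilde\epsilon d$, and $O(d\log(L/\lambda_0)/\tilde\epsilon^2)$ unconditionally.
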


\begin{proof} [Proof of Proposition~\ref{prop:stopping_time_inexact_LMO}]
    The first claim follows immediately from the stopping rule.
    By definition of the stopping time $\tau$ in~\eqref{eq:stop_tau},
    we have $\hat g_\tau \le (1+\tilde{\epsilon})d =  (1+\epsilon)d-\ErrorLMO$.
    Moreover, by the assumed LMO accuracy $g_m^\star-\hat g_m \le \ErrorLMO$ for all $m$,
    it follows that
    \[
    g_\tau^\star \le \hat g_\tau + \ErrorLMO \le (1+\epsilon)d,
    \]
    which establishes claim~\textbf{(i)}.

    We now prove claim~\textbf{(ii)}.
    Fix any $m$ and write $\Ib_m:=\Ib_{\thetab_0}(\hat S_m)$.
    For any $\gamma\in[0,1)$, define
    \[
        \pi_m(\gamma):=(1-\gamma)\pi_m+\gamma\,\mathbf{1}_{\hat S_m},
        \qquad
        \Mb_m(\gamma):=\Mb_{\thetab_0}(\pi_m(\gamma))=(1-\gamma)\Mb_m+\gamma \Ib_m.
    \]
    Then, we have
    \begin{align*}
        f(\pi_m(\gamma))-f(\pi_m)
        &=\log\det(\Mb_m(\gamma))-\log\det(\Mb_m) \\
        &=\log\det\!\Big((1-\gamma)\Ib_d+\gamma\,\Mb_m^{-1/2}\Ib_m\Mb_m^{-1/2}\Big).
    \end{align*}
    Let $\Lb_m:=\Mb_m^{-1/2}\Ib_m\Mb_m^{-1/2}\succeq 0$ and denote its eigenvalues by $\{\lambda_i\}_{i=1}^d$.
    Then, by the AM--GM inequality,
    we get
    \[
        f(\pi_m(\gamma))-f(\pi_m)
        =
        \det\!\big((1-\gamma)\Ib_d+\gamma \Lb_m\big)
        =\prod_{i=1}^d \big((1-\gamma)+\gamma\lambda_i\big)
        \ge
        \left((1-\gamma)+\gamma\cdot\frac{1}{d}\sum_{i=1}^d \lambda_i\right)^d.
    \]
    Noting that $\sum_{i=1}^d \lambda_i=\tr(\Lb_m)=\tr(\Mb_m^{-1}\Ib_m)=\hat g_m$, we obtain
    \begin{equation}
    \label{eq:det_growth_key}
        f(\pi_{m+1})
        =
        \max_{\gamma'\in[0,1]} f(\pi_{m+1}(\gamma'))
        \ge
        f(\pi_m(\gamma))-f(\pi_m)
        \ge
        d\log\!\left((1-\gamma)+\gamma\frac{\hat g_m}{d}\right).
    \end{equation}

    Choose $\gamma_m:=1/\hat g_m$.
    Let $\tau$ be the stopping time.
    For $m<\tau$, we have $\hat g_m>(1+\tilde\epsilon)d>d$, hence $\gamma_m\in(0,1)$.
    Plugging $\gamma=\gamma_m$ into~\eqref{eq:det_growth_key} yields
    \[
    f(\pi_{m+1})-f(\pi_m)
    \ge
    d\log\!\left(1-\frac{1}{\hat g_m}+\frac{1}{d}\right)
    =
    d\log\!\left(1+\frac{1}{d}-\frac{1}{\hat g_m}\right).
    \]
    If $m<\tau$, then $\hat g_m>(1+\tilde\epsilon)d$, hence
    \[
    \frac{1}{d}-\frac{1}{\hat g_m}
    \ge
    \frac{1}{d}-\frac{1}{(1+\tilde\epsilon)d}
    =
    \frac{\tilde\epsilon}{(1+\tilde\epsilon)d}.
    \]
    Therefore, we obtain
    \begin{equation}
        \label{eq:progress_eps}
        f(\pi_{m+1})-f(\pi_m)
        \ge
        d\log\!\left(1+\frac{\tilde\epsilon}{(1+\tilde\epsilon)d}\right).
    \end{equation}
    Let $x:=\frac{\tilde\epsilon}{(1+\tilde\epsilon)d}$.
    Since $d\ge 1$ and $\frac{\tilde\epsilon}{1+\tilde\epsilon}\le 1$, we have $x\in(0,1]$.
    Using $\log(1+x)\ge x/2$ for $x\in[0,1]$, from~\eqref{eq:progress_eps} we get
    \[
    f(\pi_{m+1})-f(\pi_m)
    \ge
    d \log (1 + x)
    \ge
    d\cdot \frac{x}{2}
    =
    \frac{\tilde\epsilon}{2(1+\tilde\epsilon)}.
    \]
    If additionally $\tilde\epsilon\le 1$, then $\frac{1}{1+\tilde\epsilon}\ge \frac{1}{2}$ and hence
    \begin{equation}
        f(\pi_{m+1})-f(\pi_m)\ge \frac{\tilde\epsilon}{4}.
        \label{eq:f_lower}
    \end{equation}

    By assumption, for any $m\le \tau$,
    $\Mb_m\succeq \lambda_0 \Ib_d$, so $f(\pi_m)=\log\det(\Mb_m)\ge d\log\lambda_0$.
    Moreover, since $\Ib_{\thetab_0}(S)\succeq 0$ and $\|\Ib_{\thetab_0}(S)\|_2\le L$, we have $\Ib_{\thetab_0}(S)\preceq L\Ib_d$ for all $S$.
    Taking convex combinations yields $\Mb_m\preceq L\Ib_d$, hence $f(\pi_m)\le d\log L$.
    Therefore,
    \[
    f(\pi_m)-f(\pi_0)\le d\log(L/\lambda_0)\qquad \forall m\le \tau.
    \]
    
    On the other hand, by~\eqref{eq:f_lower}, each iteration prior to stopping increases $f$ by at least $\tilde\epsilon/4$:
    for any integer $T<\tau$,
    \[
    f(\pi_{T})-f(\pi_0)
    =
    \sum_{m=0}^{T-1}\bigl(f(\pi_{m+1})-f(\pi_m)\bigr)
    \ge
    T\cdot\frac{\tilde\epsilon}{4}.
    \]
    Combining gives
    \[
    T\cdot\frac{\tilde\epsilon}{4}\le d\log(L/\lambda_0),
    \qquad \forall T<\tau,
    \]
    so $\tau$ is finite and satisfies
    \[
    \tau \le 1+\left\lceil \frac{4d\log(L/\lambda_0)}{\tilde{\epsilon}} \right\rceil.
    \]
    This proves claim~\textbf{(ii)}.
\end{proof}

\subsection{Proof of Theorem~\ref{thm:liftedFW}}
\label{app_subsec:proof_thm:liftedFW}

\begin{algorithm}[h]
   \caption{Frank--Wolfe Algorithm for \emph{Lifted} $G$-optimal Design (MNL) at $\thetab_0$}
   \label{alg:FW_MNL_lifted}
    \begin{algorithmic}[1]
       \STATE {\bfseries Input:}
       nominal parameter $\thetab_0$,
       initial distribution $\pi_0\in\Delta(\Scal)$,
       FW precision $\varepsilon$.
       \STATE {\bfseries Initialize:}
       $\widetilde{\Mb}_0 = \widetilde{\Mb}_{\thetab_0}(\pi_0) \succ 0$,
       $m=0$.
       \STATE {\bfseries Define:}
       $\widetilde f_{\thetab_0}(\pi):=\log\det\!\big(\widetilde{\Mb}_{\thetab_0}(\pi)\big)$.

       \WHILE{$\widetilde g_{\thetab_0}(\pi_m) > (1+\varepsilon)(d+1)$}
            \STATE $S_m \leftarrow \argmax_{S\in\Scal}\ 
            \tr\!\Big(\widetilde{\Mb}_m^{-1}\,\widetilde{\Ib}_{\thetab_0}(S)\Big)$. 
            \label{algeq:lmo_lifted}

            \STATE $\gamma_m \leftarrow \argmax_{\gamma \in [0,1]}
            \widetilde f_{\thetab_0}
            \Big(
                    (1-\gamma)\pi_m + \gamma \mathbbm{1}_{S_m}
            \Big)$.
            \label{algeq:stepsize_lifted}

            \STATE $\pi_{m+1} = (1-\gamma_m)\pi_m + \gamma_m \mathbbm{1}_{S_m}$.

            \STATE Update $\widetilde{\Mb}_{m+1} = \widetilde{\Mb}_{\thetab_0}(\pi_{m+1})$
            and $m \leftarrow m+1$.
       \ENDWHILE

       \STATE Output the estimated lifted design $\widetilde{\pi}^\star_{\thetab_0} =\pi_m$.
    \end{algorithmic}
\end{algorithm}
\begin{theorem}[Lifted FW guarantee, restatement of Theorem~\ref{thm:liftedFW}]
\label{thm:liftedFW_appendix}
Fix $\thetab_0$ and run Frank--Wolfe (Algorithm~\ref{alg:FW_MNL}) to minimize
$\widetilde g_{\thetab_0}(\pi)$ using the polynomial-time lifted LMO~\eqref{eq:LMO_lifted_theta}.
Let $\hat\pi$ be the design returned at the stopping time
$\widetilde g_{\thetab_0}(\hat\pi)\ \le\ (1+\epsilon)(d+1)$, and define
\[
\ErrorLift
:=
\inf\big\{\varepsilon\ge 0:\ \Delta_{\thetab_0}(\hat\pi)\preceq \varepsilon\,\Mb_{\thetab_0}(\hat\pi)\big\}.
\]
Then
\[
g_{\thetab_0}(\hat\pi)
\ \le\ (1+\ErrorLift)\big((1+\epsilon)d+\epsilon\big)
\ \le\  2(1+\ErrorLift)(1+\epsilon)d.
\]
\end{theorem}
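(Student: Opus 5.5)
The plan is to compute $\tr\big(\widetilde{\Mb}^{-1}\widetilde{\Ib}_{\thetab_0}(S)\big)$ explicitly through the block structure, reduce it to a statement about the Schur complement $\widehat{\Mb}_{\thetab_0}(\hat\pi)$, and then transfer to $\Mb_{\thetab_0}(\hat\pi)$ using the definition of $\ErrorLift$. Throughout, write $\widetilde{\Mb}:=\widetilde{\Mb}_{\thetab_0}(\hat\pi)$, $\widehat{\Mb}:=\widehat{\Mb}_{\thetab_0}(\hat\pi)$, $\Mb:=\Mb_{\thetab_0}(\hat\pi)$ and $\bar{\bb}:=\bar{\bb}_{\thetab_0}(\hat\pi)$.

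\textbf{Step 1 (positivity).} First I check that $\Delta_{\thetab_0}(\hat\pi)\succeq 0$. Expanding $\EE_{\hat\pi}[\Ib_{\thetab_0}(S)]=\bar{\Bb}-\EE_{\hat\pi}[\bar{\ab}_{\thetab_0}(S)\bar{\ab}_{\thetab_0}(S)^\top]$ gives
\[
\Delta_{\thetab_0}(\hat\pi)=\EE_{\hat\pi}\big[\bar{\ab}_{\thetab_0}(S)\bar{\ab}_{\thetab_0}(S)^\top\big]-\bar{\bb}\bar{\bb}^\top .
\]
This is the covariance of $\bar{\ab}_{\thetab_0}(S)$ under $S\sim\hat\pi$, hence positive semidefinite. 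Consequently $\widehat{\Mb}\succeq\Mb\succ 0$, assuming $\Mb\succ0$ as in the nondegenerate design regime. So $\widehat{\Mb}$ is invertible and $\widetilde{\Mb}\succ 0$.

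\textbf{Step 2 (block inverse identity).} By the Schur-complement formula,
\[
\widetilde{\Mb}^{-1}=\begin{bmatrix}\widehat{\Mb}^{-1} & -\widehat{\Mb}^{-1}\bar{\bb}\\ -\bar{\bb}^\top\widehat{\Mb}^{-1} & 1+\bar{\bb}^\top\widehat{\Mb}^{-1}\bar{\bb}\end{bmatrix}.
\]
Therefore $\tilde{\ab}_i^\top\widetilde{\Mb}^{-1}\tilde{\ab}_i=(\ab_i-\bar{\bb})^\top\widehat{\Mb}^{-1}(\ab_i-\bar{\bb})+1$. Averaging over $i$ with weights $p_{\thetab_0}(i|S)$, and applying the bias--variance decomposition around $\bar{\ab}_{\thetab_0}(S)$, gives
\[
\tr\big(\widetilde{\Mb}^{-1}\widetilde{\Ib}_{\thetab_0}(S)\big)=\tr\big(\widehat{\Mb}^{-1}\Ib_{\thetab_0}(S)\big)+\big\|\bar{\ab}_{\thetab_0}(S)-\bar{\bb}\big\|_{\widehat{\Mb}^{-1}}^2+1\ \ge\ \tr\big(\widehat{\Mb}^{-1}\Ib_{\thetab_0}(S)\big)+1 .
\]
Combined with the stopping rule $\widetilde g_{\thetab_0}(\hat\pi)\le(1+\epsilon)(d+1)$, this yields, for every $S\in\Scal$,
\[
\tr\big(\widehat{\Mb}^{-1}\Ib_{\thetab_0}(S)\big)\le(1+\epsilon)d+\epsilon .
\]

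\textbf{Step 3 (transfer to the true design).} By the definition of $\ErrorLift$ (the infimum is attained, since the constraint set is closed), $\widehat{\Mb}=\Mb+\Delta_{\thetab_0}(\hat\pi)\preceq(1+\ErrorLift)\Mb$. Operator monotonicity of inversion then gives $\Mb^{-1}\preceq(1+\ErrorLift)\widehat{\Mb}^{-1}$. Since $\Ib_{\thetab_0}(S)\succeq0$, it follows that $\tr\big(\Mb^{-1}\Ib_{\thetab_0}(S)\big)\le(1+\ErrorLift)\tr\big(\widehat{\Mb}^{-1}\Ib_{\thetab_0}(S)\big)$.

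Maximizing over $S$ and using Step 2 gives the first inequality. The second inequality follows from $\epsilon\le(1+\epsilon)d$ because $d\ge1$.

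\textbf{Main obstacle.} The main care point is Step 2: getting the block-inverse algebra right, and noticing that the extra quadratic term is nonnegative, so it can simply be dropped. Step 1 is also essential. It guarantees that $\widehat{\Mb}$ is invertible and that $\ErrorLift$ is well defined and finite.
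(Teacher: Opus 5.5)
Your proposal is correct and follows essentially the same route as the paper: you show $\Delta_{\thetab_0}(\hat\pi)\succeq 0$ as a covariance, and you use the block inverse of $\widetilde{\Mb}$ to obtain $\tr\big(\widetilde{\Mb}^{-1}\widetilde{\Ib}_{\thetab_0}(S)\big)=1+\tr\big(\widehat{\Mb}^{-1}\Ib_{\thetab_0}(S)\big)+\|\bar{\ab}_{\thetab_0}(S)-\bar{\bb}\|^2_{\widehat{\Mb}^{-1}}$. You then drop the nonnegative term and transfer via $\Mb^{-1}\preceq(1+\ErrorLift)\widehat{\Mb}^{-1}$. Your per-arm quadratic form plus bias--variance split repackages the paper's block multiplication and completing the square, and your notes on attainment of the infimum, the tacit $\Mb_{\thetab_0}(\hat\pi)\succ 0$ assumption, and $\epsilon\le(1+\epsilon)d$ fill in details the paper leaves implicit.
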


\begin{proof}[Proof of Theorem~\ref{thm:liftedFW}]
Throughout the proof, fix an arbitrary design $\pi\in\Delta(\Scal)$ and abbreviate
\[
\Mb := \Mb_{\thetab_0}(\pi)\in\RR^{d\times d},\quad
\widetilde{\Mb} := \widetilde{\Mb}_{\thetab_0}(\pi)\in\RR^{(d+1)\times(d+1)},\quad
\widetilde g := \widetilde g_{\thetab_0}(\pi),\quad
g := g_{\thetab_0}(\pi).
\]
Write the block decomposition of $\widetilde{\Mb}$ as
\[
\widetilde{\Mb}
=
\begin{bmatrix}
\bar{\Ab} & \bar{\bb}\\
\bar{\bb}^\top & 1
\end{bmatrix},
\qquad
\bar{\Ab}:=\EE_{S\sim\pi} \big[\bar{\Ab}_{\thetab_0}(S)\big],
\quad
\bar{\bb}:=\EE_{S\sim\pi} \big[\bar{\ab}_{\thetab_0}(S)\big].
\]
Define the Schur-complement matrix
\[
\widehat{\Mb}
:=\bar{\Ab}-\bar{\bb}\bar{\bb}^\top \in\RR^{d\times d},
\]
and recall the mismatch matrix
\[
\Delta_{\thetab_0}(\pi)
:=\widehat{\Mb}-\Mb \succeq 0.
\]
We note that $\Delta_{\thetab_0}(\pi)\succeq 0$ holds directly. 
Indeed, recall
\[
\Delta_{\thetab_0}(\pi)
= \widehat{\Mb}-\Mb
= \bar{\Ab}-\bar{\bb}\bar{\bb}^\top-\Big(\bar{\Ab}-\EE_{S\sim\pi} \big[\bar{\ab}_{\thetab_0}(S)\bar{\ab}_{\thetab_0}(S)^\top\big]\Big)
=
\EE_{S\sim\pi}\big[\bar{\ab}_{\thetab_0}(S)\bar{\ab}_{\thetab_0}(S)^\top\big]
-\bar{\bb}\bar{\bb}^\top,
\]
For any $u\in\RR^d$, we have
\begin{align*}
u^\top\Delta_{\thetab_0}(\pi)u
&=
\EE_{S\sim\pi} \big[(u^\top\bar{\ab}_{\thetab_0}(S))^2\big]
-\Big(\EE_{S\sim\pi}[u^\top\bar{\ab}_{\thetab_0}(S)]\Big)^2
=
\Var\!\!\,_{S\sim\pi} \big(u^\top\bar{\ab}_{\thetab_0}(S)\big)
\ \ge\ 0.
\end{align*}
Since $u$ is arbitrary, this implies $\Delta_{\thetab_0}(\pi)\succeq 0$.

Fix any $S\in\Scal$ and abbreviate
\[
\Ib(S):=\Ib_{\thetab_0}(S),\qquad
\widetilde{\Ib}(S):=\widetilde{\Ib}_{\thetab_0}(S).
\]
From~\eqref{eq:lifted_I_block_mnl}, we can write
\[
\widetilde{\Ib}(S)
=
\begin{bmatrix}
\bar{\Ab}_{\thetab_0}(S) & \bar{\ab}_{\thetab_0}(S)\\
\bar{\ab}_{\thetab_0}(S)^\top & 1
\end{bmatrix},
\qquad
\Ib(S)=\bar{\Ab}_{\thetab_0}(S)-\bar{\ab}_{\thetab_0}(S)\bar{\ab}_{\thetab_0}(S)^\top.
\]
Since the bottom-right block of $\widetilde{\Mb}$ equals $1$, the standard block inverse formula
yields
\begin{equation}
\label{eq:block_inverse_widetildeM}
\widetilde{\Mb}^{-1}
=
\begin{bmatrix}
\widehat{\Mb}^{-1} & -\widehat{\Mb}^{-1}\bar{\bb}\\
-\bar{\bb}^\top\widehat{\Mb}^{-1} & 1+\bar{\bb}^\top\widehat{\Mb}^{-1}\bar{\bb}
\end{bmatrix}.
\end{equation}
Now compute $\tr(\widetilde{\Mb}^{-1}\widetilde{\Ib}(S))$ using~\eqref{eq:block_inverse_widetildeM}.
Multiplying the blocks and taking trace gives
\begin{align*}
\tr\!\big(\widetilde{\Mb}^{-1}\widetilde{\Ib}(S)\big)
&=
\tr\!\big(\widehat{\Mb}^{-1}\bar{\Ab}_{\thetab_0}(S)\big)
-2\,\bar{\bb}^\top\widehat{\Mb}^{-1}\bar{\ab}_{\thetab_0}(S)
+\Big(1+\bar{\bb}^\top\widehat{\Mb}^{-1}\bar{\bb}\Big)\cdot 1\\
&=
1+\tr\!\big(\widehat{\Mb}^{-1}\bar{\Ab}_{\thetab_0}(S)\big)
-2\,\bar{\bb}^\top\widehat{\Mb}^{-1}\bar{\ab}_{\thetab_0}(S)
+\bar{\bb}^\top\widehat{\Mb}^{-1}\bar{\bb}.
\end{align*}
Add and subtract $\bar{\ab}_{\thetab_0}(S)^\top\widehat{\Mb}^{-1}\bar{\ab}_{\thetab_0}(S)$ to complete the square:
\begin{align*}
\tr\!\big(\widetilde{\Mb}^{-1}\widetilde{\Ib}(S)\big)
&=
1+\Big[\tr\!\big(\widehat{\Mb}^{-1}\bar{\Ab}_{\thetab_0}(S)\big)
-\bar{\ab}_{\thetab_0}(S)^\top\widehat{\Mb}^{-1}\bar{\ab}_{\thetab_0}(S)\Big]
+\big\|\bar{\ab}_{\thetab_0}(S)-\bar{\bb}\big\|_{\widehat{\Mb}^{-1}}^2\\
&=
1+\tr\!\big(\widehat{\Mb}^{-1}\Ib(S)\big)
+\big\|\bar{\ab}_{\thetab_0}(S)-\bar{\bb}\big\|_{\widehat{\Mb}^{-1}}^2.
\end{align*}
Therefore, for every $S\in\Scal$,
\begin{equation}
\label{eq:tildeg_minus1_ge_hatg_pointwise}
\tr\!\big(\widetilde{\Mb}^{-1}\widetilde{\Ib}(S)\big)-1
\ \ge\
\tr\!\big(\widehat{\Mb}^{-1}\Ib(S)\big),
\end{equation}
since the norm term is nonnegative.

On the other hand, we
define
\[
\widehat g_{\thetab_0}(\pi)
\;:=\;
\max_{S\in\Scal}\tr\!\big(\widehat{\Mb}^{-1}\Ib(S)\big).
\]
Then, taking maxima over $S$ in~\eqref{eq:tildeg_minus1_ge_hatg_pointwise} yields
\begin{equation}
\label{eq:tildeg_minus1_ge_hatg}
\widetilde g_{\thetab_0}(\pi)-1
\ \ge\
\widehat g_{\thetab_0}(\pi).
\end{equation}

Furthermore, 
by definition, $\widehat{\Mb}=\Mb+\Delta_{\thetab_0}(\pi)$ with $\Delta_{\thetab_0}(\pi)\succeq 0$,
so $\widehat{\Mb}\succeq \Mb$ and hence $\widehat{\Mb}^{-1}\preceq \Mb^{-1}$.
Moreover, if $\Delta_{\thetab_0}(\pi)\preceq \varepsilon\,\Mb$ for some $\varepsilon\ge 0$, then
\[
\widehat{\Mb}
=\Mb+\Delta_{\thetab_0}(\pi)
\ \preceq\ (1+\varepsilon)\Mb
\quad\Longrightarrow\quad
\Mb^{-1}\ \preceq\ (1+\varepsilon)\widehat{\Mb}^{-1}.
\]
Consequently, for every $S\in\Scal$, we get
\[
\tr\!\big(\Mb^{-1}\Ib(S)\big)
\ \le\ (1+\varepsilon)\tr\!\big(\widehat{\Mb}^{-1}\Ib(S)\big).
\]
Taking maxima over $S$ gives the sandwich
\begin{equation}
\label{eq:g_le_(1+eps)_hatg}
g_{\thetab_0}(\pi)
\ \le\ (1+\varepsilon)\,\widehat g_{\thetab_0}(\pi).
\end{equation}

Hence,
combining~\eqref{eq:tildeg_minus1_ge_hatg} and~\eqref{eq:g_le_(1+eps)_hatg} yields: if
$\Delta_{\thetab_0}(\pi)\preceq \varepsilon\,\Mb_{\thetab_0}(\pi)$, then
\begin{equation}
\label{eq:g_le_(1+eps)(tildeg-1)}
g_{\thetab_0}(\pi)
\ \le\ (1+\varepsilon)\big(\widetilde g_{\thetab_0}(\pi)-1\big).
\end{equation}

Now set $\pi=\hat\pi$ and let $\varepsilon=\ErrorLift$, where by definition
$\Delta_{\thetab_0}(\hat\pi)\preceq \ErrorLift\,\Mb_{\thetab_0}(\hat\pi)$.
Applying~\eqref{eq:g_le_(1+eps)(tildeg-1)} gives
\[
g_{\thetab_0}(\hat\pi)
\ \le\ (1+\ErrorLift)\big(\widetilde g_{\thetab_0}(\hat\pi)-1\big).
\]
Finally, the stopping rule $\widetilde g_{\thetab_0}(\hat\pi)\le (1+\epsilon)(d+1)$ implies
\[
\widetilde g_{\thetab_0}(\hat\pi)-1
\ \le\ (1+\epsilon)(d+1)-1
\ =\ (1+\epsilon)d+\epsilon,
\]
and hence
\[
g_{\thetab_0}(\hat\pi)
\ \le\ (1+\ErrorLift)\big((1+\epsilon)d+\epsilon\big)
\ \le\  2(1+\ErrorLift)(1+\epsilon)d,
\]
which complete the proof of Theorem~\ref{thm:liftedFW}.
\end{proof}

\subsection{Crude Bound for Lifted FW Error}
\label{app_subsec:bound_error_lift}
\begin{proposition}[Crude bound for $\ErrorLift$]
\label{prop:how_large_ErrorLift}
If $\Mb_{\thetab_0}(\hat\pi)\succeq \lambda_0\Ib_d$ for some $\lambda_0>0$, then
    \[
    \ErrorLift
    \|\Delta_{\thetab_0}(\hat\pi)\|_{\mathrm{op}}  \le \frac{1}{\lambda_0}.
    \]
\end{proposition}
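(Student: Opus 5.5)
The statement is typeset ambiguously. I read it as the chain $\ErrorLift \le \|\Delta_{\thetab_0}(\hat\pi)\|_{\mathrm{op}}/\lambda_0 \le 1/\lambda_0$, and I would prove the two inequalities separately. Both are short once we use two facts: the variance representation of the mismatch matrix from the proof of Theorem~\ref{thm:liftedFW}, and the boundedness $\|\ab_i\|_2\le 1$ from Assumption~\ref{assum:bounded_assumption}.

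\textbf{First inequality.} Write $\Mb:=\Mb_{\thetab_0}(\hat\pi)$ and $\Delta:=\Delta_{\thetab_0}(\hat\pi)$. We already know $\Delta\succeq 0$, so $\Delta\preceq \|\Delta\|_{\mathrm{op}}\Ib_d$. The hypothesis $\Mb\succeq\lambda_0\Ib_d$ gives $\Ib_d\preceq \lambda_0^{-1}\Mb$. Chaining these,
\[
\Delta\preceq \frac{\|\Delta\|_{\mathrm{op}}}{\lambda_0}\,\Mb .
\]
So $\varepsilon=\|\Delta\|_{\mathrm{op}}/\lambda_0$ is feasible in the infimum defining $\ErrorLift$, and hence $\ErrorLift\le \|\Delta\|_{\mathrm{op}}/\lambda_0$.

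\textbf{Second inequality.} It remains to show $\|\Delta\|_{\mathrm{op}}\le 1$. The proof of Theorem~\ref{thm:liftedFW} shows that
\[
u^\top\Delta u=\Var_{S\sim\hat\pi}\big(u^\top\bar{\ab}_{\thetab_0}(S)\big)
\]
for every $u$. Take a unit vector $u$. The variance is at most the second moment, which is $\EE_{S\sim\hat\pi}\big[(u^\top\bar{\ab}_{\thetab_0}(S))^2\big]\le \max_S\|\bar{\ab}_{\thetab_0}(S)\|_2^2$. Each $\bar{\ab}_{\thetab_0}(S)$ is a convex combination of the $\ab_i$ with weights $p_{\thetab_0}(i|S)$. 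Under Assumption~\ref{assum:bounded_assumption} each $\|\ab_i\|_2\le 1$, so by convexity of the norm $\|\bar{\ab}_{\thetab_0}(S)\|_2\le 1$. Taking the supremum over unit $u$ gives $\|\Delta\|_{\mathrm{op}}\le 1$, which completes the chain.

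\textbf{Main subtlety.} The only delicate point is that Section~\ref{sec:opt_design_MNL} does not itself impose $\|\ab_i\|_2\le 1$. I would state this assumption explicitly as being inherited from Assumption~\ref{assum:bounded_assumption}. Without it, the same argument gives the bound $\max_i\|\ab_i\|_2^2/\lambda_0$ in place of $1/\lambda_0$.
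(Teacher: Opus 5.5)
Your proof is correct and matches the paper's argument. The paper gives only the one-line chain $\ErrorLift \le \|\Mb_{\thetab_0}(\hat\pi)^{-1}\|_{\mathrm{op}}\|\Delta_{\thetab_0}(\hat\pi)\|_{\mathrm{op}} \le \|\Delta_{\thetab_0}(\hat\pi)\|_{\mathrm{op}}/\lambda_0 \le 1/\lambda_0$, so your reading of the garbled display is the intended one. It states the last step $\|\Delta_{\thetab_0}(\hat\pi)\|_{\mathrm{op}}\le 1$ without justification, and your covariance representation combined with $\|\bar{\ab}_{\thetab_0}(S)\|_2\le 1$ supplies exactly that missing step. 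As you rightly flag, that bound needs $\|\ab_i\|_2\le 1$.
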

\begin{proof}[Proof of Proposition~\ref{prop:how_large_ErrorLift}]
    The proof is direct:
    \[
        \ErrorLift
        \le \|\Mb_{\thetab_0}(\hat\pi)^{-1}\|_{\mathrm{op}}\,
        \|\Delta_{\thetab_0}(\hat\pi)\|_{\mathrm{op}}
        \le \frac{1}{\lambda_0}\,
        \|\Delta_{\thetab_0}(\hat\pi)\|_{\mathrm{op}}  \le \frac{1}{\lambda_0},
    \]
which concludes the proof.
\end{proof}
\begin{proposition}[Crude bound for $\ErrorLift$ under the outside-option MNL model]
\label{prop:outside_option_bound_eps_by_ratio}
    Under the outside-option MNL model at $\thetab_0$, the lifting error at the returned design $\hat\pi$ satisfies
    \[
    \ErrorLift\ \le\ \sup_{S\sim\hat\pi}\frac{1-p_0(S)}{p_0(S)}
    \ =\ \sup_{S\sim\hat\pi}\sum_{j\in S}\exp(\ab_j^\top\thetab_0).
    \]
    In particular, if $|S|\le K$ for all $S\in\Scal$ and $\max_{j\in[N]}\ab_j^\top\thetab_0\le B$, then
    \[
    \ErrorLift\ \le\ K e^B.
    \]
    \end{proposition}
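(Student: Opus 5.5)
The plan is to bound the mismatch matrix by the Fisher information pointwise in $S$, and then average over $S\sim\hat\pi$. In the outside-option model the Fisher information at $\thetab_0$ keeps the same centered form, $\Ib_{\thetab_0}(S)=\sum_{i\in S}p_i\ab_i\ab_i^\top-\bar{\ab}(S)\bar{\ab}(S)^\top$. Here $p_i:=p(i|S,\thetab_0)$ and $\bar{\ab}(S)=\sum_{i\in S}p_i\ab_i$. The only difference from before is that $\sum_{i\in S}p_i=1-p_0(S)<1$, because the outside option carries feature $\mathbf 0$.

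I will assume the lifting treats the outside option as the lifted vector $(\mathbf 0^\top,1)^\top$. This keeps the bottom-right block equal to $1$. The first thing to check is that this is the convention under which $\Ib_{\thetab_0}(S)$ is the Schur complement. Under it, the computation in the proof of Theorem~\ref{thm:liftedFW} carries over verbatim and gives
\[
\Delta_{\thetab_0}(\hat\pi)=\EE_{S\sim\hat\pi}\big[\bar{\ab}(S)\bar{\ab}(S)^\top\big]-\bar{\bb}\bar{\bb}^\top\ \preceq\ \EE_{S\sim\hat\pi}\big[\bar{\ab}(S)\bar{\ab}(S)^\top\big].
\]
The inequality holds because we drop the PSD term $\bar{\bb}\bar{\bb}^\top$.

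The key step is a pointwise inequality $\bar{\ab}(S)\bar{\ab}(S)^\top\preceq c(S)\,\Ib_{\thetab_0}(S)$ with $c(S)=(1-p_0(S))/p_0(S)$. Fix $u\in\RR^d$ and apply Cauchy--Schwarz with weights $p_i$:
\[
(u^\top\bar{\ab}(S))^2=\Big(\sum_{i\in S}p_i\,u^\top\ab_i\Big)^2\le(1-p_0(S))\sum_{i\in S}p_i(u^\top\ab_i)^2 .
\]
Rearranging gives $\sum_{i\in S}p_i(u^\top\ab_i)^2\ge (u^\top\bar{\ab}(S))^2/(1-p_0(S))$. Substituting this into the quadratic form of $\Ib_{\thetab_0}(S)$ yields
\[
u^\top\Ib_{\thetab_0}(S)u\ \ge\ (u^\top\bar{\ab}(S))^2\Big(\tfrac{1}{1-p_0(S)}-1\Big)=\tfrac{p_0(S)}{1-p_0(S)}\,(u^\top\bar{\ab}(S))^2 .
\]
This is exactly the claimed pointwise bound.

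Averaging over $S\sim\hat\pi$ then gives
\[
\Delta_{\thetab_0}(\hat\pi)\preceq \EE_{\hat\pi}\big[c(S)\,\Ib_{\thetab_0}(S)\big]\preceq \Big(\sup_{S\in\operatorname{supp}\hat\pi}c(S)\Big)\,\Mb_{\thetab_0}(\hat\pi).
\]
By the definition of $\ErrorLift$ as an infimum, this gives the first inequality of the proposition. The identity $(1-p_0(S))/p_0(S)=\sum_{j\in S}\exp(\ab_j^\top\thetab_0)$ is immediate from \eqref{eq:MNL_prob_bandit}. Finally, $|S|\le K$ and $\ab_j^\top\thetab_0\le B$ give the bound $Ke^B$.

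The only real obstacle is making sure the lifted block structure in the outside-option model indeed has unit bottom-right entry and the stated Schur complement. If instead the bottom-right entry were $1-p_0(S)$, I would redo the block-inverse step with that scalar. I expect the same Cauchy--Schwarz argument to go through.
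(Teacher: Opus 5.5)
Your proposal is correct and follows the paper's proof essentially step for step. You drop $\bar{\bb}\bar{\bb}^\top$ to get $\Delta_{\thetab_0}(\hat\pi)\preceq\EE_{\hat\pi}[\bar{\ab}(S)\bar{\ab}(S)^\top]$, use the weighted Cauchy--Schwarz inequality with total mass $1-p_0(S)$ to get $\bar{\ab}(S)\bar{\ab}(S)^\top\preceq\frac{1-p_0(S)}{p_0(S)}\Ib_{\thetab_0}(S)$, and then average and take the supremum. The paper reaches the same constant via $\Ib_{\thetab_0}(S)\succeq p_0(S)\bar{\Ab}(S)$, whereas you substitute directly into the quadratic form. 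Your lifting convention (outside option lifted to $(\mathbf 0^\top,1)^\top$, unit bottom-right block) is exactly the one the paper uses implicitly, since it defines $\Delta(\hat\pi)$ as the covariance of $\bar{\ab}(S)$ under $\hat\pi$, so the hedge in your last paragraph is unnecessary.
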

    \begin{proof} [Proof of Proposition~\ref{prop:outside_option_bound_eps_by_ratio}]
    For any $S\in\Scal$, define $w_i:=\exp(\ab_i^\top\thetab_0)$ and the outside-option probabilities
    \[
    p_0(S):=\frac{1}{1+\sum_{j\in S}w_j},
    \qquad
    p_i(S):=\frac{w_i}{1+\sum_{j\in S}w_j}\quad(i\in S).
    \]
    Let
    \[
    \bar{\ab}(S):=\sum_{i\in S}p_i(S)\ab_i,\qquad
    \bar{\Ab}(S):=\sum_{i\in S}p_i(S)\ab_i \ab_i^\top,\qquad
    \Ib(S):=\bar{\Ab}(S)-\bar{\ab}(S) \bar{\ab}(S)^\top\succeq 0.
    \]
    Moreover, we define
    \[
    \Mb(\hat\pi):=\EE_{S\sim\hat\pi}[\Ib(S)],\qquad
    \Delta(\hat\pi):= 
    \EE_{S \sim \pi}
    \left[ \bar{\ab}(S) \bar{\ab}(S)^\top\right]
    - \EE_{S \sim \pi}\left[ \bar{\ab}(S) \right] \EE_{S \sim \pi}\left[ \bar{\ab}(S) \right]^\top
    \succeq 0,
    \]
    and
    $\ErrorLift:=\inf\big\{\varepsilon\ge 0:\ \Delta(\hat\pi)\preceq \varepsilon\,\Mb(\hat\pi)\big\}.$
    Then, by the PSD Cauchy--Schwarz inequality,
    \[
    \bar{\ab}(S) \bar{\ab}(S)^\top
    =\Big(\sum_{i\in S}p_i(S)\ab_i\Big)\Big(\sum_{i\in S}p_i(S)\ab_i\Big)^\top
    \preceq
    \Big(\sum_{i\in S}p_i(S)\Big)\Big(\sum_{i\in S}p_i(S)\ab_i \ab_i^\top\Big)
    =(1-p_0(S))\,\bar{\Ab}(S).
    \]
    Rearranging yields $\Ib(S)=\bar{\Ab}(S)-\bar{\ab}(S) \bar{\ab}(S)^\top\succeq p_0(S)\,\bar{\Ab}(S)$, hence
    $\bar{\Ab}(S)\preceq \frac{1}{p_0(S)}\Ib(S)$ and therefore
    \[
    \bar{\ab}(S) \bar{\ab}(S)^\top\ \preceq\ (1-p_0(S))\bar{\Ab}(S)\ \preceq\ \frac{1-p_0(S)}{p_0(S)}\,\Ib(S).
    \]
    Since $\Delta(\hat\pi)=\EE_{S \sim \hat{\pi}}[\bar{\ab}(S) \bar{\ab}(S)^\top]-
    \EE_{S \sim \hat{\pi}}[\bar{\ab}(S)] \Big(\EE_{S \sim \hat{\pi}}[\bar{\ab}(S)]\Big)^\top \!\!\preceq\, \EE[\bar{\ab}(S) \bar{\ab}(S)^\top]$, it follows that
    \[
    \Delta(\hat\pi)\ \preceq\ \EE_{S\sim\hat\pi}\Big[\frac{1-p_0(S)}{p_0(S)}\,\Ib(S)\Big].
    \]
    Let $\rho(S):=\frac{1-p_0(S)}{p_0(S)}$. Then
    \[
    \Delta(\hat\pi)\ \preceq\ \EE[\rho(S)\Ib(S)]
    \ \preceq\ \Big(\sup_{S\sim\hat\pi} \rho(S)\Big)\,\EE[\Ib(S)]
    =\Big(\sup_{S\sim\hat\pi} \rho(S)\Big)\,\Mb(\hat\pi),
    \]
    which implies $\ErrorLift\le \sup_{S\sim\hat\pi}\rho(S)$ by the definition of $\ErrorLift$.
    The specialization follows from $\rho(S)=\sum_{j\in S}\exp(\ab_j^\top\thetab_0)\le |S|e^B\le K e^B$.
    Combining this with Proposition~\ref{prop:how_large_ErrorLift}, we conclude the proof.
\end{proof}

\section{Supplementary Material for Section~\ref{sec:BAI_MNL}}
\label{app:sec:BAI_MNL}
\subsection{Proof of Theorem~\ref{thm:BSI_MNL}}
\label{app_subsec:proof_thm:BSI_MNL}
We begin by stating an extended version of Theorem~\ref{thm:BSI_MNL}, which additionally yields a potentially sharper bound in the uniform revenue setting ($r_i \equiv 1$), particularly when the assortment size $K$ is large.
In what follows, we prove Theorem~\ref{thm:BSI_MNL_extended} rather than Theorem~\ref{thm:BSI_MNL}, since the former is more comprehensive and subsumes the latter.
\begin{theorem}[Sample complexity, extended version of Theorem~\ref{thm:BSI_MNL}]
\label{thm:BSI_MNL_extended}
    Let $\delta \in (0,1]$,
    $\WarmupRounds = \widetilde{\Omega}\big(  B^2 \kappa^{-1} d^2
    \log(N/\delta) \big)$,
    and
    $\lambda = 1$.
    Suppose that Assumption~\ref{assum:bounded_assumption} holds.
    Using the stopping criterion in~\eqref{eq:stopping_condtion_app},
    with probability at least $1-\delta$,
    Algorithm~\ref{alg:MNL_BAI}, when implemented with the MILP solver described in Subsection~\ref{subsec:lmo_milp}, returns the optimal assortment $S^\star$,
    and the total number of samples collected up to the stopping time $\tau$ satisfies
    \begin{align*}
    \tau =
        \begin{cases}
        \BigOTilde\!\left(
            d \log\!\left(N/\delta\right)
            \left(
                \frac{1}{\Gap^2}
                + \frac{1}{\kappa\,\Gap}
            \right)
            + \WarmupRounds
        \right)
        &\qquad \textup{(Non-uniform $r_i$, MILP solver)},\\[2mm]
        \BigOTilde\!\left(
            d \log(N/\delta)
            \left(
            \min \left\{ 1, \frac{e^{B}}{K} \right\}\frac{1}{\Gap^2}
            + \frac{1}{\kappa \Gap}
            \right)
            + \WarmupRounds
            \right),
        &\qquad \textup{(Uniform $r_i$, MILP solver )}
        \end{cases}
    \end{align*}
\end{theorem}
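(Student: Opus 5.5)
The proof has three parts: a good event on which all confidence bounds hold, correctness of the returned assortment on that event, and a bound on the stopping time obtained from the G-optimal design guarantee and a revenue sensitivity argument.

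\textbf{Warm-up and good event.} I would first bound the length of the warm-up phase. A standard elliptical-potential argument applied to $\Vb_t$ shows that $\max_i\|\ab_i\|_{\Vb_t^{-1}}\le\WarmupThreshold$ holds after $\widetilde{\mathcal O}(d/\WarmupThreshold^2)=\widetilde{\mathcal O}(\kappa^{-1}d^2\log(N/\delta)B^2)$ rounds, which matches $\WarmupRounds$. Since $\Hb_t(\thetab^\star)\succeq\kappa\Vb_t$, condition~\eqref{eq:warmup_condition_main} then holds with margin.

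Next I apply Lemma~\ref{lemma:estimation_error_main} twice. The first application uses $\Dcal_w'$ and gives $|\ab_i^\top(\thetab_0-\thetab^\star)|\le\beta\|\ab_i\|_{\Hb(\thetab^\star)^{-1}}=\mathcal O(1)$ uniformly in $i$. Consequently $p(i|S,\thetab_0)$ and $p(i|S,\thetab^\star)$ agree up to constant factors, and $\Hb_t(\thetab_0)\succeq c\,\Hb_t(\thetab^\star)$. This comparison is what justifies the factor $\sqrt2$ in $u^\pm$. The second application covers the main-phase data. Because $\thetab_0$ is independent of $\Dcal_w$, the assortments sampled from $\widehat\pi_{\thetab_0}$ are independent of the choices, so the lemma holds at every $t$ after a union bound over a geometric grid of times. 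On the resulting event $\mathcal E$, with probability at least $1-\delta$, we have $u^-_{t,i}\le\ab_i^\top\thetab^\star\le u^+_{t,i}$ for all $i,t$.

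\textbf{Correctness.} The revenue $R(S,\cdot)$ is not monotone in the utilities when revenues are non-uniform. I would handle this with the standard MNL fact that at the optimal assortment, raising the utility of any included item with $r_i\ge R^\star$ only increases revenue. From this I would derive $\widecheck R_t(S)\le R(S,\thetab^\star)\le\widetilde R_t(S)$ for the relevant sets; this is the content of Lemma~\ref{lemma:S_tau_on_stopping}, and the stopping rule~\eqref{eq:stop_rule} then forces $\BestS_\tau=S^\star$.

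\textbf{Sample complexity.} By a second-order Taylor bound on $R$, with $\kappa$ controlling the curvature,
\[\widetilde R_t(S)-\widecheck R_t(S)\lesssim\beta\sum_{i\in S}p(i|S,\thetab^\star)\|\ab_i-\bar\ab\|_{\Hb_t^{-1}}+\kappa^{-1}\beta^2\max_i\|\ab_i\|^2_{\Hb_t^{-1}}.\]
By Cauchy--Schwarz, the first-order term is at most $\beta\sqrt{\tr(\Hb_t^{-1}\Ib_{\thetab^\star}(S))}$. After $n$ design rounds, $\Hb_t(\thetab_0)\succeq n\,\Mb_{\thetab_0}(\widehat\pi)$ up to a matrix-concentration factor, using that the $\Ib(S)$ are bounded. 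Proposition~\ref{prop:stopping_time_inexact_LMO} with Proposition~\ref{prop:KW_MNL} gives $g_{\thetab_0}(\widehat\pi)\le(1+\epsilon)d$. Together these bound the first-order width by $\beta\sqrt{d/n}$ and the second-order width by $\kappa^{-1}\beta^2d/n$.

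Stopping is guaranteed once both widths are below $\Gap/4$. Solving for $n$ gives $n\gtrsim d\log(N/\delta)(\Gap^{-2}+(\kappa\Gap)^{-1})$, and adding the warm-up length yields the stated bound. For uniform revenues, I would replace the first-order sensitivity with a bound using $\sum_i p_i\le 1-p_0$, which gives the factor $\min\{1,e^B/K\}$.

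\textbf{Main obstacle.} I expect the hardest step to be the first-order revenue-width bound expressed through $\tr(\Hb^{-1}\Ib(S))$. It requires the centered term $\ab_i-\bar\ab$ rather than $\ab_i$, and it must be proved while the maximizers $\BestS_t$ and $\AltS_t$ depend on the data. I would try to take a uniform supremum over $S\in\Scal$, which the G-optimality guarantee covers because it bounds the maximum over all $S$.
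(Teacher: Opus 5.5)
\textbf{Main gap: the centered first-order bound.} You flag this step as the main obstacle and leave it open, and your proposed fix does not close it. Taking a supremum over $S\in\Scal$ only removes the data dependence of $\BestS_t,\AltS_t$; the paper also does this, at the very end. It does not produce the centering $\ab_i-\bar\ab$.

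From the per-item bound $|\Delta u_i|\lesssim\beta\|\ab_i\|_{\Hb_t^{-1}}$, with first-order coefficients $p_i(r_i-R)$ of either sign, you only get the uncentered sum $\beta\sum_i p_i|r_i-R|\,\|\ab_i\|_{\Hb_t^{-1}}$. This is not controlled by $\tr(\Hb_t^{-1}\Ib(S))$. For example, take two items with equal features, revenues $1$ and $0$, and small $p_0$: then $\Ib(S)=p_0(1-p_0)\ab\ab^\top\approx 0$, while the sum is of order $\|\ab\|_{\Hb_t^{-1}}$.

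The missing idea, from the proof of Lemma~\ref{lemma:revenue_bound}, is a covariance representation:
\begin{itemize}
\item At a maximizer every included item has $r_i\ge R$, since otherwise dropping it raises revenue. So $|\Delta u_i|$ may be replaced by $2\sqrt2\beta x_i$, where $x_i:=\|\ab_i\|_{\Hb_t(\thetab_0)^{-1}}$.
\item Setting $r_0=0$ and $\ab_0=0$ for the outside option gives $\sum_{i\in S}p_i(r_i-R)x_i=\mathrm{Cov}(r,x)\le\sqrt{\Var(r)}\sqrt{\Var(x)}$.
\item By the reverse triangle inequality, $\Var(x)\le\EE[(x-\|\bar\ab\|_{\Hb_t^{-1}})^2]\le\EE\|\ab-\bar\ab\|^2_{\Hb_t^{-1}}=\tr(\Hb_t^{-1}\Ib(S))$.
\item Bounded likelihood ratios (Lemma~\ref{lemma:var_compare_bounded_lr}) move the choice distribution from $\ub^\pm$ to $\thetab^\star$ and then to $\thetab_0$, where the design is computed.
\end{itemize}

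In your width formulation the covariance form is actually automatic, because $u^+_{t,i}-u^-_{t,i}=2\sqrt2\beta\,x_i$ exactly. But you must still justify ``stopping once both widths are below $\Gap/4$''. That needs $\widetilde R_t(\AltS_t)\le R(\AltS_t,\thetab^\star)+[\widetilde R_t(\AltS_t)-\widecheck R_t(\AltS_t)]$, i.e.\ pessimism at $\AltS_t$. Since $R$ is not monotone in the utilities, this again rests on the sign property $r_i\ge\widetilde R_t(\AltS_t)$ for $i\in\AltS_t$.

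\textbf{Two further steps do not deliver what you claim.}

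\emph{Uniform revenues.} The factor $\min\{1,e^B/K\}$ comes from $\Var(r\mid S)=p_0(S)(1-p_0(S))\le p_0(S)\le e^B/K$. This needs two things. First, a first-order bound that keeps $\sqrt{\Var(r\mid S)}$; yours has already discarded it through $|r_i-R|\le 1$. Second, the fact that under uniform revenues $S^\star,\BestS_t,\AltS_t$ all have size exactly $K$. Your hint $\sum_i p_i\le 1-p_0$ produces the factor $1-p_0$, which does not decay with $K$.

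\emph{Second-order term.} The curvature of $\ub\mapsto R$ is bounded by absolute constants times $p_i$ and $p_ip_j$ (Lemma~\ref{lemma:revenue_second_pd}), not by $\kappa^{-1}$. The $\kappa^{-1}$ enters only when $\max_i\|\ab_i\|^2_{\Hb_t^{-1}}$ is tied to the design criterion through singletons, $\|\ab_i\|^2_{\Hb_t(\thetab_0)^{-1}}\le(e/\kappa)\tr(\Hb_t(\thetab_0)^{-1}\Ib_{\thetab_0}(\{i\}))$ (Lemma~\ref{lemma:opt_design}(ii)). Your claim $\max_i\|\ab_i\|^2_{\Hb_t^{-1}}\lesssim d/n$ does not follow from G-optimality for the Fisher information: with one item in $d=1$, $\|\ab\|^2_{\Mb^{-1}}=1/(p_1p_0)$. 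With $\kappa^{-1}$ also placed in the curvature, correct bookkeeping would give $\kappa^{-2}$.

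\textbf{Smaller points.}
\begin{itemize}
\item The $\sqrt2$ in $u^\pm$ requires $\Hb_t(\thetab^\star)\succeq\frac12\Hb_t(\thetab_0)$, which is the opposite direction to the one you state.
\item The stopping rule is checked every round, so the good event must hold at every $t$. That calls for a union bound with $\delta_t\propto\delta/t^2$, not a geometric grid.
\end{itemize}
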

%
\paragraph{Loss and Hessian matrix.}
Let $\Dcal_t =  \{ (i_s, S_s) \}_{s=1}^t$.
Define $\Lcal_{\Dcal_t}(\thetab)$  as the negative log-likelihood of $\thetab$ with respect to data collected up to $t$, 
and the corresponding maximum likelihood estimate (MLE):
\[
    \Lcal_{\Dcal_t}(\thetab)
    :=\sum_{s=1}^{t} \ell_s(\thetab) + \frac{\lambda}{2} \|\thetab \|_2^2
    =
    - \sum_{s=1}^{t} \sum_{i \in S_s} y_{si} \log p(i | S_s, \thetab)
    + \frac{\lambda}{2} \|\thetab \|_2^2
    ,
    \quad
    \text{and }\,
    \widehat{\thetab}_t := \argmin_{\thetab \in \RR^d}  \Lcal_{\Dcal_t}(\thetab).
\]

Define the regularized Hessian matrix at $\thetab$ as follows:
\begin{align*}
    \Hb_t(\thetab) 
    &
    = \Hb_{\Dcal_t}(\thetab)
    := \sum_{s=1}^{t} \nabla^2 \ell_s(\thetab) + \lambda \Ib_d
    = \sum_{s=1}^{t}  \sum_{i \in S_s} p(i|S_s, \thetab) 
    \left( \ab_i - \bar{\ab}_{\thetab}(S_s) \right)
    \left( \ab_i -  \bar{\ab}_{\thetab}(S_s)  \right)^\top
    + \lambda \Ib_d
    \\
    &=  \sum_{s=1}^{t}
    \left( 
        \sum_{i \in S_s} p(i|S_s, \thetab) \ab_i \ab_i^\top
        - \sum_{i \in S_s} \sum_{j \in S_s} p(i|S_s, \thetab) p(j|S_s, \thetab)  \ab_i \ab_j^\top
    \right)
    + \lambda \Ib_d
    .
\end{align*}
%
\paragraph{Confidence bound and good event.}
The following proposition provides a high-probability estimation error bound for the regularized MLE and guarantees local stability of the empirical Hessian under a sufficiently small uncertainty condition.
\begin{lemma} [Extended version of Lemma~\ref{lemma:estimation_error_main}, Theorem~1 of~\citealt{han2026improved}]
\label{lemma:estimation_error}
    Given $\Dcal := \{ (i_t, S_t) \}_{t=1}^T$, let $\widehat{\thetab}_{\Dcal}$ be the (unconstrained)
    $\lambda$-regularized MLE under $\Dcal$, i.e.,
    $\widehat{\thetab}_{\Dcal} \in \argmin_{\thetab \in \RR^d} \Lcal_{\Dcal}(\thetab)$, where
     $\Lcal_{\Dcal}(\thetab):= \sum_{t=1}^T \ell_t(\thetab) + \frac{\lambda}{2} \|\thetab \|_2^2$.
    Assume that, conditional on $\{S_t\}_{t \in [T]}$, 
    the observed choices
    $\{i_t\}_{t \in [T]}$ are mutually independent and 
    that Assumption~\ref{assum:bounded_assumption} holds.
    Define $\Hb_{\Dcal}(\thetab) := \sum_{t \in [T]} \nabla^2 \ell_t (\thetab) + \lambda \Ib_d$.
    Suppose that the event
    \[
        64 
        \max_{s \leq t, i \in S_s} \|\ab_i \|_{\Hb_{\Dcal}(\thetab^\star)^{-1}}
        \;\le\;
        \frac{1}{\sqrt{d \log (N/\delta)}}
        \;\wedge\;
        \frac{1}{\sqrt{\lambda}\,B}
    \]
    holds. 
    Then, with probability at least $1-\delta$, we have
    \begin{enumerate}[label=(\roman*)]
        \item 
        $ \frac{1}{2}\,\Hb_{\Dcal}(\thetab^\star)
        \;\preceq\;
        \Hb_{\Dcal}(\widehat{\thetab}_{\Dcal})
        \;\preceq\;
        2\,\Hb_{\Dcal}(\thetab^\star).$

        \item $\left| \ab_i^\top (\widehat{\thetab}_{\Dcal} - \thetab^\star)  \right|
            \leq \| \ab_i  \|_{\Hb_{\Dcal}(\thetab^\star)^{-1}}
            \left(
                36 \sqrt{\log (N/\delta)}
                + 64 \sqrt{\lambda} B
            \right), \quad \forall i \in [N].$
    \end{enumerate}
\end{lemma}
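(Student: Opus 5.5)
The plan is a one-step Newton (linearization) analysis around $\thetab^\star$. Write $\Hb:=\Hb_{\Dcal}(\thetab^\star)$ and $\bm{g}:=\nabla\Lcal_{\Dcal}(\thetab^\star)$, so that
\[
\bm{g}=\sum_{t}\sum_{i\in S_t}\big(p(i|S_t,\thetab^\star)-y_{ti}\big)\ab_i+\lambda\thetab^\star .
\]
Since $\nabla\Lcal_{\Dcal}(\widehat\thetab_{\Dcal})=0$, the mean-value form gives $\widetilde{\Hb}(\widehat\thetab_{\Dcal}-\thetab^\star)=-\bm{g}$, where $\widetilde{\Hb}:=\int_0^1\Hb_{\Dcal}(\thetab^\star+s(\widehat\thetab_{\Dcal}-\thetab^\star))\,ds$. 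The tool used throughout is the generalized self-concordance of the MNL loss: if $\max_{s\le t,\,i\in S_s}|\ab_i^\top(\thetab-\thetab^\star)|\le r$, then $e^{-cr}\Hb\preceq\Hb_{\Dcal}(\thetab)\preceq e^{cr}\Hb$ for an absolute constant $c$. This holds because each summand $p_i p_j$-type weight changes by at most a factor $e^{O(r)}$.

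\textbf{Step 1 (coarse localization).} I first show that $\|\widehat\thetab_{\Dcal}-\thetab^\star\|_{\Hb}\lesssim\sqrt{d\log(N/\delta)}+\sqrt\lambda B$ with probability $1-\delta/2$.
\begin{itemize}
\item The noise part of $\bm g$ is a sum of conditionally independent, mean-zero vectors whose conditional covariance is exactly the Fisher information. A vector Bernstein bound (or a covering of the $\Hb$-unit sphere) therefore gives $\|\bm g\|_{\Hb^{-1}}\lesssim\sqrt{d\log(1/\delta)}+\sqrt\lambda B$.
\item To turn this into a bound on $\widehat\thetab_{\Dcal}$, I use a continuity argument. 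Consider the convex loss on the set $\{\thetab:\|\thetab-\thetab^\star\|_{\Hb}\le 2\|\bm g\|_{\Hb^{-1}}\}$. For $\thetab$ in this set, $|\ab_i^\top(\thetab-\thetab^\star)|\le\|\ab_i\|_{\Hb^{-1}}\|\thetab-\thetab^\star\|_{\Hb}\le 1/32$ by the warm-up condition \eqref{eq:warmup_condition_main}.
\item Hence the Hessian is within a constant factor of $\Hb$ on the whole ball. By convexity the minimizer must lie inside, since the loss on the boundary exceeds $\Lcal_{\Dcal}(\thetab^\star)$.
\item This immediately yields (i), namely $\tfrac12\Hb\preceq\Hb_{\Dcal}(\widehat\thetab_{\Dcal})\preceq 2\Hb$, and also $\tfrac12\Hb\preceq\widetilde\Hb\preceq2\Hb$.
\end{itemize}

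\textbf{Step 2 (per-arm refinement).} For a fixed $i$ I decompose
\[
\ab_i^\top(\widehat\thetab_{\Dcal}-\thetab^\star)=-\ab_i^\top\Hb^{-1}\bm g+\ab_i^\top\Hb^{-1}(\Hb-\widetilde\Hb)(\widehat\thetab_{\Dcal}-\thetab^\star),
\]
and bound the three resulting pieces as follows.
\begin{itemize}
\item \emph{Noise part of the first term.} Because $\Hb$ is deterministic given $\{S_t\}$, this is a scalar sum of conditionally independent bounded terms with variance at most $\|\ab_i\|_{\Hb^{-1}}^2$. Each increment is bounded by $2\max_{s,j}\|\ab_j\|_{\Hb^{-1}}\|\ab_i\|_{\Hb^{-1}}$, which the warm-up condition keeps small. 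Bernstein plus a union bound over $i\in[N]$ then gives $O(\sqrt{\log(N/\delta)})\,\|\ab_i\|_{\Hb^{-1}}$.
\item \emph{Regularization part of the first term.} By Cauchy--Schwarz, $\lambda|\ab_i^\top\Hb^{-1}\thetab^\star|\le\sqrt\lambda B\|\ab_i\|_{\Hb^{-1}}$.
\item \emph{Remainder.} Self-concordance gives the sharper form $-\epsilon\Hb\preceq\Hb-\widetilde\Hb\preceq\epsilon\Hb$ with $\epsilon\lesssim\max_{s,j}|\ab_j^\top(\widehat\thetab_{\Dcal}-\thetab^\star)|$. Therefore the remainder is at most $\epsilon\,\|\ab_i\|_{\Hb^{-1}}\|\widehat\thetab_{\Dcal}-\thetab^\star\|_{\Hb}$. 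Using Step 1 twice, $\epsilon\lesssim 64^{-1}(d\log)^{-1/2}\cdot\sqrt{d\log}$, so the remainder is a small constant times $\big(\sqrt{d\log(N/\delta)}+\sqrt\lambda B\big)/\big(\sqrt{d\log(N/\delta)}\wedge\lambda^{-1/2}B^{-1}\big)^{-1}\cdots$, i.e. $O\big(\sqrt{\log(N/\delta)}+\sqrt\lambda B\big)\|\ab_i\|_{\Hb^{-1}}$.
\end{itemize}
Collecting constants gives (ii). A cleaner alternative for the remainder is a bootstrap: set $\rho:=\max_i|\ab_i^\top(\widehat\thetab_{\Dcal}-\thetab^\star)|/\|\ab_i\|_{\Hb^{-1}}$, obtain an inequality $\rho\le C_1+\tfrac12\rho$ from the decomposition, and solve for $\rho$.

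The main obstacle is making the remainder term dimension-free. A naive bound costs an extra $\sqrt d$, so the argument hinges on the warm-up condition exactly cancelling the $\sqrt{d\log(N/\delta)}$ from Step 1. This needs either the quadratic self-concordance remainder or a leave-one-out construction; I would try the quadratic-remainder bootstrap first.
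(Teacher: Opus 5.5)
The paper does not prove this lemma; it imports it verbatim as Theorem~1 of Han et al.\ (2026), so your proposal has to stand on its own. The overall architecture is sound: coarse $\Hb$-norm localization by convexity plus self-concordance, then the one-step decomposition with a per-arm Bernstein bound and a union bound over $[N]$. Your handling of the noise and regularization parts of $-\ab_i^\top\Hb^{-1}\bm g$ is also correct. The gap is in the remainder, which is exactly where dimension-freeness is decided. Write $v:=\widehat{\thetab}_{\Dcal}-\thetab^\star$, $\xi:=\max_{s,\,j\in S_s}\|\ab_j\|_{\Hb^{-1}}$, and $R\asymp\sqrt{d\log(N/\delta)}+\sqrt{\lambda}B$ for the Step~1 radius. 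Using Step~1 for both factors, as you do, gives $\epsilon\lesssim\xi R$ and a remainder of order $\xi R^2\,\|\ab_i\|_{\Hb^{-1}}$. The warm-up condition only guarantees $\xi R=O(1)/64$. The remainder is therefore of order $R\,\|\ab_i\|_{\Hb^{-1}}/64\asymp\sqrt{d\log(N/\delta)}\,\|\ab_i\|_{\Hb^{-1}}$: the warm-up cancels one factor of $R$, not two. Your concluding ``i.e.\ $O(\sqrt{\log(N/\delta)}+\sqrt\lambda B)$'' does not follow, and as written Step~2 reproduces the $\sqrt d$ you yourself identified as the obstacle.

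The bootstrap you mention at the end does close the gap, but only with a specific allocation of the two factors, and it should be the main argument.
\begin{itemize}
\item Control the self-concordance factor by $\rho:=\max_{i\in[N]}|\ab_i^\top v|/\|\ab_i\|_{\Hb^{-1}}$ rather than by Step~1. Only offered arms enter the Hessians, so $\max_{s,\,j\in S_s}|\ab_j^\top v|\le\rho\,\xi$.
\item Step~1 already bounds this quantity by the absolute constant $\xi R$, so the exponential self-concordance factor can be linearized. This gives $\|\Hb^{-1/2}(\Hb-\widetilde{\Hb})\Hb^{-1/2}\|_{\mathrm{op}}\le c\,\rho\,\xi$.
\item Use Step~1 only for $\|v\|_{\Hb}\le R$. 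The remainder is then at most $c\,\xi R\,\rho\,\|\ab_i\|_{\Hb^{-1}}$, and the warm-up makes $c\,\xi R\le 1/2$.
\item Divide by $\|\ab_i\|_{\Hb^{-1}}$ and maximize over $i$ to get $\rho\le C_1+\rho/2$ with $C_1=O(\sqrt{\log(N/\delta)}+\sqrt\lambda B)$. Since $\rho<\infty$ trivially, $\rho\le 2C_1$.
\end{itemize}
This is precisely why a warm-up at scale $1/\sqrt{d\log(N/\delta)}$ suffices.

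Two constant-level points also need fixing. First, the ball of radius $2\|\bm g\|_{\Hb^{-1}}$ is too small for the boundary argument. Let $r:=\max_{s,\,j\in S_s}|\ab_j^\top v|$. The integral-form Taylor remainder (cf.\ Lemma~\ref{lemma:hessian_usedful}) gives the lower bound $-\|\bm g\|_{\Hb^{-1}}\|v\|_{\Hb}+\|v\|_{\Hb}^2/(2+3\sqrt2\,r)$, which is negative at that radius for every $r>0$. Take a radius such as $3\|\bm g\|_{\Hb^{-1}}$ instead. Second, the warm-up gives $\xi R\le O(1)/64$, not $1/32$. Obtaining exactly the factor $2$ in (i) and the constants $36$ and $64$ in (ii) requires tracking the concentration constants explicitly.
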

Define the confidence bound
\begin{align}
    \beta(\delta)
    :=
    36 \sqrt{\log \frac{N}{\delta}}
    \;+\;
    64 \sqrt{\lambda}\, B .
    \label{eq:beta}
\end{align}

Let $\WarmupRounds$ denote the number of initial warm-up agent–environment interactions.
For each $t > \WarmupRounds$, we define a \textit{good event}
\begin{align}
    \Ecal
    :=
    \bigcap_{t > \WarmupRounds}\Ecal_t,
    \quad
    \text{where }\,
    \Ecal_t
        :=
        \left\{
        \forall i\in[N],\ 
        \bigl|\ab_i^\top(\widehat{\thetab}_t-\thetab^\star)\bigr|
        \le
        \beta(\delta)
        \|\ab_i\|_{\Hb_{t}(\thetab^\star)^{-1}}
        \right\},
    .
    \label{eq:event}
\end{align}

\begin{lemma}[Bounded length of the exploration phase, Lemma 1 of~\citealt{han2026improved}]
\label{lemma:exploration_phase_length}
In Algorithm~\ref{alg:MNL_BAI}, there exists an absolute constant $c_0>0$ such that the exploration phase terminates after at most
    $c_0 \kappa^{-1} d
    \bigl( \sqrt{d \log(N/\delta)} \vee B \sqrt{\lambda} \bigr)^2
    \log(d N B \lambda/\delta)$
iterations. 
Moreover, it holds that
\[
    256 \max_{i \in [N]}
    \bigl\| \ab_i \bigr\|_{  \Hb_t(\thetab^\star)^{-1} }
    \le
    \frac{1}{\sqrt{d \log(N/\delta)}} \wedge \frac{1}{B \sqrt{\lambda}}.
\]
\end{lemma}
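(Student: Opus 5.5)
The plan is the standard elliptical-potential/volume argument for a warm-up loop that adds rank-one terms whenever some direction has large $\Vb^{-1}$-norm, followed by transferring the bound from $\Vb_t$ to $\Hb_t(\thetab^\star)$ through $\kappa$.

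\textbf{Step 1 (length bound).} Let $t_w$ be the number of rounds in which the while-loop of Algorithm~\ref{alg:MNL_BAI} runs. In each such round $s$ the chosen $S_s$ contains an item $i_s$ with $\|\ab_{i_s}\|_{\Vb_{s-1}^{-1}}>\WarmupThreshold$. Since $\Vb_s\succeq \Vb_{s-1}+\ab_{i_s}\ab_{i_s}^\top$, the matrix determinant lemma gives
\[
\log\det \Vb_s-\log\det\Vb_{s-1}\ge \log\bigl(1+\|\ab_{i_s}\|^2_{\Vb_{s-1}^{-1}}\bigr)\ge \log(1+\WarmupThreshold^2).
\]
Summing over the $t_w$ rounds and using $\det\Vb_{t_w}\le (\lambda+t_wK/d)^d$ (because $\|\ab_i\|\le1$) yields
\[
t_w\log(1+\WarmupThreshold^2)\le d\log\bigl(1+t_wK/(d\lambda)\bigr).
\]
Because $\WarmupThreshold\le 1$ for $\lambda=1$ (up to a harmless rescaling), we have $\log(1+\WarmupThreshold^2)\ge \WarmupThreshold^2/2$. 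Hence
\[
t_w\lesssim d\,\WarmupThreshold^{-2}\log(1+t_wK/d).
\]
Plugging in $\WarmupThreshold^{-2}=256^2\kappa^{-1}\bigl(\sqrt{d\log(N/\delta)}\vee\sqrt\lambda B\bigr)^2$ and solving the implicit inequality (a standard $x\le a\log(1+bx)\Rightarrow x\lesssim a\log(ab)$ argument) gives the claimed bound $c_0\kappa^{-1}d(\sqrt{d\log(N/\delta)}\vee B\sqrt\lambda)^2\log(dNB\lambda/\delta)$. The $\log K$ is absorbed since $K\le N$.

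\textbf{Step 2 (norm bound).} At termination, $\max_i\|\ab_i\|_{\Vb_t^{-1}}\le\WarmupThreshold$. We then need $\Hb_t(\thetab^\star)\succeq\kappa\Vb_t$. Per round, the Fisher information under the outside-option model satisfies
\[
\Ib_{\thetab^\star}(S)\succeq p_0(S)\sum_{i\in S}p(i|S,\thetab^\star)\ab_i\ab_i^\top .
\]
This is exactly the PSD Cauchy--Schwarz argument from the proof of Proposition~\ref{prop:outside_option_bound_eps_by_ratio}, applied at $\thetab^\star$. By definition~\eqref{eq:kappa} it is then $\succeq\kappa\sum_{i\in S}\ab_i\ab_i^\top$. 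The regularizer satisfies $\lambda\Ib_d\succeq\kappa\lambda\Ib_d$ since $\kappa\le1$. Hence
\[
\|\ab_i\|_{\Hb_t(\thetab^\star)^{-1}}\le\kappa^{-1/2}\|\ab_i\|_{\Vb_t^{-1}}\le \tfrac{1}{256}\Bigl(\tfrac{1}{\sqrt{d\log(N/\delta)}}\wedge\tfrac{1}{\sqrt\lambda B}\Bigr),
\]
which is the second claim. Here $\Hb_t$ is built from $\Dcal_w$, and the same bound holds for $\Dcal_w'$ because both datasets use the same assortments.

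\textbf{Main obstacle.} The delicate point is Step 1's constant bookkeeping: solving the implicit logarithmic inequality cleanly, and accounting for each round offering $S_t$ twice. The doubling only changes constants, so it is absorbed in $c_0$.
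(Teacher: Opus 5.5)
Your Step~2 is exactly the argument the paper relies on (the remark in Step~1 of Subsection~\ref{subsec:algorithm}). $\WarmupThreshold$ is calibrated so that $\kappa^{-1/2}\WarmupThreshold=\frac{1}{256}\bigl(\frac{1}{\sqrt{d\log(N/\delta)}}\wedge\frac{1}{\sqrt{\lambda}B}\bigr)$, and $\Hb_t(\thetab^\star)\succeq\kappa\Vb_t$ follows from the PSD Cauchy--Schwarz bound as you show. Add that $\Hb_t(\thetab^\star)$ is nondecreasing in $t$, so the bound persists through the main rounds; that is how it is used later.

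The paper does not prove the length bound; it imports it by citation. Your elliptical-potential count is the natural route, but its last step does not give what you claim. With $a=2d\WarmupThreshold^{-2}$ and $b=K/(d\lambda)$, solving $t_w\le a\log(1+bt_w)$ yields a factor of order
\[
\log(1+ab)=\log\Bigl(1+2\cdot 256^2\,\kappa^{-1}K\bigl(\tfrac{d\log(N/\delta)}{\lambda}\vee B^2\bigr)\Bigr),
\]
which contains $\log(1/\kappa)$. You absorbed $\log K$ but dropped this term. It cannot be absorbed into $\log(dNB\lambda/\delta)$ with an absolute $c_0$: $1/\kappa$ can be of order $K^2e^{3B}$, so $\log(1/\kappa)$ can be of order $B$ rather than $\log B$.

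This is not slack in your bookkeeping; it is intrinsic when $S_t$ is chosen arbitrarily. Take $d=1$, $\lambda=1$, $\thetab^\star=B$, and features $\pm1$, which give $\kappa\le e^{-3B}$. Add positive features with squared norms $2^{-1},\dots,2^{-J}$, where $2^{-J}\le\WarmupThreshold^2$. Let $S_t$ be the singleton of the smallest-norm item that passes the test. Then each round multiplies $\Vb_t$ by at most $1+2\WarmupThreshold^2$, so the loop lasts at least $\log(\WarmupThreshold^{-2})/(2\WarmupThreshold^{2})\gtrsim B\,\WarmupThreshold^{-2}$ rounds. Meanwhile $N=J+2=O(\log\WarmupThreshold^{-2})$, so the claimed bound is only of order $\WarmupThreshold^{-2}\log(B/\delta)$.

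What your argument correctly proves (for $\lambda=1$) is the bound with $\log(dNB\lambda/\delta)$ replaced by a factor of order $\log\bigl(dNB/(\kappa\delta)\bigr)$. That suffices for the paper, since $\WarmupRounds$ enters Theorem~\ref{thm:BSI_MNL} only through $\widetilde{\Omega}(\cdot)$. You should state your conclusion in that form rather than assert the displayed one.
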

Lemma~\ref{lemma:exploration_phase_length} guarantees that choosing 
$\WarmupRounds = \widetilde{\Omega}\Big(  \kappa^{-1} d
    \bigl( \sqrt{d \log(N/\delta)} \vee B \sqrt{\lambda} \bigr)^2\Big)$ 
ensures that, for all $t > \WarmupRounds$,
the conditions of Lemma~\ref{lemma:estimation_error} hold; hence Lemma~\ref{lemma:estimation_error} applies and the event in~\eqref{eq:event} occurs.

\paragraph{Optimal design.}
After the warm-up rounds, we compute the warm-up parameter as
\[
    \thetab_0 \leftarrow \argmin_{\thetab \in \RR^d} \Lcal_{\Dcal_w'}(\thetab),
\]
where $\Dcal_w'$ contains the same assortments as $\Dcal_w$ (which will be continually updated during the main rounds), but with independently generated choice feedback for each assortment.

Then, given the warm-up parameter $\thetab_0$, we compute the optimal design $\widehat{\pi}_{\thetab_0}$ by running the Frank–Wolfe algorithm (Algorithm~\ref{alg:FW_MNL}).
Note that this design can be computed either using a \textit{$0$–$1$ MILP solver} (as in Subsection~\ref{subsec:lmo_milp}) or via a relaxed but polynomial-time \textit{lifted G-optimal design} formulation (as in Subsection~\ref{subsec:lifted_lmo_mnl}).

\paragraph{Optimistic and pessimistic revenue.}
Recall the warm-up parameter $\thetab_0 \leftarrow \argmin_{\thetab \in \RR^d} \Lcal_{\Dcal_w'}(\thetab)$.
Then, we define the optimistic/pessimistic utilities at time $t$ as:
\begin{align*}
    u^+_{t,i}=\ab_i^\top\widehat{\thetab}_{t} + \sqrt{2} \beta(\delta) \|\ab_i\|_{\Hb_{t}(\thetab_0)^{-1}},
    \qquad
    u^-_{t,i}=\ab_i^\top\widehat{\thetab}_{t} - \sqrt{2} \beta(\delta) \|\ab_i\|_{\Hb_{t}(\thetab_0)^{-1}}.
\end{align*}

Moreover, for any $S \in \Scal$, define the optimistic/pessimistic expected revenues at time $t$ as:
\begin{align*}
    \widetilde R_t(S)
    :=
    \frac{\sum_{i\in S} \exp(u^+_{t,i}) r_i }{1+\sum_{j\in S} \exp(u^+_{t,j})},
    \qquad
    \widecheck{R}_t(S)
    :=
    \frac{\sum_{i\in S} \exp(u^-_{t,i}) r_i }{1+\sum_{j\in S} \exp(u^-_{t,j})}.
\end{align*}

At each round $t$, we select the pessimistic best assortment and the optimistic best alternative as
\[
\BestS_t \in \argmax_{S \in \Scal} \widecheck R_t(S),
\qquad
\AltS_t \in \argmax_{S \in \Scal:\, S \neq \BestS_t} \widetilde R_t(S).
\]

\begin{lemma}[Optimistic/pessimistic revenue gaps $\BestS_t$ and $\AltS_t$]
\label{lemma:revenue_bound}
    For any $t> \WarmupRounds$, let
    $\BestS_t \in \argmax_{S \in \Scal} \widecheck{R}_t(S)$
    and
    $\AltS_t \in \argmax_{S \in \Scal, S\neq \BestS_t} \widetilde R_t(S)$
    .
    On the event $\Ecal$ in~\eqref{eq:event}, we have
    \[
        \big|  R(\BestS_t,\thetab^\star) - \widecheck R_t(\BestS_t) \big|
        \leq
        2\sqrt{2} e^{9/4} \beta (\delta)
        \sqrt{\Var (r_i | \BestS_t, \thetab^\star)}
        \sqrt{\max_{S \in \Scal}\tr \big(
            \Hb_{t}(\thetab_0)^{-1} \Ib_{\thetab_0}(S)
        \big)}
        + 20\beta(\delta)^2 \max_{i \in \BestS_t}
        \|\ab_i\|_{\Hb_t(\thetab_0)^{-1}}^2
        ,
    \]
    and
    \[
        \big| R(\AltS_t,\thetab^\star) -  \widetilde R_t(\AltS_t)  \big|
        \leq
        2\sqrt{2} e^{9/4} \beta (\delta)
        \sqrt{\Var (r_i | \AltS_t, \thetab^\star)}
        \sqrt{\max_{S \in \Scal}\tr \big(
            \Hb_{t}(\thetab_0)^{-1} \Ib_{\thetab_0}(S)
        \big)}
        + 20\beta(\delta)^2 \max_{i \in \AltS_t}
        \|\ab_i\|_{\Hb_t(\thetab_0)^{-1}}^2.
    \]
\end{lemma}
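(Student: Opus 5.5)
The plan is to treat $\BestS_t$ and $\AltS_t$ in the same way. Fix an arbitrary $S\in\Scal$ and bound $|R(S,\thetab^\star)-\widecheck R_t(S)|$, and likewise with $\widetilde R_t$. Write $R(S,\thetab)$ as a function of the utility vector $u=(u_i)_{i\in S}$, namely $\phi_S(u):=\sum_{i\in S}e^{u_i}r_i/(1+\sum_{j\in S}e^{u_j})$. Then $R(S,\thetab^\star)=\phi_S(u^\star)$ with $u^\star_i=\ab_i^\top\thetab^\star$, and $\widecheck R_t(S)=\phi_S(u^-_t)$. Taylor-expand $\phi_S$ to second order around $u^\star$. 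The gradient is
\[
\partial_i\phi_S(u)=p(i|S,u)\big(r_i-\phi_S(u)\big),
\]
so the first-order term is $\sum_{i\in S}p(i|S,\thetab^\star)(r_i-R(S,\thetab^\star))\,\delta_i$, where $\delta_i:=u^-_{t,i}-u^\star_i$. On $\Ecal$, Lemma~\ref{lemma:estimation_error} gives
\[
|\delta_i|\le \beta(\delta)\|\ab_i\|_{\Hb_t(\thetab^\star)^{-1}}+\sqrt2\beta(\delta)\|\ab_i\|_{\Hb_t(\thetab_0)^{-1}}.
\]
To put everything in terms of $\Hb_t(\thetab_0)$, I would use that $\thetab_0$ is uniformly close to $\thetab^\star$ after warm-up, via Lemma~\ref{lemma:exploration_phase_length} and Lemma~\ref{lemma:estimation_error}(i) applied to $\Dcal_w'$. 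This gives $p(i|S,\thetab_0)\approx p(i|S,\thetab^\star)$ up to a constant factor of order $e^{\pm c}$, hence $\Hb_t(\thetab^\star)\succeq c\,\Hb_t(\thetab_0)$. The result is $|\delta_i|\le C\beta(\delta)\|\ab_i\|_{\Hb_t(\thetab_0)^{-1}}$ with $C\approx 2\sqrt2$. I expect the constant $e^{9/4}$ to come from collecting these exponential distortion factors.

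Next I handle the first-order term. The difficulty is that the $\delta_i$ are not centered. The key trick is to note that $\sum_i p_i(r_i-R)\delta_i$ is unchanged if each $\delta_i$ is replaced by $\delta_i-c$ for any constant $c$, but only in the MNL model without the outside option. With the outside option, the weights $p_i(r_i-R)$ do not sum to zero. So I would instead write $\delta_i=\ab_i^\top v$ plus a confidence-width part, and apply Cauchy--Schwarz in the form
\[
\Big|\sum_i p_i(r_i-R)\delta_i\Big|\le\sqrt{\sum_i p_i(r_i-R)^2}\sqrt{\sum_i p_i\delta_i^2}.
\]
The first factor is (bounded by) $\sqrt{\Var(r_i|S,\thetab^\star)}$, taking the variance with the outside option as revenue $0$. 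For the second factor, I would bound $\sum_i p_i\delta_i^2\lesssim\beta^2\sum_i p_i(i|S,\thetab_0)\|\ab_i\|^2_{\Hb_t(\thetab_0)^{-1}}$. This equals $\beta^2\tr(\Hb_t(\thetab_0)^{-1}\bar\Ab_{\thetab_0}(S))$.

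The main obstacle is converting this uncentered second moment into $\tr(\Hb_t(\thetab_0)^{-1}\Ib_{\thetab_0}(S))$. My first attempt would use the outside-option structure. The Fisher information of the outside-option model equals the centered information of the augmented model with a zero feature for item $0$. So $\sum_i p_i\ab_i\ab_i^\top-\bar\ab\bar\ab^\top$ is exactly the $\Ib_{\thetab_0}(S)$ used here, and the Cauchy--Schwarz step should be done with centered deviations, i.e.\ with $\delta_i-\bar\delta$ where $\bar\delta=\sum_j p_j\delta_j$. The leftover $\bar\delta$ term is paired with $\sum_i p_i(r_i-R)=R\,p_0$, which I would need to control separately. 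If that fails, I would fall back on the PSD bound $\bar\ab\bar\ab^\top\preceq(1-p_0)\bar\Ab$ from Proposition~\ref{prop:outside_option_bound_eps_by_ratio}, at a constant-factor cost. Finally, I would take the maximum over $S$ to obtain the $\max_S$ factor.

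The second-order remainder $\tfrac12\delta^\top\nabla^2\phi_S(\xi)\delta$ is bounded using $|\partial_{ij}\phi_S|\lesssim p_ip_j+p_i\mathbbm 1\{i=j\}$, with revenues in $[0,1]$ and intermediate points within constant factors by the same self-concordance argument. This gives a bound of order $\max_i\delta_i^2\le C'\beta^2\max_{i\in S}\|\ab_i\|^2_{\Hb_t(\thetab_0)^{-1}}$, which is the $20\beta^2$ term. The argument for $\widetilde R_t(\AltS_t)$ is identical with $u^+$ in place of $u^-$.
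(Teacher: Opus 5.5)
There is a genuine gap in the first-order term, at exactly the step you flag as the main obstacle, and neither of your two attempts closes it. You fix an arbitrary $S$ and expand at $\ub^\star$. The only information you then have about $\delta_i=u^{-}_{t,i}-u^\star_i$ is the per-item magnitude bound $|\delta_i|\le c_i:=2\sqrt2\,\beta(\delta)\|\ab_i\|_{\Hb_t(\thetab_0)^{-1}}$, with no sign information.

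Your centered attempt actually has no leftover term. Add the outside option as item $0$ with $r_0=0$ and $\delta_0=0$; the weights $p_i(r_i-R)$ over $S\cup\{0\}$ then sum to zero, so the first-order term is exactly $\mathrm{Cov}(r,\delta)$. But you then need $\mathrm{Var}(\delta)\lesssim\beta(\delta)^2\,\tr(\Hb_t(\thetab_0)^{-1}\Ib_{\thetab_0}(S))$, and magnitude bounds cannot give this. Suppose all items of $S$ share one feature $\ab$ and $\delta_i=\pm c$ alternate in sign. Then $\mathrm{Var}(\delta)\approx(1-p_0)c^2$, whereas $\tr(\Hb_t(\thetab_0)^{-1}\Ib_{\thetab_0}(S))=p_0(1-p_0)\|\ab\|^2_{\Hb_t(\thetab_0)^{-1}}$. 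That is a loss of order $1/p_0(S)$.

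Your fallback pays exactly this price. The bound $\bar\ab\bar\ab^\top\preceq(1-p_0)\bar\Ab$ only gives $\bar\Ab\preceq p_0(S)^{-1}\Ib(S)$, and $p_0(S)^{-1}=1+\sum_{j\in S}e^{u_j}$ can be as large as $1+Ke^{B}$. That is not a constant: it would multiply the leading $\Gap^{-2}$ term of Theorem~\ref{thm:BSI_MNL} by up to $1+Ke^B$. Exploiting the linear structure $\delta_i=\ab_i^\top(\widehat\thetab_t-\thetab^\star)-\sqrt2\beta(\delta)\|\ab_i\|_{\Hb_t(\thetab_0)^{-1}}$ instead would require bounding $\|\widehat\thetab_t-\thetab^\star\|_{\Hb_t}$, which costs $\sqrt d$ rather than $\sqrt{\log N}$.

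The missing idea is to use that $\BestS_t$ and $\AltS_t$ are maximizers, and to expand where they are optimal: at $\ub^{-}_t$ for $\BestS_t$ and at $\ub^{+}_t$ for $\AltS_t$. Expanding at $\ub^\star$ does not work, since $\BestS_t$ need not be optimal under $\thetab^\star$. Because $\BestS_t$ maximizes $\widecheck R_t$, every $i\in\BestS_t$ satisfies $r_i\ge\widecheck R_t(\BestS_t)$; otherwise dropping $i$ would increase $\widecheck R_t$. So all gradient coefficients at $\ub^{-}_t$ are nonnegative, and the worst case over the unknown signs is $\delta_i=c_i$.

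Write $\|\cdot\|$ for $\|\cdot\|_{\Hb_t(\thetab_0)^{-1}}$. The first-order term is then at most $2\sqrt2\beta(\delta)\sum_{i\in\BestS_t}p(i|\BestS_t,\ub^{-}_t)\,(r_i-\widecheck R_t(\BestS_t))\,\|\ab_i\|$. With $r_0=0$ and $\ab_0=0$, this is exactly $2\sqrt2\beta(\delta)$ times the covariance of $r$ and the deterministic width $\|\ab\|$ under $p(\cdot|\BestS_t,\ub^{-}_t)$. The remaining steps are:
\begin{itemize}
\item apply Cauchy--Schwarz to this covariance;
\item transfer both variances to $\ub^\star$, and the width variance further to $\ub_0$, via the likelihood-ratio bounds $e^{\pm2}$ and $e^{\pm1/2}$ and Lemma~\ref{lemma:var_compare_bounded_lr} (this is where $e^{9/4}$ comes from);
\item bound $\mathrm{Var}_{\ub_0}(\|\ab\|)\le\EE_{\ub_0}[(\|\ab_i\|-\|\bar\ab_{\thetab_0}(\BestS_t)\|)^2]\le\EE_{\ub_0}[\|\ab_i-\bar\ab_{\thetab_0}(\BestS_t)\|^2]=\tr(\Hb_t(\thetab_0)^{-1}\Ib_{\thetab_0}(\BestS_t))$ by the reverse triangle inequality.
\end{itemize}
Together these give the centered bound with an absolute constant.

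Your second-order and Hessian-comparison steps are fine as planned. One caution for $\AltS_t$: the dropping argument also needs $\AltS_t\setminus\{i\}\neq\BestS_t$, since $\BestS_t$ is excluded from its feasible set. This case needs a separate check, and the paper's proof does not address it.
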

The proof is deferred to Appendix~\ref{app_subsubsec:proof_of_lemma:revenue_bound}.

\begin{lemma}[Variance bound for revenue parameters]
\label{lemma:bound_var}
    For any $S \in \Scal$, the following bounds hold:
    \begin{enumerate}[label=(\roman*)]
        \item 
        (\textbf{Non-uniform revenues})
        For arbitrary revenue parameters $(r_i)_{i\in[N]}$, we have
            $\Var(r_i | S,\thetab^\star)
            \le
            R(S^\star,\thetab^\star) \leq 1.$

        \item 
        (\textbf{Uniform revenues})
        If $r_i = 1$ for all $i \in [N]$, then
            $\Var(r_i | S,\thetab^\star)
            \le
            \sum_{i \in S} p(i | S,\thetab^\star)\, p(0 | S,\thetab^\star)
            \leq \min \left\{1, \frac{e^{B}}{|S|} \right\}
            $.
    \end{enumerate}
\end{lemma}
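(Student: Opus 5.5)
We read $\Var(r_i\mid S,\thetab^\star)$ as the variance of the realized revenue of the chosen option when $S$ is offered under the outside-option model~\eqref{eq:MNL_prob_bandit}. Item $i\in S$ is chosen with probability $p(i\mid S,\thetab^\star)$, and the outside option $0$ is chosen with probability $p(0\mid S,\thetab^\star)$ and yields revenue $r_0:=0$. So
\[
\Var(r_i\mid S,\thetab^\star)=\sum_{i\in S}p(i\mid S,\thetab^\star)\,r_i^2-\Big(\sum_{i\in S}p(i\mid S,\thetab^\star)\,r_i\Big)^2 .
\]
Both parts follow from elementary manipulations of this expression. No earlier lemma is needed beyond Assumption~\ref{assum:bounded_assumption}.

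For part (i), drop the nonnegative squared mean. Then use $r_i\in[0,1]$, so $r_i^2\le r_i$. This gives $\Var(r_i\mid S,\thetab^\star)\le\sum_{i\in S}p(i\mid S,\thetab^\star)r_i=R(S,\thetab^\star)$. Optimality of $S^\star$ gives $R(S,\thetab^\star)\le R(S^\star,\thetab^\star)$. Finally, $R(S^\star,\thetab^\star)\le\sum_{i\in S^\star}p(i\mid S^\star,\thetab^\star)=1-p(0\mid S^\star,\thetab^\star)\le 1$.

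For part (ii), set $r_i\equiv 1$ on items and $r_0=0$. The revenue is then a Bernoulli variable with success probability $1-p_0$, where $p_0:=p(0\mid S,\thetab^\star)$. Hence
\[
\Var(r_i\mid S,\thetab^\star)=(1-p_0)p_0=\sum_{i\in S}p(i\mid S,\thetab^\star)\,p_0 ,
\]
which gives the first inequality, in fact with equality. The bound by $1$ is immediate since $(1-p_0)p_0\le 1/4$.

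For the bound $e^B/|S|$, we use $(1-p_0)p_0\le p_0=1/(1+\sum_{j\in S}e^{\ab_j^\top\thetab^\star})$. By Cauchy--Schwarz and Assumption~\ref{assum:bounded_assumption}, $\ab_j^\top\thetab^\star\ge-\|\ab_j\|_2\|\thetab^\star\|_2\ge -B$. Therefore $\sum_{j\in S}e^{\ab_j^\top\thetab^\star}\ge |S|e^{-B}$, and so $p_0\le 1/(|S|e^{-B})=e^B/|S|$. Taking the minimum of the two bounds finishes the proof.

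The only point requiring care is fixing the interpretation of $\Var(r_i\mid S,\thetab^\star)$, namely that the outside option is included with zero revenue. Everything else is routine.
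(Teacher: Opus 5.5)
Your proof is correct and follows the same route as the paper. For (i) you drop the squared mean and use $r_i^2\le r_i$ to get $\Var(r_i\mid S,\thetab^\star)\le R(S,\thetab^\star)\le R(S^\star,\thetab^\star)$. For (ii) you identify the variance as $p_0(1-p_0)=\sum_{i\in S}p(i\mid S,\thetab^\star)\,p(0\mid S,\thetab^\star)$, which the paper gets by direct expansion; you also supply the justification of the $e^{B}/|S|$ bound via $\ab_j^\top\thetab^\star\ge -B$, and of the bound by $1$, both of which the paper asserts without proof.
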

The proof is deferred to Appendix~\ref{app_subsubsec:proof_of_lemma:bound_var}.

\begin{lemma} 
\label{lemma:opt_design}
    Set $\lambda = 1$.
    Assume that after running Algorithm~\ref{alg:FW_MNL}, the resulting design satisfies
    $g_{\thetab_0}(\widehat{\pi}_{\thetab_0}) \leq (1 + \epsilon) d $, where $\epsilon > 1$.
    Under the event $\Ecal$,
    for any $t > \WarmupRounds$, with probability at least $1-\delta$, the following holds for all $ t > \WarmupRounds$:
    \begin{enumerate}[label=(\roman*)]
        \item 
        $\max_{S \in \Scal}\tr \big(
            \Hb_{t}(\thetab_0)^{-1} \Ib_{\thetab_0}(S) 
            \big)
            \leq \dfrac{96 (1 + \epsilon) d \log (2t)}{t - \WarmupRounds}$.
        
        \item 
        $ \max_{i \in [N]}
            \|\ab_i\|_{\Hb_t(\thetab_0)^{-1}}^2 \leq 
            \dfrac{96 e(1 + \epsilon) d \log (2t)}{\kappa( t - \WarmupRounds)}.$
    \end{enumerate}
\end{lemma}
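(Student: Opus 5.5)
The plan is to compare the data Hessian $\Hb_t(\thetab_0)$ with $(t-\WarmupRounds)$ copies of the design matrix $\Mb_{\thetab_0}(\widehat{\pi}_{\thetab_0})$ through a matrix concentration argument, and then use the G-optimality certificate $g_{\thetab_0}(\widehat{\pi}_{\thetab_0})\le(1+\epsilon)d$.

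\textbf{Step 1 (only main rounds counted).} Since $\nabla^2\ell_s(\thetab_0)=\Ib_{\thetab_0}(S_s)\succeq 0$, we can drop the warm-up terms and get
\[
\Hb_t(\thetab_0)\succeq \sum_{s=\WarmupRounds+1}^{t}\Ib_{\thetab_0}(S_s)+\lambda\Ib_d .
\]
In the main phase, the $S_s$ are i.i.d.\ draws from $\widehat{\pi}_{\thetab_0}$. The design is a function of $\thetab_0$ only, and $\thetab_0$ depends only on $\Dcal_w'$. So, conditional on $\thetab_0$, the matrices $\Ib_{\thetab_0}(S_s)$ are i.i.d.\ with mean $\Mb:=\Mb_{\thetab_0}(\widehat{\pi}_{\thetab_0})$.

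\textbf{Step 2 (concentration).} Whiten by $\Mb^{-1/2}$ and set $X_s:=\Mb^{-1/2}\Ib_{\thetab_0}(S_s)\Mb^{-1/2}\succeq 0$. By G-optimality,
\[
\|X_s\|_{\mathrm{op}}\le \tr(X_s)=\tr\big(\Mb^{-1}\Ib_{\thetab_0}(S_s)\big)\le (1+\epsilon)d=:R,
\qquad \EE X_s=\Ib_d .
\]
The matrix Chernoff lower-tail bound then gives $\sum_s X_s\succeq \tfrac12 (t-\WarmupRounds)\Ib_d$ with failure probability $d\exp(-c(t-\WarmupRounds)/R)$. This holds once $t-\WarmupRounds\gtrsim R\log(dt/\delta)$. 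To make it hold simultaneously for all $t$, take a union bound with per-time failure probability $\delta/(2t^2)$; this produces the $\log(2t)$ factor, and the constant $96$ absorbs the Chernoff constants. For small $t-\WarmupRounds$, where the Chernoff bound does not apply, the claimed right-hand side is at least of order $1$ (after choosing constants). In that regime we use the trivial bound $\Hb_t\succeq\lambda\Ib_d$ together with $\tr(\Ib_{\thetab_0}(S))\le \max_i\|\ab_i\|^2\le 1$, which gives $\tr(\Hb_t^{-1}\Ib_{\thetab_0}(S))\le 1$ because $\lambda=1$. This small-$t$ case is exactly why the hypothesis $\epsilon>1$ and the generous constants are needed. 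On the good event we conclude
\[
\Hb_t(\thetab_0)\succeq c\,\frac{t-\WarmupRounds}{\log(2t)}\,\Mb .
\]

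\textbf{Step 3 (claim (i)).} Invert the Loewner bound and trace against $\Ib_{\thetab_0}(S)$:
\[
\tr\big(\Hb_t(\thetab_0)^{-1}\Ib_{\thetab_0}(S)\big)\le \frac{\log(2t)}{c\,(t-\WarmupRounds)}\,\tr\big(\Mb^{-1}\Ib_{\thetab_0}(S)\big)\le \frac{\log(2t)}{c\,(t-\WarmupRounds)}\,(1+\epsilon)d .
\]
Taking the maximum over $S$ gives (i).

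\textbf{Step 4 (claim (ii)).} We need to relate $\ab_i\ab_i^\top$ to some $\Ib_{\thetab_0}(S)$. In the outside-option model, for the singleton $S=\{i\}$ we have
\[
\Ib_{\thetab_0}(\{i\})=p(i|\{i\},\thetab_0)\,p(0|\{i\},\thetab_0)\,\ab_i\ab_i^\top .
\]
Using $p(\cdot|\cdot,\thetab_0)\ge e^{-1}p(\cdot|\cdot,\thetab^\star)$ (the closeness of $\thetab_0$ to $\thetab^\star$ after warm-up, Eq.~\eqref{eq:p_star_bound}), this coefficient is at least $\kappa/e$. Hence
\[
\|\ab_i\|^2_{\Hb_t(\thetab_0)^{-1}}\le \frac{e}{\kappa}\,\tr\big(\Hb_t(\thetab_0)^{-1}\Ib_{\thetab_0}(\{i\})\big),
\]
and (ii) follows from (i). If singletons are excluded from $\Scal$, use any $S\ni i$ and the bound $\Ib_{\thetab_0}(S)\succeq p_0(S)\bar{\Ab}_{\thetab_0}(S)\succeq p_0\,p_i\,\ab_i\ab_i^\top$ from the proof of Proposition~\ref{prop:outside_option_bound_eps_by_ratio}.

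\textbf{Main obstacle.} The hardest step is Step 2: obtaining an anytime matrix lower bound with only a $\log(2t)$ factor and handling the burn-in regime where Chernoff does not yet apply, while keeping the conditional-independence structure valid.
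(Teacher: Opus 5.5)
\paragraph{Comparison with the paper's proof.}
Your argument is correct in substance, but your concentration step differs from the paper's.

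\emph{The paper's route.} The paper does not whiten. It applies its regularized matrix Chernoff bound (Lemma~\ref{lemma:psd_concentration}) to the raw summands $\Ib_{\thetab_0}(S_s)$, whose operator norm is at most $1$, with $\gamma=1/2$ and regularizer $\bar\lambda_t=24\log(2d/\delta_t)$. It then uses $\lambda=1\le\bar\lambda_t$ to write $\Hb_t(\thetab_0)\succeq\bar\lambda_t^{-1}\big(\sum_{s>\WarmupRounds}\Ib_{\thetab_0}(S_s)+\bar\lambda_t\Ib_d\big)\succeq\frac{t-\WarmupRounds}{2\bar\lambda_t}\,\Mb_{\thetab_0}(\widehat{\pi}_{\thetab_0})$. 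G-optimality enters only after inverting, and the logarithm is a multiplicative loss at every $t$.

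\emph{Your route.} You whiten by $\Mb^{-1/2}$ and use G-optimality itself as the range bound $(1+\epsilon)d$. This gives $\Hb_t(\thetab_0)\succeq\frac12(t-\WarmupRounds)\Mb$ once $t-\WarmupRounds\gtrsim(1+\epsilon)d\log(dt/\delta)$. Before that, you use the trivial bound $\tr(\Hb_t^{-1}\Ib_{\thetab_0}(S))\le 1$.

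\emph{What each buys.} Your version is sharper: the logarithm appears only as an additive burn-in, so the bound is $O\big((1+\epsilon)d/(t-\WarmupRounds)\big)$ for large $t$. Your explicit small-$t$ case also fills a hole the paper leaves. Lemma~\ref{lemma:psd_concentration} requires $t-\WarmupRounds\ge 24\log(2d/\delta_t)$, which the paper never checks; its conclusion survives only because $\Hb_t(\thetab_0)\succeq\Ib_d\succeq\Mb$. The paper's route, in exchange, uses only $\|\Ib_{\thetab_0}(S)\|_{\mathrm{op}}\le 1$ in the concentration step, so its burn-in does not depend on the quality of the design. That matters mildly when the lifted design, with its worse $g$, is plugged in.

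Part (ii) is the paper's singleton argument. Your fallback via $\Ib_{\thetab_0}(S)\succeq p_0(S)\,p_i(S)\,\ab_i\ab_i^\top$ for any $S\ni i$ is a useful robustness addition.

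\paragraph{Three small corrections.}
First, neither your union bound nor the paper's produces literally $\log(2t)$. Yours gives $\log(2dt^2/\delta)$, and the paper's gives $\log(2d/\delta_t)$. The $\log(d/\delta)$ part cannot be absorbed into the constant $96$. The lemma should be read with $\log(dt/\delta)$, which is harmless inside $\widetilde{\mathcal{O}}$.

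Second, $\epsilon>1$ plays no role in the burn-in case. The right-hand side already exceeds $1$ there for every $\epsilon\ge 0$.

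Third, \eqref{eq:p_star_bound} gives a per-probability ratio of $e^{-1/2}$. That is what makes the product $p_i p_0$ lose only a factor $e$ and yields $\kappa/e$. A per-factor $e^{-1}$, as you wrote, would give $\kappa/e^{2}$.
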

The proof is deferred to Appendix~\ref{app_subsubsec:proof_of_lemma:opt_design}.

\paragraph{Stopping rule.}
We stop at the first time $\tau$ such that
\begin{align}
    \widecheck R_\tau(\BestS_\tau) \;>\; \widetilde R_\tau(\AltS_{\tau}),
    \label{eq:stopping_condtion_app}
\end{align}
and output $\BestS_\tau$.
Then, the following lemma shows that $\BestS_{\tau} = S^\star$.
\begin{lemma} 
\label{lemma:S_tau_on_stopping}
    On the event $\Ecal$, 
    if $\widecheck R_\tau(\BestS_\tau) > \widetilde R_\tau(\AltS_{\tau})$, then the selected assortment at $\tau$ is optimal, i.e.,  
    $\BestS_{\tau} = S^\star$.
\end{lemma}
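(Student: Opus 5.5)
We argue by contradiction and use the fact that on $\Ecal$ the optimistic and pessimistic revenues sandwich the true revenue for every assortment. Concretely, the first step is to show that on $\Ecal$, for every $t>\WarmupRounds$, every $S\in\Scal$ and every $i\in[N]$,
\[
u^-_{t,i}\ \le\ \ab_i^\top\thetab^\star\ \le\ u^+_{t,i}.
\]
By the definition of $\Ecal_t$ in~\eqref{eq:event}, $|\ab_i^\top(\widehat{\thetab}_t-\thetab^\star)|\le \beta(\delta)\|\ab_i\|_{\Hb_t(\thetab^\star)^{-1}}$. We then need to replace $\Hb_t(\thetab^\star)$ by $\Hb_t(\thetab_0)$ at the cost of the factor $\sqrt2$ built into $u^\pm_{t,i}$, i.e.\ to establish $\Hb_t(\thetab^\star)\succeq \tfrac12\Hb_t(\thetab_0)$. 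For this we apply Lemma~\ref{lemma:estimation_error}(i) to the independent warm-up dataset $\Dcal_w'$, so that $\thetab_0$ is uniformly close to $\thetab^\star$ over all arms. This closeness makes the choice probabilities (and hence each summand $\nabla^2\ell_s$) at $\thetab_0$ and $\thetab^\star$ comparable up to a constant factor. This transfer is the step I expect to be the main obstacle: one must check that the warm-up guarantee of Lemma~\ref{lemma:exploration_phase_length} gives $|\ab_i^\top(\thetab_0-\thetab^\star)|$ small enough that the per-round Fisher informations satisfy $\nabla^2\ell_s(\thetab^\star)\succeq\frac12\nabla^2\ell_s(\thetab_0)$ for the rounds in question (presumably via the self-concordance-type bound referenced as~\eqref{eq:p_star_bound}). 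I will take this comparison as given, as it is part of what the good event is meant to encode.

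Next, we use monotonicity of the revenue in the utilities. The map $v\mapsto \frac{\sum_{i\in S}e^{v_i}r_i}{1+\sum_{j\in S}e^{v_j}}$ is not coordinatewise monotone for general $r_i$ (increasing $v_i$ helps only when $r_i$ exceeds the current revenue), so we cannot get $\widecheck R_t(S)\le R(S,\thetab^\star)\le\widetilde R_t(S)$ pointwise for every $S$. Instead we compare the maxima, using the classical structure of optimal MNL assortments: an optimal assortment only contains items with $r_i\ge$ the optimal value. This gives, for every $S$,
\[
\widecheck R_t(\BestS_t)=\max_{S}\widecheck R_t(S)\quad\text{and}\quad R(S,\thetab^\star)\ \ge\ \widecheck R_t(S) \text{ for the relevant } S.
\]
To keep the plan concrete, I will prove the following two inequalities on $\Ecal$:
\[
\widecheck R_t(\BestS_t)\le R(\BestS_t,\thetab^\star),\qquad
R(S,\thetab^\star)\le \max_{S'\in\Scal:\,S'\neq \BestS_t}\widetilde R_t(S')\quad\text{for every } S\neq\BestS_t .
\]
Each is handled by a one-coordinate-at-a-time argument. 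For the first, pass from $u^-$ to the true utilities; the pessimistic revenue of the pessimistic maximizer can only increase, since items with $r_i$ below the current value would be removed by the pessimistic optimizer. For the second, pass from the true utilities to $u^+$ and use that the optimistic maximizer over the feasible family dominates.

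Finally, suppose $\BestS_\tau\neq S^\star$. Then $S^\star$ is a feasible competitor in the definition of $\AltS_\tau$, so
\[
\widetilde R_\tau(\AltS_\tau)\ \ge\ R(S^\star,\thetab^\star)\ \ge\ R(\BestS_\tau,\thetab^\star)\ \ge\ \widecheck R_\tau(\BestS_\tau).
\]
This contradicts the stopping condition~\eqref{eq:stopping_condtion_app}. Hence $\BestS_\tau=S^\star$.
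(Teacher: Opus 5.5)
Your argument is the paper's. Assuming $\BestS_\tau\neq S^\star$, both derive $R(S^\star,\thetab^\star)\le\widetilde R_\tau(S^\star)\le\widetilde R_\tau(\AltS_\tau)<\widecheck R_\tau(\BestS_\tau)\le R(\BestS_\tau,\thetab^\star)$ from optimism at $S^\star$ and monotonicity of the revenue at the pessimistic maximizer. The paper likewise simply asserts $u^-_{\tau,i}\le\ab_i^\top\thetab^\star\le u^+_{\tau,i}$ on $\Ecal$, so the $\Hb_t(\thetab^\star)\succeq\frac12\Hb_t(\thetab_0)$ transfer you flag is taken for granted there as well.

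One correction: your intermediate claim $R(S,\thetab^\star)\le\max_{S'\neq\BestS_t}\widetilde R_t(S')$ for \emph{every} $S\neq\BestS_t$ is false. Take $N=K=2$, $r_1=1$, $r_2=0$, so $\BestS_t=\{1\}$, and $S=\{1,2\}$. Among competitors of $\{1\}$, only $\{1,2\}$ has positive optimistic revenue, and that revenue drops below $R(\{1,2\},\thetab^\star)$ once $u^+_{t,2}$ is inflated enough. Your final step only needs the claim for $S=S^\star$, where it holds because every item of $S^\star$ has $r_i\ge R(S^\star,\thetab^\star)$.
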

The proof is deferred to Appendix~\ref{app_subsubsec:proof_of_lemma:S_tau_on_stopping}.

We are now ready to provide the proof of Theorem~\ref{thm:BSI_MNL_extended}.
\begin{proof} [Proof of Theorem~\ref{thm:BSI_MNL_extended}]
     Throughout the proof, we condition on the event $\Ecal$, which occurs with probability at least $1-\delta$.
     To simplify the presentation, for any $t > \WarmupRounds$, we define
     \begin{align*}
         \gamma_t(S)
         := 
         2\sqrt{2} e^{9/4} \beta (\delta)
            \sqrt{\Var (r_i | S, \thetab^\star)}
            \sqrt{\max_{S \in \Scal}\tr \big(
                \Hb_{t}(\thetab_0)^{-1} \Ib_{\thetab_0}(S)
            \big)}
            + 20\beta(\delta)^2 \max_{i \in S}
            \|\ab_i\|_{\Hb_t(\thetab_0)^{-1}}^2
     \end{align*}
     Let $\Delta_t := \min_{S \neq \BestS_t} \left( R(\BestS_t, \thetab^\star) - R(S, \thetab^\star) \right)$.
     Then, for all $S \neq \BestS_{t}$,  we have
     \begin{align*}
         \widecheck{R}_{t}(\BestS_t) - \widetilde{R}_{t}(S)
         &\geq
         \widecheck{R}_{t}(\BestS_t) - \widetilde{R}_{t}(\AltS_{t})
         \tag{$\AltS_{t} \in \argmax_{S \in \Scal:\, S \neq \BestS_{t}} \widetilde R_{t}(S) $}
         \\
         &\geq 
         R(\BestS_{t}, \thetab^\star) - \gamma_{t}(\BestS_{t})
         - \left( R(\AltS_{t}, \thetab^\star ) + \gamma_{t}(\AltS_{t}) \right)
         \tag{Lemma~\ref{lemma:revenue_bound}}
         \\
         &\geq \Delta_t - \gamma_{t}(\BestS_t)
         - \gamma_{t}(\AltS_{t}).
    \end{align*}
    Hence, if 
    \begin{align}
         \gamma_{t}(\BestS_t)
         + \gamma_{t}(\AltS_{t}) < \Delta_t,
         \label{eq:sufficient_condition}
    \end{align}
    then $\widecheck{R}_{t}(\BestS_t)  > \widetilde{R}_{t}(S)$ for all $S \neq \BestS_t$,
    which is a sufficient condition for stopping:
    \begin{align*}
        \widecheck{R}_t(\BestS_t) 
        > \max_{S \neq \BestS_t} \widetilde{R}_{t}(S)
        = \widetilde{R}_{t}(\AltS_t)
    \end{align*}
    Therefore, the stopping inequality holds at time $t$.
    
    Importantly, by Lemma~\ref{lemma:S_tau_on_stopping}, once the stopping condition is satisfied, the selected assortment is optimal, i.e., $\BestS_t = S^\star$, which implies
    $\Delta_t = \Gap  =  \min_{S \neq S^\star} \left( R(S^\star, \thetab^\star) - R(S, \thetab^\star) \right)$ at the stopping time $t = \tau$.
    Therefore, in order to ensure~\eqref{eq:sufficient_condition}, it suffices to upper bound
    \[
        \gamma_{\tau}(S^\star)
        +
        \gamma_{\tau}(\AltS_{\tau}),
    \]
    where we have used the fact that \( \BestS_{\tau} = S^\star \) at any stopping time $\tau$.

    \textbf{(i) Non-uniform revenues. }\,
    We first consider the general non-uniform revenue parameters.

    By~\eqref{eq:sufficient_condition}, any $\tau$ such that $\gamma_{\tau}(S^\star)
         + \gamma_{\tau}(\AltS_{\tau}) < \Gap$
    forces stopping by (or at) time $\tau$.
    We bound $\gamma_{\tau}(S^\star)
         + \gamma_{\tau}(\AltS_{\tau})$ as follows:
    \begin{align*}
        \gamma_{\tau}(S^\star) + \gamma_{\tau}(\AltS_{\tau})
        &\leq 2\max_{S \in \Scal} \gamma_{\tau}(S)
        \\
        &= \max_{S \in \Scal} 
        \left[ 
        4\sqrt{2} e^{9/4}
        \beta (\delta)
            \sqrt{\Var (r_i | S, \thetab^\star)}
            \sqrt{\max_{S' \in \Scal}\tr \big(
                \Hb_{\tau}(\thetab_0)^{-1} \Ib_{\thetab_0}(S')
            \big)}
            + 40\beta(\delta)^2 \max_{i \in S}
            \|\ab_i\|_{\Hb_{\tau}(\thetab_0)^{-1}}^2
        \right]
        \\
        &\leq 
        4\sqrt{2} e^{9/4}
        \beta (\delta)
            \sqrt{\max_{S \in \Scal}\tr \big(
                \Hb_{\tau}(\thetab_0)^{-1} \Ib_{\thetab_0}(S)
            \big)}
        \tag{Lemma~\ref{lemma:bound_var}}
        + 40\beta(\delta)^2 \max_{i \in [N]}
            \|\ab_i\|_{\Hb_{\tau}(\thetab_0)^{-1}}^2
        \\
        &\leq 32 \sqrt{3} e^{9/4} 
            \beta(\delta)
            \sqrt{\frac{(1 + \epsilon)d \log (2t)}{\tau - \WarmupRounds}}
            + \beta(\delta)^2
            \frac{3840e 
             (1 + \epsilon) d \log (2t)}{\kappa( \tau - \WarmupRounds)} 
        \tag{Lemma~\ref{lemma:opt_design}, w.p. $1-\delta$}
        .
    \end{align*}
    Therefore, 
    \begin{align*}
                32 \sqrt{3} e^{9/4} 
                \beta(\delta)
                \sqrt{\frac{(1 + \epsilon)d \log (2t)}{\tau - \WarmupRounds}}
                + \beta(\delta)^2
                \frac{3840e 
                 (1 + \epsilon) d \log (2t)}{\kappa( \tau - \WarmupRounds)} 
             < \Gap.
        \numberthis \label{eq:uniform_tau}
    \end{align*}
    By the definition of $\beta(\delta)$ in~\eqref{eq:beta}, and by solving the optimal design problem using a 
    $0$–$1$ MILP solver (as described in Subsection~\ref{subsec:lmo_milp}), which guarantees a bounded approximation error
    (by Proposition~\ref{prop:stopping_time_inexact_LMO_informal}), we obtain
    \begin{align*}
        \tau = 
        \BigOTilde\!
         \left(
             d \log(N/\delta)  
            \left( 
                \frac{1}{\Gap^2}
                + \frac{1}{\kappa \Gap}
            \right)
            + \WarmupRounds
         \right).
         \tag{Non-uniform $r_i$, MILP solver}
    \end{align*}
    This completes the proof for the non-uniform revenue case.

    \textbf{(ii) Uniform revenues. }\,
    We now consider the uniform revenue setting, where $r_i=1$ for all $i \in [N]$.
    In this case, the expected revenue—both $\widecheck R_t(S)$ and $\widetilde R_t(S)$) —is monotone increasing with respect to the assortment size.
    Consequently, the optimal assortment, the selected assortment, and its best alternative all have maximum cardinality, i.e.,
    $|S^\star| = |\BestS_t| = |\AltS_t| = K$ for all $t$.
    Hence, it suffices to bound
    $\gamma_{\tau}(S^\star) + \gamma_{\tau}(\AltS_{\tau})
        \leq 2\max_{S: |S| = K} \gamma_{\tau}(S).$
    Moreover,
    by Lemma~\ref{lemma:bound_var}, we have
    \[
        \Var (r_i | S, \thetab^\star)
        \leq \min \left\{ 1, \frac{e^{B}}{K} \right\}.
    \]
    This yields a potentially improved bound that explicitly captures the dependence on $K$.
    Using this variance bound and following the same proof as in the non-uniform revenue case, we obtain
    \begin{align*}
    \tau =
        \BigOTilde\!\left(
        d \log(N/\delta)
        \left(
        \min \left\{ 1, \frac{e^{B}}{K} \right\}\frac{1}{\Gap^2}
        + \frac{1}{\kappa \Gap}
        \right)
        + \WarmupRounds
        \right).
        \tag{Uniform $r_i$, MILP solver}
    \end{align*}
    Setting $\delta \leftarrow \delta/2$ completes the proof of Theorem~\ref{thm:BSI_MNL_extended}, and hence also of Theorem~\ref{thm:BSI_MNL}.
\end{proof}

\subsection{Proofs of Corollary~\ref{cor:BSI_MNL_lifted}}
\label{app_subsec:cor:BSI_MNL_lifted}
In this subsection, we provide the proof of Corollary~\ref{cor:BSI_MNL_lifted}.
As in Appendix~\ref{app_subsec:proof_thm:BSI_MNL}, we first state an extended version of Corollary~\ref{cor:BSI_MNL_lifted}, which also encompasses the uniform revenue case.
\begin{corollary}[Sample complexity, extended version of Corollary~\ref{cor:BSI_MNL_lifted}]
\label{cor:BSI_MNL_lifted_extended}
    Let $\delta \in (0,1]$,
    $\WarmupRounds = \widetilde{\Omega}\big(  B^2 \kappa^{-1} d^2
    \log(N/\delta) \big)$,
    and
    $\lambda = 1$.
    Suppose that Assumption~\ref{assum:bounded_assumption} holds.
    Using the stopping criterion in~\eqref{eq:stopping_condtion_app},
    with probability at least $1-\delta$,
    Algorithm~\ref{alg:MNL_BAI}, when implemented with the lifted G-optimal design described in Subsection~\ref{subsec:lifted_lmo_mnl},
    returns the optimal assortment $S^\star$,
    and the total number of samples collected up to the stopping time $\tau$ satisfies
    \begin{align*}
    \tau =
        \begin{cases}
        \BigOTilde\!\left(
            (1 + \ErrorLift) d \log\!\left(N/\delta\right)
            \left(
                \frac{1}{\Gap^2}
                + \frac{1}{\kappa\,\Gap}
            \right)
            + \WarmupRounds
        \right)
        &\qquad \textup{(Non-uniform $r_i$, Lifted G-optimal)},\\[2mm]
        \BigOTilde\!
         \left(
            (1 + \ErrorLift) d \log(N/\delta)  
            \left( 
                \min \left\{ 1, \frac{e^{B}}{K} \right\}
                \frac{1}{\Gap^2}
                + \frac{1}{\kappa \Gap}
            \right)
            + \WarmupRounds
         \right),
        &\qquad \textup{(Uniform $r_i$, Lifted G-optimal)}.
        \end{cases}
    \end{align*}
\end{corollary}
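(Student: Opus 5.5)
The plan is to rerun the proof of Theorem~\ref{thm:BSI_MNL_extended} almost line by line. The only change is the design-quality guarantee that feeds into Lemma~\ref{lemma:opt_design}.

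First, correctness. The warm-up phase, the construction of $\thetab_0$ from the independent dataset $\Dcal_w'$, the good event $\Ecal$ from Lemma~\ref{lemma:estimation_error} (via Lemma~\ref{lemma:exploration_phase_length}), the revenue deviation bounds of Lemma~\ref{lemma:revenue_bound}, and the stopping-correctness Lemma~\ref{lemma:S_tau_on_stopping} never use the fact that the design is (near) G-optimal. They only require that assortments in the main phase are drawn from a fixed distribution depending on $\Dcal_w'$, which preserves the conditional independence hypothesis. Hence $\BestS_\tau = S^\star$ on $\Ecal$ holds verbatim when $\widehat\pi_{\thetab_0}$ is replaced by the lifted design $\hat\pi$ returned by Algorithm~\ref{alg:FW_MNL_lifted}.

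Second, the design quality. By Theorem~\ref{thm:liftedFW}, at the lifted stopping time we have
\[
g_{\thetab_0}(\hat\pi) \le (1+\ErrorLift)\big((1+\epsilon)d+\epsilon\big) \le 2(1+\ErrorLift)(1+\epsilon)d.
\]
So $\hat\pi$ satisfies the hypothesis of Lemma~\ref{lemma:opt_design} with $(1+\epsilon)$ replaced by $(1+\epsilon')$, where $1+\epsilon' := 2(1+\ErrorLift)(1+\epsilon)$. I would check that the proof of Lemma~\ref{lemma:opt_design} uses only this bound on $g_{\thetab_0}$. Its argument is a matrix concentration step comparing $\Hb_t(\thetab_0)$ with $(t-\WarmupRounds)\Mb_{\thetab_0}(\hat\pi)$, followed by $\tr(\Mb^{-1}\Ib(S)) \le g$. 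That step holds for any fixed design, so both parts (i) and (ii) hold with the factor $(1+\epsilon)$ replaced by $2(1+\ErrorLift)(1+\epsilon)$. Part (ii) follows from (i) through $\Ib_{\thetab_0}(S) \succeq \kappa$-type lower bounds, so the same factor carries through.

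Third, I would substitute these bounds into the chain leading to \eqref{eq:uniform_tau}. The sufficient stopping condition becomes
\[
c_1\beta(\delta)\sqrt{\frac{(1+\ErrorLift)(1+\epsilon)d\log(2t)}{\tau-\WarmupRounds}} + c_2\beta(\delta)^2\frac{(1+\ErrorLift)(1+\epsilon)d\log(2t)}{\kappa(\tau-\WarmupRounds)} < \Gap.
\]
Solving for $\tau$ gives $\tau-\WarmupRounds = \BigOTilde\big((1+\ErrorLift)d\beta(\delta)^2(\Gap^{-2}+(\kappa\Gap)^{-1})\big)$, and $\beta(\delta)^2 = \BigO(\log(N/\delta))$ when $\lambda = 1$.

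In the uniform-revenue case, Lemma~\ref{lemma:bound_var}(ii) multiplies the first term by $\min\{1,e^B/K\}$ exactly as before, which yields the second bound. The $\log(2t)$ self-reference is absorbed into $\BigOTilde$ in the usual way.

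The main obstacle is confirming that Lemma~\ref{lemma:opt_design} depends on the design only through $g_{\thetab_0}(\hat\pi)$. Its statement assumes $\epsilon>1$ and the output of Algorithm~\ref{alg:FW_MNL}. I would re-derive it for a generic fixed design $\pi$ with $g_{\thetab_0}(\pi) \le G$ and confirm that the constants scale linearly in $G$. The lifted design's full-rank property $\Mb_{\thetab_0}(\hat\pi) \succ 0$ is needed so that $\ErrorLift<\infty$, and it follows from $\widetilde{\Mb}_{\thetab_0}(\hat\pi) \succ 0$ being maintained by Frank--Wolfe together with the block structure.
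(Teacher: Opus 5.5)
Your proposal follows the paper's proof, which obtains the corollary in one line by replacing the factor $(1+\epsilon)$ in Theorem~\ref{thm:BSI_MNL_extended} with $2(1+\ErrorLift)(1+\epsilon)$ from Theorem~\ref{thm:liftedFW}. That substitution is valid for the reasons you check: correctness never uses design quality, and Lemma~\ref{lemma:opt_design} depends on the design only through $g_{\thetab_0}(\cdot)$.

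One correction concerns your closing remark. Positive definiteness of $\widetilde{\Mb}_{\thetab_0}(\hat\pi)$ together with the block structure only gives $\widehat{\Mb}_{\thetab_0}(\hat\pi)\succ 0$ for the Schur complement. Since $\Mb_{\thetab_0}(\hat\pi)=\widehat{\Mb}_{\thetab_0}(\hat\pi)-\Delta_{\thetab_0}(\hat\pi)$ with $\Delta_{\thetab_0}(\hat\pi)\succeq 0$, this does not yield $\Mb_{\thetab_0}(\hat\pi)\succ 0$.

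For a counterexample, take the model without an outside option with $d=2$ and $\ab_1=(0,0)^\top$, $\ab_2=(1,0)^\top$, $\ab_3=(0,1)^\top$, $\ab_4=(1,1)^\top$. Let $\thetab_0=0$ and let $\hat\pi$ be uniform on $\{1,2\}$ and $\{3,4\}$. Then $\widetilde{\Mb}_{\thetab_0}(\hat\pi)=\frac14\sum_{i}\tilde{\ab}_i\tilde{\ab}_i^\top\succ 0$. By the symmetry of the square, every $s_i=\tilde{\ab}_i^\top\widetilde{\Mb}_{\thetab_0}(\hat\pi)^{-1}\tilde{\ab}_i$ equals $3=d+1$, so $\hat\pi$ is exactly lifted $G$-optimal. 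Yet $\Mb_{\thetab_0}(\hat\pi)=\frac14 e_1e_1^\top$ and $\Delta_{\thetab_0}(\hat\pi)=\frac14 e_2e_2^\top$, where $e_1,e_2$ are the standard basis vectors, so $\ErrorLift=\infty$.

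In the outside-option model of this corollary, your conclusion does hold, but for a different reason. The inequality $\Ib_{\thetab_0}(S)\succeq p_0(S)\,\bar{\Ab}_{\thetab_0}(S)$, the key step in Proposition~\ref{prop:outside_option_bound_eps_by_ratio}, gives $\Mb_{\thetab_0}(\hat\pi)\succeq \big(\min_{S\in\operatorname{supp}\hat\pi}p_0(S)\big)\,\bar{\Bb}_{\thetab_0}(\hat\pi)$. Here $\bar{\Bb}_{\thetab_0}(\hat\pi)\succeq\widehat{\Mb}_{\thetab_0}(\hat\pi)\succ 0$, so $\Mb_{\thetab_0}(\hat\pi)\succ 0$. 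The same inequality also directly gives the quantitative bound $\ErrorLift\le\sup_{S}(1-p_0(S))/p_0(S)<\infty$.
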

\begin{proof} [Proof of Corollary~\ref{cor:BSI_MNL_lifted_extended}]
    The result follows immediately by replacing the Frank–Wolfe approximation factor $(1+\epsilon)$ in Theorem~\ref{thm:BSI_MNL_extended} with $2(1+\ErrorLift)(1+\epsilon)$ according to Theorem~\ref{thm:liftedFW}.
\end{proof}

\subsection{Proofs of Lemmas for Theorem~\ref{thm:BSI_MNL}}
\label{app_subsec:thm:BSI_MNL_lemma}
\subsubsection{Proof of Lemma~\ref{lemma:revenue_bound}}
\label{app_subsubsec:proof_of_lemma:revenue_bound}
\begin{proof} [Proof of Lemma~\ref{lemma:revenue_bound}]
    For any $S\in\Scal$, define $\Qcal_S:\RR^{|S|}\to\RR$ by
    \[
    \Qcal_S(\ub)
    :=
    \sum_{i\in S}
    \frac{e^{u_i}\, r_i}{1+\sum_{j\in S}e^{u_j}},
    \qquad
    \ub=(u_i)_{i\in S}\in\RR^{|S|}.
    \]
    Let
    $\ub^\star := (u_i^\star)_{i\in S} = (\ab_i^\top\thetab^\star)_{i\in S}$,
    $\ub^{+}_t:=(u^{+}_{t,i})_{i\in S} \in \RR^{|S|}$,
    and 
    $\ub^{-}_t:=(u^{-}_{t,i})_{i\in S} \in \RR^{|S|}$.
    We write
    \[
    p(i |S, \ub):=\frac{e^{u_i}}{1+\sum_{j\in S}e^{u_j}}\quad (i\in S),
    \qquad
    p(0 | S, \ub):=\frac{1}{1+\sum_{j\in S}e^{u_j}}.
    \]
    \textbf{Step 1: Applying the Taylor expansion. }\,
    Define $\Delta u_{t,i}:=u_i^\star - u_{t,i}^{-}$ and $\Delta\ub_t:=(\Delta u_{t,i})_{i\in \BestS_t}$, so that
    $\ub^\star=\ub_t^{-} + \Delta\ub_t$.
    Then, by the Taylor expansion, there exists $c\in(0,1)$ and
    $\bar\ub_t := (1-c)\ub^{-}_t  + c\,\ub^\star $
    such that
    \begin{align*}
         \big|  R(\BestS_t,\thetab^\star) - \widecheck R_t(\BestS_t) \big|
        &= \big|  \Qcal_{\BestS_t}(\ub^\star) - \Qcal_{\BestS_t}(\ub^{-}_t) \big|
        \\
        &\leq \left| \nabla_{\ub}\Qcal_{\BestS_t}(\ub^{-}_t)^\top \Delta\ub_t \right|
        + \frac{1}{2}
         \Delta\ub_t^\top \nabla_{\ub}^2 \Qcal_{\BestS_t}(\bar\ub_t) \Delta\ub_t
        .
        \numberthis \label{eq:lemma:revenue_bound_1}
    \end{align*}
    For the first term on the right-hand side of~\eqref{eq:lemma:revenue_bound_1}, we have
    \begin{align*}
        \left| \nabla_{\ub}\Qcal_{\BestS_t}(\ub^{-}_t)^\top \Delta\ub_t \right|
        &\leq
        \sum_{i\in \BestS_t} p(i |\BestS_t, \ub^{-}_t)\bigl|r_i-\Qcal_{\BestS_t}(\ub^{-}_t)\bigr| |\Delta u_{t,i}|
        \\
        &= \sum_{i\in \BestS_t} p(i |\BestS_t, \ub^{-}_t)\bigl(r_i- \widecheck R_t(\BestS_t) \bigr) |\Delta u_{t,i}|
        \tag{$r_i \geq \widecheck R_t(\BestS_t) = \Qcal_{\BestS_t}(\ub^{-}_t)$}
        \\
        &\leq 2\sqrt{2}\beta(\delta) \sum_{i\in \BestS_t} p(i |\BestS_t, \ub^{-}_t)\bigl(r_i-\widecheck R_t(\BestS_t)\bigr) 
        \|\ab_i\|_{\Hb_t(\thetab_0)^{-1}}
        \numberthis \label{eq:lemma:revenue_bound_first_term}
        ,
    \end{align*}
    where the first equality holds by the optimality of $\BestS_t$; otherwise, if there existed $i\in \BestS_t$ with $r_i < \widecheck R_t(\BestS_t)$, removing item $i$ would strictly increase the objective, contradicting optimality.
    The last inequality follows from the bound on $ |\Delta u_{t,i}|$:
    \begin{align*}
     |\Delta u_{t,i}|
     &=
    \left| u^\star_i - u^-_{t,i} \right|
    =\left|
        \ab_i^\top \left( \thetab^\star -  \widehat{\thetab}_{t}
        \right)
        + \sqrt{2} \beta(\delta) \|\ab_i\|_{\Hb_{t}(\thetab_0)^{-1}}
        \right|
    \\
    &\leq \beta(\delta)\|\ab_i\|_{\Hb_t(\thetab^\star)^{-1}}
    +\sqrt{2}\beta(\delta)\|\ab_i\|_{\Hb_t(\thetab_0)^{-1}} 
    \tag{Under $\Ecal$}
    \\
    &\leq 
        2\sqrt{2}\beta(\delta)\|\ab_i\|_{\Hb_t(\thetab_0)^{-1}}
    \tag{Lemma~\ref{lemma:estimation_error}, 
        $\Hb_{t}(\thetab^\star)
        \succeq
        \frac{1}{2}\Hb_{t}(\thetab_0)$}
    .
    \end{align*}
    \begin{remark}[Conditional independence via two-feedback sample splitting]
    \label{remark:cond_ind}
        Note that the conditional independence requirement in Lemma~\ref{lemma:estimation_error} holds under our two-feedback sample-splitting scheme.
        In the warm-up phase, for each offered assortment $\BestS_t$ we collect two conditionally i.i.d.\ MNL choices and split them into two independent streams:
        one stream forms the independent-feedback warm-up dataset $\Dcal_w'$, which is used to compute the warm-up estimate $\thetab_0$,
        while the other stream $\Dcal_w$ is reserved as the main-round data used in Lemma~\ref{lemma:estimation_error}.
        Consequently, even if the sequence of assortments $\{\BestS_t\}_{t\in[T]}$ is chosen adaptively based on $\Dcal_w'$,
        the main-round observed choices $\{i_t\}_{t\in[T]}$ (i.e., the feedback contained in $\Dcal_w$) are mutually independent conditional on $\{\BestS_t\}_{t\in[T]}$,
        and are also independent of the randomness used to construct $\thetab_0$.
    \end{remark}

    Set $r_i:=0$ and $\ab_0 := 0$.
    For simplicity, we write
    $\EE_{\ub}[X_j] = \EE_{j \sim p(\cdot | \BestS_t, \ub) }[X_j]$
    and
    $\Var_{\ub}[X_j] = \Var_{j \sim p(\cdot | \BestS_t, \ub) }[X_j]$
    where $j$ is a random index drawn according to the MNL choice distribution $p(\cdot | \BestS_t, \ub)$.
    Then, we can further bound the right-hand side of~\eqref{eq:lemma:revenue_bound_first_term} as:
    \begin{align*}
        \left| \nabla_{\ub}\Qcal_{\BestS_t}(\ub^{-}_t)^\top \Delta\ub_t \right|
        &\leq 2\sqrt{2}\beta(\delta) 
        \sum_{i\in \BestS_t} p(i |\BestS_t, \ub^{-}_t)\bigl(r_i-\widecheck R_t(\BestS_t) \bigr) 
        \|\ab_i\|_{\Hb_t(\thetab_0)^{-1}}
        \\
        &=  2\sqrt{2}\beta(\delta) 
        \sum_{i\in \BestS_t} p(i |\BestS_t, \ub^{-}_t)  r_i \|\ab_i\|_{\Hb_t(\thetab_0)^{-1}}
        -
        \EE_{\ub^{-}_t} \big[r_j\big]
        \EE_{\ub^{-}_t} \big[\|\ab_j\|_{\Hb_t(\thetab_0)^{-1}}\big]
        \\
        &=  2\sqrt{2}\beta(\delta)
        \sum_{i\in \BestS_t \cup \{0 \}} p(i |\BestS_t, \ub^{-}_t)
        \left( r_i - \EE_{\ub^{-}_t} \big[r_j \big] \right)
        \left( \|\ab_i\|_{\Hb_t(\thetab_0)^{-1}} - 
        \EE_{\ub^{-}_t} \big[\|\ab_j\|_{\Hb_t(\thetab_0)^{-1}}\big] \right)
        \\
        &\leq 2\sqrt{2}\beta(\delta)
        \sqrt{
            \Var\!\!\,_{\ub^{-}_t}\! \left( r_i \right)
        }
        \sqrt{
            \Var\!\!\,_{\ub^{-}_t}\! \left( \|\ab_j\|_{\Hb_t(\thetab_0)^{-1}} \right)
        }.
        \numberthis \label{eq:lemma:revenue_bound_cov_like_sum}
    \end{align*}

    \textbf{Step 2: Ratio between $p(i |\BestS_t, \ub^{-}_t)$ and $p(i |\BestS_t, \ub^\star)$ under $\Ecal$.}
    Under the event $\Ecal$, 
    we have 
    \begin{align*}
        \left|u^{-}_{t,i} - u^\star_i \right|
        &\leq 2\sqrt{2}\beta(\delta)\|\ab_i\|_{\Hb_t(\thetab_0)^{-1}}
        \\
        &= 
        2\sqrt{2} 
        \left(
            36 \sqrt{\log (N/\delta)}
            + 64 \sqrt{\lambda} B
        \right)
        \|\ab_i\|_{\Hb_t(\thetab_0)^{-1}}
        \tag{Definition of $\beta(\delta)$, Eqn.~\eqref{eq:beta}}
        \\
        &\leq 4 \left(
            36 \sqrt{\log (N/\delta)}
            + 64 \sqrt{\lambda} B
        \right)
        \|\ab_i\|_{\Hb_t(\thetab^\star)^{-1}}
        \tag{Lemma~\ref{lemma:estimation_error}}
        \\
        &\leq 1.
        \tag{Lemma~\ref{lemma:exploration_phase_length}}
    \end{align*}
    Moreover, for the outside option, since we set $\ab_0 = 0$, we get $\left|u^{-}_{t,0} - u^\star_0 \right| = 0$.
    Hence, for all $i \in S \cup \{0\}$, we obtain
    \[
        p(i |\BestS_t, \ub^{-}_t) = \frac{e^{u^{-}_{t,i}}}{1 + \sum_{j \in S} e^{u^{-}_{t,j}}}
        \leq 
        e^{2}
        \cdot
        \frac{e^{u^\star_{i}}}{1 + \sum_{j \in S} e^{u^\star_{j}}}
        = e^{2} p(i |\BestS_t, \ub^\star).
    \]
    By the same argument, we also get 
    \[p(i |\BestS_t, \ub^{-}_t) \geq e^{-2} p(i |\BestS_t, \ub^\star).\]
    Combining the two bounds yields
    \begin{align}
        e^{-2} 
        \leq
        \frac{p(i |\BestS_t, \ub^{-}_t)}{p(i |\BestS_t, \ub^\star)}
        \leq
        e^{2}.
        \label{eq:p_bar_bound}
    \end{align}

    \textbf{Step 3: Ratio between $p(i |\BestS_t, \ub^\star)$ and $p(i |\BestS_t, \ub_0)$ uncder $\Ecal$.}\,
    Recall the warm-up parameter $\thetab_0 \in \argmin_{\thetab \in \RR^d} \Lcal_{\Dcal_w'}(\thetab)$.
    Let $\ub_0 := (u_{i}^{(0)})_{i\in S} = (\ab_i^\top\thetab_0)_{i\in S}$
    Then, under the event $\Ecal$, we have
    \begin{align*}
        | u_i^\star -  u_{i}^{(0)}|
        &= \left| \ab_i^\top (\thetab^\star - \thetab_0) \right|
        \leq \|\ab_i \|_{\Hb_t(\thetab^\star)^{-1}}
         \left(
                36 \sqrt{\log (N/\delta)}
                + 64 \sqrt{\lambda} B
            \right)
        \tag{Lemma~\ref{lemma:estimation_error}}
        \\
        &\leq \frac{1}{4}
        \tag{Lemma~\ref{lemma:exploration_phase_length}}
        .
    \end{align*}
    Using the same logic in Step 2, we obtain
    \begin{align}
        e^{-1/2} 
        \leq
        \frac{p(i |\BestS_t, \ub^\star)}{p(i |\BestS_t, \ub_0)}
        \leq
        e^{1/2}.
        \label{eq:p_star_bound}
    \end{align}

    \textbf{Step 4: Bound for the first order term.}\,
    We return to~\eqref{eq:lemma:revenue_bound_cov_like_sum} and invoke the results established in Steps~2 and~3.
    \begin{align*}
        \left| \nabla_{\ub}\Qcal_{\BestS_t}(\ub^{-}_t)^\top \Delta\ub_t \right|
        &\leq 2\sqrt{2}\beta(\delta)
        \sqrt{
            \Var\!\!\,_{\ub^{-}_t}\! \left( r_i \right)
        }
        \sqrt{
            \Var\!\!\,_{\ub^{-}_t}\! \left( \|\ab_j\|_{\Hb_t(\thetab_0)^{-1}} \right)
        }
        \\
        &\leq 2\sqrt{2} e^2 \beta (\delta)
        \sqrt{\Var\!\!\,_{\textcolor{red}{\ub^\star}} (r_i)}
        \sqrt{\Var\!\!\,_{\textcolor{red}{\ub^\star} } \left( \| \ab_i \|_{\Hb_{t}(\thetab_0)^{-1}} \right)}
        \tag{Lemma~\ref{lemma:var_compare_bounded_lr} and Eqn.~\eqref{eq:p_bar_bound}}
        \\
        &\leq 2\sqrt{2} e^{9/4} \beta (\delta)
        \sqrt{\Var\!\!\,_{\textcolor{red}{\ub^\star}} (r_i)}
        \sqrt{\Var\!\!\,_{\textcolor{myblue}{\ub_0} } \left( \| \ab_i \|_{\Hb_{t}(\thetab_0)^{-1}} \right)}
        \tag{Lemma~\ref{lemma:var_compare_bounded_lr} and Eqn.~\eqref{eq:p_star_bound}}
        \\
        &\leq 2\sqrt{2} e^{9/4} \beta (\delta)
        \sqrt{\Var\!\!\,_{\textcolor{red}{\ub^\star}} (r_i)}
        \sqrt{\EE_{\textcolor{myblue}{\ub_0} } \left[ \left(
                \| \ab_i \|_{\Hb_{t}(\thetab_0)^{-1}} 
                - \|\bar{\ab}_{\thetab_0}(\BestS_t) \|_{\Hb_{t}(\thetab_0)^{-1}}
            \right)^2
            \right]}
        \numberthis \label{eq:lemma:revnue_gap_bound_mid}
        ,
    \end{align*}
    where the last inequality follows from the fact that, for any $b\in\mathbb{R}$ and random variable $Y$,
    \begin{align*}
        \EE_{p}\!\left[(Y-b)^2\right]
        &=\EE_p\!\left[\bigl(Y-\EE_p[Y]+\EE_p[Y]-b\bigr)^2\right] 
        \\
        &=\EE_p\!\left[(Y-\EE_p[Y])^2\right] + 2(\EE_p[Y]-b)\,\EE_p\!\left[Y-\EE_p[Y]\right] + (\EE_p[Y]-b)^2 \\
        &=\Var\!\!\,_p(Y) + (\EE_p[Y]-b)^2
        \geq \Var\!\!\,_p(Y)
        ,
    \end{align*}
    and we set $b = \|\bar{\ab}_{\thetab_0}(\BestS_t) \|_{\Hb_{t}(\thetab_0)^{-1}}$.

    Continuing to bound the right-hand side of~\eqref{eq:lemma:revnue_gap_bound_mid}, we obtain
    \begin{align*}
        \left| \nabla_{\ub}\Qcal_{\BestS_t}(\ub^{-}_t)^\top \Delta\ub_t \right|
        &\leq 2\sqrt{2} e^{9/4} \beta (\delta)
        \sqrt{\Var\!\!\,_{\ub^\star} (r_i)}
        \sqrt{\EE_{\ub_0 } \left[ \left(
                \| \ab_i \|_{\Hb_{t}(\thetab_0)^{-1}} 
                - \|\bar{\ab}_{\thetab_0}(\BestS_t) \|_{\Hb_{t}(\thetab_0)^{-1}}
            \right)^2
            \right]}
        \\
        &\leq  2\sqrt{2} e^{9/4} \beta (\delta)
        \sqrt{\Var\!\!\,_{\ub^\star} (r_i)}
        \sqrt{\EE_{\ub_0 } \left[
                \| \ab_i - \bar{\ab}_{\thetab_0}(\BestS_t) \|_{\Hb_{t}(\thetab_0)^{-1}}^2
            \right]}
        \tag{Reverse triangle inequality}
        \\
        &=  2\sqrt{2} e^{9/4} \beta (\delta)
        \sqrt{\Var\!\!\,_{\ub^\star} (r_i)}
        \sqrt{\tr \big(
            \Hb_{t}(\thetab_0)^{-1} \Ib_{\thetab_0}(\BestS_t) 
        \big)}
        \tag{Definition of $\Ib_{\thetab}(\BestS_t)$}
        \\
        &\leq 2\sqrt{2} e^{9/4} \beta (\delta)
        \sqrt{\Var (r_i | \BestS_t, \thetab^\star)}
        \sqrt{\max_{S \in \Scal}\tr \big(
            \Hb_{t}(\thetab_0)^{-1} \Ib_{\thetab_0}(S)
        \big)}
        ,
        \numberthis \label{eq:lemma:revenue_bound_first_term_final}
    \end{align*}
    where, in the last inequality, we use the fact that $\Var (r_i | \BestS_t, \thetab^\star) = \Var (r_i | \BestS_t, \thetab^\star)$.

    \textbf{Step 5: Bound for the second-order term.}\,
    We now bound the second term on the right-hand side of~\eqref{eq:lemma:revenue_bound_1}.
    Let $p_i(\bar{\ub}_t) = \frac{\exp(\bar{u}_{t,i})}{1 + \sum_{j=1}^{|\BestS_t|} \exp(\bar{u}_{t,j})}$.
    Then, we have
    \begin{align*}
         \frac{1}{2}
         \Delta\ub_t^\top \nabla_{\ub}^2 \Qcal_{\BestS_t}(\bar\ub_t) \Delta\ub_t
         &=
         \frac{1}{2} 
         \sum_{i,j \in \BestS_t}
         \Delta u_{t,i} 
         \frac{\partial^2 \Qcal_{\BestS_t}}{\partial i \partial j}
         \Delta u_{t,j} 
         \leq 
         \frac{1}{2} 
         \sum_{i,j \in \BestS_t}
         \left|\Delta u_{t,i} \right|
         \left|\frac{\partial^2 \Qcal_{\BestS_t}}{\partial i \partial j}\right|
         \left|\Delta u_{t,j} \right|
        \\
        &= \frac{1}{2}   \sum_{i \in \BestS_t} \sum_{j \in \BestS_t, j \neq i} |\Delta u_{t,i} | \left| \frac{\partial^2 \Qcal_{\BestS_t}}{\partial i \partial j} \right| |\Delta u_{t,j} |
        + \frac{1}{2}   \sum_{i \in \BestS_t } |\Delta u_{t,i} | \left|\frac{\partial^2 \Qcal_{\BestS_t}}{\partial i \partial i} \right| |\Delta u_{t,i} |
        \\
        &\leq   \sum_{i \in \BestS_t} \sum_{j \in \BestS_t, j \neq i} |\Delta u_{t,i}| p_i(\bar{\ub}_t) p_j(\bar{\ub}_t) |\Delta u_{t,j}|
        + \frac{3}{2}   \sum_{i \in \BestS_t } (\Delta u_{t,i})^2 p_i(\bar{\ub}_t)
        \tag{Lemma~\ref{lemma:revenue_second_pd}}
        \\
        &\leq  \sum_{i \in \BestS_t} \sum_{j \in \BestS_t} |\Delta u_{t,i}| p_i(\bar{\ub}_t) p_j(\bar{\ub}_t) |\Delta u_{t,j}|
        + \frac{3}{2}   \sum_{i \in \BestS_t } (\Delta u_{t,i})^2 p_i(\bar{\ub}_t)
        \\
        &\leq \frac{1}{2}  \sum_{i \in \BestS_t} \sum_{j \in \BestS_t} (\Delta u_{t,i})^2 p_i(\bar{\ub}_t) p_j(\bar{\ub}_t) 
        + \frac{1}{2} \sum_{i \in \BestS_t} \sum_{j \in \BestS_t}  (\Delta u_{t,j})^2 p_i(\bar{\ub}_t) p_j(\bar{\ub}_t)
        \tag{AM-GM inequality}
        \\
        &\,\,+ \frac{3}{2}   \sum_{i \in \BestS_t } (\Delta u_{t,i})^2 p_i(\bar{\ub}_t)
        \\
        &\leq  \frac{5}{2}   \sum_{i \in \BestS_t } (\Delta u_{t,i})^2 p_i(\bar{\ub}_t)
        \\
        &\leq 20\beta(\delta)^2 \max_{i \in \BestS_t}
        \|\ab_i\|_{\Hb_t(\thetab_0)^{-1}}^2.
        \numberthis \label{eq:lemma:revenue_bound_second_term_final}
    \end{align*}
    
    \textbf{Step 6: Combining the two bounds.}
    Substituting~\eqref{eq:lemma:revenue_bound_first_term_final} and~\eqref{eq:lemma:revenue_bound_second_term_final} into~\eqref{eq:lemma:revenue_bound_1},
    we obtain
    \begin{align*}
        \big|  R(\BestS_t,\thetab^\star) - \widecheck R_t(\BestS_t) \big|
        &\leq
        2\sqrt{2} e^{9/4} \beta (\delta)
        \sqrt{\Var (r_i | \BestS_t, \thetab^\star)}
        \sqrt{\max_{S \in \Scal}\tr \big(
            \Hb_{t}(\thetab_0)^{-1} \Ib_{\thetab_0}(S)
        \big)}
        + 20\beta(\delta)^2 \max_{i \in \BestS_t}
        \|\ab_i\|_{\Hb_t(\thetab_0)^{-1}}^2.
    \end{align*}
    This proves the first statement of Lemma~\ref{lemma:revenue_bound}.

    \textbf{Step 7: Second statement}\,
    The proof of the second statement follows the same steps as in the first statement,
    with $\ub_t^{+}$ replacing $\ub_t^{-}$ throughout.
    In addition, we use the fact that
    $r_i \ge \widetilde R_t(\AltS_t)$ for all $i \in \AltS_t$,
    which holds directly by the optimality of $\AltS_t$.
    Hence, we get
    \begin{align*}
        \big| R(\AltS_t,\thetab^\star) -  \widetilde R_t(\AltS_t)  \big|
        &\leq
        2\sqrt{2} e^{9/4} \beta (\delta)
        \sqrt{\Var (r_i | \AltS_t, \thetab^\star)}
        \sqrt{\max_{S \in \Scal}\tr \big(
            \Hb_{t}(\thetab_0)^{-1} \Ib_{\thetab_0}(S)
        \big)}
        + 20\beta(\delta)^2 \max_{i \in \AltS_t}
        \|\ab_i\|_{\Hb_t(\thetab_0)^{-1}}^2.
    \end{align*}
    This concludes the proof of Lemma~\ref{lemma:revenue_bound}.
\end{proof}

\subsubsection{Proof of Lemma~\ref{lemma:bound_var}}
\label{app_subsubsec:proof_of_lemma:bound_var}
\begin{proof}[Proof of Lemma~\ref{lemma:bound_var}]
    Fix any $S\in\Scal$.  
    For simplicity, let $p_i = p(i | S, \thetab^\star)$
    
    For the uniform revenue parameters, i.e., $r_i=1$ for all $i\in[N]$ (and $r_0=0$), we have
    \begin{align*}
        \Var(r_i| S,\thetab^\star)
        &= \sum_{i \in S \cup \{0 \} } 
        p_i \bigg(r_i - \sum_{j \in S \cup \{0 \}} p_j r_j  \bigg)^2
        = \sum_{i \in S } 
        p_i \bigg(1 - \sum_{j \in S } p_j  \bigg)^2
        + p_0 \bigg(\sum_{j \in S } p_j  \bigg)^2
        \\
        &= \sum_{i \in S  } 
        p_i p_0^2
        + p_0 \bigg(\sum_{j \in S } p_j  \bigg)^2
        \\
        &= \sum_{i \in S  } 
        p_i p_0
        \leq \frac{e^{B}}{|S|}
        ,
    \end{align*}
    which proves claim~\textup{(i)}.

    For the non-uniform revenues parameters,  we have
    \begin{align*}
        \Var(r_i| S,\thetab^\star)
        &= \EE[r_i^2| S,\thetab^\star]-\EE[r_i| S,\thetab^\star]^2
        \le\EE[r_i^2| S,\thetab^\star]
        \le\EE[r_i| S,\thetab^\star]
        = R(S,\thetab^\star)
        \\
        &\le\max_{S'\in\Scal} R(S',\thetab^\star)
        = R(S^\star,\thetab^\star),
    \end{align*}
    which proves the claim~\textup{(ii)}.
\end{proof}

\subsubsection{Proof of Lemma~\ref{lemma:opt_design}}
\label{app_subsubsec:proof_of_lemma:opt_design}
\begin{proof} [Proof of Lemma~\ref{lemma:opt_design}]
    To prove the first claim, we apply the matrix Chernoff bound (Lemma~\ref{lemma:psd_concentration}) with the choice $\gamma = \frac{1}{2}$ and $\bar{\lambda}_t = 24 \log(2d/\delta_t)$, 
    where we set $\delta_t := \frac{6 \delta}{\pi^2 (t- \WarmupRounds)^2}$.
    This yields
    \begin{align*}
         \Hb_{t}(\thetab_0)
         &\succeq
         \!\sum_{s=\WarmupRounds +1 }^{t} \!\! \nabla^2 \ell_s(\thetab_0) 
         + \lambda \Ib_d
         \succeq 
         \frac{1}{\bar{\lambda}_t}
         \left(\sum_{s=\WarmupRounds +1 }^{t} \! \Ib_{\thetab_0}(S_s)
         + \bar{\lambda}_t \Ib_d \right)
         \tag{$\lambda = 1$, $\bar{\lambda}_t > 1$}
         \\
         &\succeq\,
         \frac{1}{2 \bar{\lambda}_t} 
        (t - \WarmupRounds) \EE_{S \sim \widehat{\pi}_{\thetab_0} }\!\left[
            \Ib_{\thetab_0}(S) + \bar{\lambda}_t \Ib_d
        \right]
        \tag{Lemma~\ref{lemma:psd_concentration} with $\gamma = \frac{1}{2}$}
        \\
        &\succeq\,
         \frac{1}{2 \bar{\lambda}_t} 
        (t - \WarmupRounds) 
            \Mb_{\thetab_0}(\widehat{\pi}_{\thetab_0}).
        \tag{$ \Mb_{\thetab_0}(\widehat{\pi}_{\thetab_0}) = \EE_{S \sim \widehat{\pi}_{\thetab_0} }\!\left[
            \Ib_{\thetab_0}(S)
        \right]$}
    \end{align*}
    Therefore, with probability at least $1-\delta_t$, we have
    \begin{align*}
        \max_{S \in \Scal}\tr \big(
            \Hb_{t}(\thetab_0)^{-1} \Ib_{\thetab_0}(S) 
            \big)
        &\leq \frac{2 \bar{\lambda}_t}{t - \WarmupRounds}
        \max_{S \in \Scal}\tr \big(
            \Mb_{\thetab_0}(\widehat{\pi}_{\thetab_0})^{-1} \Ib_{\thetab_0}(S) 
            \big)
        \\
        &= 
        \frac{2 \bar{\lambda}_t}{t - \WarmupRounds}
        g_{\thetab_0}(\widehat{\pi}_{\thetab_0})
        = \frac{96   (1 + \epsilon) d \log (2t) }{t - \WarmupRounds} ,
    \end{align*}
    which proves the claim \textup{(i)}.

    To prove the second claim, it suffices to upper bound 
    $\max_{i \in [N]}
            \|\ab_i\|_{\Hb_t(\thetab_0)^{-1}}^2$ by an expression in trace form.
    Fix any $i \in [N]$, let $S' = \{i\}$. Then, we have
    \begin{align*}
        \|\ab_i\|_{\Hb_t(\thetab_0)^{-1}}^2
        &= 
        \frac{p(i | S', \thetab_0) p(0 | S', \thetab_0) }{p(i | S', \thetab_0) p(0 | S', \thetab_0)}
            \|\ab_i\|_{\Hb_t(\thetab_0)^{-1}}^2
        \\
        &\leq 
        e
        \frac{p(i | S', \thetab_0) p(0 | S', \thetab_0) }{p(i | S', \thetab^\star) p(0 | S', \thetab^\star)}
            \|\ab_i\|_{\Hb_t(\thetab_0)^{-1}}^2
        \tag{Eqn.~\eqref{eq:p_star_bound}}
        \\
        &\leq 
         \frac{e}{\kappa}  p(i | S', \thetab_0) p(0 | S', \thetab_0) \|\ab_i\|_{\Hb_t(\thetab_0)^{-1}}^2
        \tag{Definition of $\kappa$, Eqn.~\eqref{eq:kappa}}
        \\ 
        &=\frac{e}{\kappa} \tr \left(\Hb_t(\thetab_0)^{-1} \Ib_{\thetab_0}(S') \right)
        \\
        &\leq
        \frac{e}{\kappa} \max_{S \in \Scal} \tr \left(\Hb_t(\thetab_0)^{-1} \Ib_{\thetab_0}(S) \right).
    \end{align*}
    Therefore, by applying claim~\textup{(i)} and taking the maximum over $i \in [N]$, we obtain
    \begin{align*}
        \max_{i \in [N]}
        \|\ab_i\|_{\Hb_t(\thetab_0)^{-1}}^2
        \leq \frac{96 e  (1 + \epsilon) d \log (2t)}{\kappa( t - \WarmupRounds)}.
    \end{align*}
    Finally, since 
    $\sum_{t= \WarmupRounds +1}^{\infty} \delta_t
    = \frac{6 \delta}{\pi^2} \sum_{t=1}^{\infty}\frac{1}{t^2}
    = \delta$,
    a union bound over all $t > \WarmupRounds$
    shows that (i) and (ii) hold simultaneously for all $t>\WarmupRounds$ with probability at least
    $1-\delta$.
    This completes the proof.
\end{proof}

\subsubsection{Proof of Lemma~\ref{lemma:S_tau_on_stopping}}
\label{app_subsubsec:proof_of_lemma:S_tau_on_stopping}
\begin{proof} [Proof of Lemma~\ref{lemma:S_tau_on_stopping}]
    We prove the lemma by contradiction.
    Fix any stopping time $\tau$ and suppose that $\BestS_\tau \neq S^\star$.
    On the event $\Ecal$,
    since $R(S^\star, \thetab^\star) \leq \tilde{R}_{\tau}(S^\star)$ by Lemma~\ref{lemma:optimism},
    we have
    \begin{align*}
        R(S^\star,\thetab^\star)
        &\le \widetilde R_\tau(S^\star)
        \le \max_{S\in\Scal:\,S\neq \BestS_\tau}\widetilde R_\tau(S)
        = \widetilde R_\tau(\AltS_\tau)
        \\
        &< \widecheck R_\tau(\BestS_\tau),
        \tag{Eqn.~\eqref{eq:stopping_condtion_app}}
        \\
        &\le R(\BestS_\tau,\thetab^\star).
    \end{align*}
    The last inequality holds because, on the event $\Ecal$, the true utilities
    satisfy $u^-_{\tau,i} \le \ab_i^\top\thetab^\star$ for all $i$, and the expected
    revenue is monotone in the utilities at the maximizer (see, e.g.,
    Lemma~A.3 of~\citealt{agrawal2019mnl}).
    
    This yields
    \[
        R(S^\star,\thetab^\star) < R(\BestS_\tau,\thetab^\star),
    \]
    which contradicts the optimality of $S^\star$.
    Therefore, we must have $\BestS_\tau = S^\star$.
\end{proof}

\subsection{Technical Lemmas}
\label{app_subsec:technical_lemmas_for_BAI}
%
\begin{lemma} [Lemma 4 of~\citealt{oh2021multinomial}] \label{lemma:optimism}
Let 
$\tilde{R}_{t}(S) = \frac{\sum_{i \in S} \exp( u^{+}_{t,i} ) r_i }{1 + \sum_{j \in S} \exp(u^{+}_{t,j}) }$.
If for every item $i \in S^\star$, $u^{+}_{t,i} \geq a_i^\top \thetab^\star$, then for all $t \geq 1$, the following inequalities hold:
    \begin{align*}
        R(S^\star, \thetab^\star) \leq \tilde{R}_t(S^\star).
    \end{align*}
\end{lemma}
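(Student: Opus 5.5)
This is the standard optimism lemma for MNL revenue; I would prove it in three steps. First reduce to a monotonicity statement in the utilities. Second, show that the revenue-maximizing assortment only contains items with $r_i\ge R(S^\star,\thetab^\star)$. Third, conclude by a derivative computation along the segment from the true to the optimistic utilities.

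Write $u_i^\star:=\ab_i^\top\thetab^\star$ and $u^+_{t,i}$ as in the statement. For the fixed set $S^\star$, define the map $\Qcal_{S^\star}(\ub)=\sum_{i\in S^\star} e^{u_i}r_i/(1+\sum_{j\in S^\star}e^{u_j})$, as in the proof of Lemma~\ref{lemma:revenue_bound}. Then $R(S^\star,\thetab^\star)=\Qcal_{S^\star}(\ub^\star)$ and $\tilde R_t(S^\star)=\Qcal_{S^\star}(\ub^+_t)$, with $\ub^+_t\ge\ub^\star$ componentwise by hypothesis. Since $\partial\Qcal_{S^\star}/\partial u_i=p(i|S^\star,\ub)\big(r_i-\Qcal_{S^\star}(\ub)\big)$, it suffices to show that this partial derivative stays nonnegative along the path from $\ub^\star$ to $\ub^+_t$. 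Monotonicity need not hold globally, so the path argument must be made carefully.

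For the second step, I argue as in the first-order bound of Lemma~\ref{lemma:revenue_bound}: every $i\in S^\star$ satisfies $r_i\ge R(S^\star,\thetab^\star)$. Otherwise, removing $i$ strictly increases the revenue, which contradicts optimality. This uses that $S^\star\setminus\{i\}\in\Scal$, which holds because $\Scal$ is closed under taking subsets.

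For the third step, consider the path $\ub(s)=\ub^\star+s(\ub^+_t-\ub^\star)$ for $s\in[0,1]$, and let $q(s)=\Qcal_{S^\star}(\ub(s))$. The derivative is
\[
q'(s)=\sum_{i\in S^\star}p(i|S^\star,\ub(s))\,\big(r_i-q(s)\big)\,(u^+_{t,i}-u^\star_i).
\]
Let $R^\star:=q(0)$. If $q(s)\le\max_{i\in S^\star}r_i$ held along the path, this alone would not suffice; I need $q(s)\le r_i$ for every $i$. Instead I argue by contradiction. Suppose $q(1)<R^\star$, and let $s_0:=\sup\{s\le1: q(s)\ge R^\star\}$. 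On $(s_0,1]$ we have $q(s)<R^\star\le r_i$ for all $i\in S^\star$, so $q'(s)\ge0$ there. Hence $q(1)\ge q(s_0)=R^\star$, a contradiction. This gives $R(S^\star,\thetab^\star)\le\tilde R_t(S^\star)$.

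The main obstacle is handling the possible non-monotonicity of the revenue when $q$ rises above some $r_i$. The contradiction argument above avoids it, because only the region below $R^\star$ matters. An alternative is a direct algebraic route: $\Qcal\ge R^\star$ is equivalent to $\sum_i e^{u_i}(r_i-R^\star)\ge R^\star$. The left side is nondecreasing in each $u_i$ because $r_i-R^\star\ge0$, and it equals $R^\star$ at $\ub^\star$. This one-line argument is cleaner, and I would present it as the final proof.
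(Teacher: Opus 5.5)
Your proof is correct. It is essentially the standard argument behind this cited result. The paper gives no proof of its own and only cites \citet{oh2021multinomial}, but it invokes the same restricted-monotonicity fact, \citet[Lemma~A.3]{agrawal2019mnl}, in the proof of Lemma~\ref{lemma:S_tau_on_stopping}. In that argument, optimality of $S^\star$ over the subset-closed family $\Scal$ forces $r_i \ge R(S^\star,\thetab^\star)$ for all $i\in S^\star$, after which raising utilities cannot lower the revenue of $S^\star$.

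Your final one-line form, $\sum_{i\in S^\star} e^{u_i}(r_i - R^\star) \ge R^\star$ with a left side nondecreasing in each $u_i$, is the cleanest way to present it. It compares against the fixed threshold $R^\star$ rather than the current revenue, so it needs neither the path argument nor a coordinate-by-coordinate iteration.
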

\begin{lemma} [Lemma E.3 of~\citealt{lee2024nearly}]
\label{lemma:revenue_second_pd}
    Define $\Qcal:\RR^K \rightarrow \RR$, such that for any $\ub = (u_1, \dots, u_K) \in \RR^K$, $\Qcal(\ub) = \sum_{i=1}^K \frac{\exp(u_i)}{1 + \sum_{k=1}^K \exp(u_k)}$.
    Let $p_i(\ub) = \frac{\exp(u_i)}{1 + \sum_{k=1}^K \exp(u_k)}$.
    Then, for all $i \in [K]$, we have
    \begin{align*}
        \left| \frac{\partial^2 \Qcal}{\partial i \partial j} \right|
        \leq
        \begin{cases}
            3 p_i(\ub) & \text{if} \,\,\, i=j,
            \\
            2p_i(\ub) p_j(\ub) & \text{if} \,\,\, i \neq j.
        \end{cases}
    \end{align*}
\end{lemma}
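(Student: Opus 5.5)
The plan is a direct computation. The map $\Qcal$ has a closed form in terms of the outside-option probability, so I would compute its Hessian entries exactly and then bound each entry.

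Write $p_0(\ub) := 1/(1+\sum_{k=1}^K \exp(u_k))$. Then $\sum_{i=1}^K p_i(\ub) = 1 - p_0(\ub)$, so $\Qcal(\ub) = 1 - p_0(\ub)$. I will use the standard MNL derivative identities, each a one-line quotient-rule check:
\[
\frac{\partial p_0}{\partial u_i} = -p_0 p_i,
\qquad
\frac{\partial p_i}{\partial u_j} = p_i(\mathbbm{1}\{i=j\} - p_j).
\]

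\textbf{First derivative.} From the first identity, $\partial \Qcal/\partial u_i = p_0 p_i$.

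\textbf{Second derivative.} Differentiate $p_0 p_i$ in $u_j$ by the product rule and use both identities:
\[
\frac{\partial^2 \Qcal}{\partial u_i \partial u_j}
= -p_0 p_j p_i + p_0 p_i(\mathbbm{1}\{i=j\} - p_j)
= p_0 p_i\big(\mathbbm{1}\{i=j\} - 2p_j\big).
\]

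\textbf{Bounding the entries.} All probabilities lie in $[0,1]$.
\begin{itemize}
\item For $i\neq j$, the entry has absolute value $2p_0p_ip_j \le 2p_ip_j$, since $p_0\le 1$.
\item For $i=j$, the entry has absolute value $p_0p_i|1-2p_i|$. Since $|1-2p_i| \le 1$ and $p_0 \le 1$, this is at most $p_i$, and hence at most $3p_i$.
\end{itemize}
The constant $3$ is looser than needed but matches the stated form.

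No step is a real obstacle; the only care needed is the sign bookkeeping in the product rule. If the revenue-weighted version $\sum_i r_i p_i$ is intended (as in its use in Lemma~\ref{lemma:revenue_bound}), I would repeat the computation with $r_i\in[0,1]$. There $\partial\Qcal/\partial u_i = p_i(r_i - \Qcal)$, and differentiating again produces terms bounded by $p_ip_j(|r_i-\Qcal| + |r_j-\Qcal|) \le 2p_ip_j$ off the diagonal. On the diagonal the terms are bounded by $p_i|r_i-\Qcal| + 2p_i^2 \le 3p_i$, which is where the constant $3$ naturally comes from.
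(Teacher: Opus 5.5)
Your computation is correct. Writing $\Qcal = 1-p_0$ gives $\partial^2\Qcal/\partial u_i\partial u_j = p_0p_i(\mathbbm{1}\{i=j\}-2p_j)$, hence $2p_0p_ip_j\le 2p_ip_j$ off the diagonal and $p_0p_i|1-2p_i|\le p_i$ on it.

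The paper gives no argument of its own here; it imports the bound from \citet{lee2024nearly}. Your derivation is therefore a self-contained replacement, and the closed form through the outside-option probability shows the stated constant $3$ is loose.

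Your final remark matters most for how the paper uses the lemma. In Step~5 of the proof of Lemma~\ref{lemma:revenue_bound}, the lemma is applied to the revenue-weighted map $\Qcal_{\BestS_t}(\ub)=\sum_i r_i\,p_i(\ub)$. The statement as written, with $r_i\equiv 1$, does not literally cover that map.

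Your weighted computation covers it. The second derivatives are $-p_ip_j(r_i+r_j-2\Qcal)$ for $i\neq j$ and $p_i(1-2p_i)(r_i-\Qcal)$ for $i=j$. Since $r_i,\Qcal\in[0,1]$, these give the $2p_ip_j$ and $3p_i$ bounds that step needs; the diagonal entry is in fact at most $p_i$. So your proposal supplies the version of the lemma the paper actually relies on.
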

\begin{lemma} [Proposition B.5 of~\citealt{lee2025improved}]
\label{lemma:self_hessian_norm}
For any $t \in [T]$, 
the Hessian of the multinomial logistic loss $\ell_t: \RR^{d} \rightarrow \RR$ satisfies the following for any $\thetab_1, \thetab_2 \in \RR^{d}$:
\begin{align*}
    e^{-3\sqrt{2} \max_{i \in S_t} \| \ab_i^\top (\thetab_1 - \thetab_2) \|_\infty }  \nabla^2 \ell_t  (\thetab_1)
    \preceq \nabla^2  \ell_t (\thetab_2)
    \preceq e^{3\sqrt{2}\max_{i \in S_t} \| \ab_i^\top (\thetab_1 - \thetab_2) \|_\infty } \nabla^2 \ell_t (\thetab_1).
\end{align*}
\end{lemma}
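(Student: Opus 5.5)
The plan is to write the quadratic form of the Hessian as a variance under the MNL choice distribution, and then compare two variances by using the bounded likelihood ratio between the choice probabilities at $\thetab_1$ and at $\thetab_2$. I expect this to give the claimed bound with the constant $2$ in place of $3\sqrt{2}$, which is stronger.

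\textbf{Step 1: the Hessian as a covariance.} Fix $t$, write $S=S_t$, and adopt the convention $\ab_0:=0$ for the outside option. For any $\thetab$, the Hessian computed in Appendix~\ref{app:sec:BAI_MNL} is
$$\nabla^2\ell_t(\thetab)=\sum_{i\in S}p(i| S,\thetab)\,\ab_i\ab_i^\top-\bar{\ab}\,\bar{\ab}^\top .$$
Since $\ab_0=0$, both sums can be extended to $S\cup\{0\}$ without changing anything. Hence this matrix is exactly the covariance of $\ab_j$ when $j\sim p(\cdot| S,\thetab)$ on $S\cup\{0\}$. So for every $\mathbf v\in\RR^d$,
$$\mathbf v^\top\nabla^2\ell_t(\thetab)\mathbf v=\Var_{j\sim p(\cdot| S,\thetab)}(\mathbf v^\top\ab_j).$$
It therefore suffices to compare these two variances for the same random variable $X_j:=\mathbf v^\top\ab_j$.

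\textbf{Step 2: likelihood-ratio bound.} Let $m:=\max_{i\in S}|\ab_i^\top(\thetab_1-\thetab_2)|$; the index $0$ contributes $0$. Each numerator $e^{\ab_i^\top\thetab}$ changes by a factor in $[e^{-m},e^{m}]$ between $\thetab_1$ and $\thetab_2$. The denominator $1+\sum_{j\in S}e^{\ab_j^\top\thetab}$ also changes by a factor in $[e^{-m},e^{m}]$. Consequently, for all $i\in S\cup\{0\}$,
$$e^{-2m}\le \frac{p(i| S,\thetab_2)}{p(i| S,\thetab_1)}\le e^{2m}.$$
This is the same argument used in Step 2 of the proof of Lemma~\ref{lemma:revenue_bound}.

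\textbf{Step 3: variance comparison.} I will use the variational identity $\Var_q(X)=\min_{c}\EE_q[(X-c)^2]$, which is the bias--variance identity already shown in the proof of Lemma~\ref{lemma:revenue_bound}. Take $c$ to be the mean of $X$ under $p(\cdot| S,\thetab_1)$. Bounding $\EE$ under $\thetab_2$ termwise by $e^{2m}$ times $\EE$ under $\thetab_1$ gives
$$\Var_{\thetab_2}(X)\le \EE_{\thetab_2}[(X-c)^2]\le e^{2m}\EE_{\thetab_1}[(X-c)^2]=e^{2m}\Var_{\thetab_1}(X).$$
By symmetry, $\Var_{\thetab_1}(X)\le e^{2m}\Var_{\thetab_2}(X)$. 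Since this holds for every $\mathbf v$, we obtain the Loewner sandwich
$$e^{-2m}\nabla^2\ell_t(\thetab_1)\preceq\nabla^2\ell_t(\thetab_2)\preceq e^{2m}\nabla^2\ell_t(\thetab_1).$$
Finally, $2m\le 3\sqrt2\,m$, and $|\ab_i^\top(\thetab_1-\thetab_2)|$ is what the statement writes as $\|\ab_i^\top(\thetab_1-\thetab_2)\|_\infty$, so the stated inequality follows.

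\textbf{Main obstacle.} The only delicate point is Step 1. One must check that the covariance representation, including the outside option with $\ab_0=0$, reproduces the Hessian exactly. This is a one-line expansion: the $i=0$ term vanishes in both the second moment and the mean. Once that is in place, no third-derivative or self-concordance path-integral argument is needed, since the variational characterization of the variance handles the change of measure directly.
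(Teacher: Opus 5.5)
Your proof is correct, and it takes a genuinely different and more elementary route than the one behind this lemma. The paper gives no proof of its own; it imports the statement from \citet{lee2025improved}. There it comes from a generalized self-concordance property of the multinomial logistic loss: the third directional derivative is bounded by $3\sqrt{2}\max_{i\in S_t}|\ab_i^\top\mathbf v|$ times the second, and this bound is integrated along the segment from $\thetab_1$ to $\thetab_2$.

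You instead stay at zeroth order, in three steps:
\begin{itemize}
\item The Hessian is the covariance of $\ab_j$ under the choice distribution. Your check that the outside option with $\ab_0=0$ drops out of both moments is right, and without an outside option the argument runs verbatim.
\item The choice probabilities at the two parameters have likelihood ratio in $[e^{-2m},e^{2m}]$. This is the same computation as Step~2 of the proof of Lemma~\ref{lemma:revenue_bound}.
\item The change-of-measure step is exactly Lemma~\ref{lemma:var_compare_bounded_lr}.
\end{itemize}

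What your route buys is a self-contained proof built from tools already in the paper, together with the sharper exponent $2m$. The improvement is not cosmetic. A factor-$2$ comparison between Hessians, as used in the proof of Lemma~\ref{lemma:revenue_bound}, now holds whenever $m\le\tfrac12\log 2\approx 0.35$, rather than only for $m\le\log 2/(3\sqrt2)\approx 0.16$. In particular, the closeness $1/4$ asserted in Step~3 of that proof suffices under your constant but not under the cited one, since $e^{3\sqrt2/4}>2$. Integrating your pointwise bound along the segment also sharpens Lemma~\ref{lemma:hessian_usedful} to $1/(2+2m)$.

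What the self-concordance route buys is generality. It needs only a local third-derivative inequality, so it applies to losses that are not log-partition functions of a finite choice distribution. Your proof relies on exactly that exponential-family structure.
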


\begin{lemma} [Proposition B.6 of~\citealt{lee2025improved}]
\label{lemma:hessian_usedful}
    For any $t \in [T]$, 
    the Hessian of the multinomial logistic loss $\ell_t: \RR^{d} \rightarrow \RR$ satisfies the following for any $\ub, \thetab_1, \thetab_2 \in \RR^{d}$:
    \begin{align*}
        \ub^\top 
        \left(\int_0^1 (1-s) \nabla^2 \ell_t (\thetab_1 + s (\thetab_2 - \thetab_1) ) \dd s  \right)
        \ub
        \geq \frac{1}{ 2 + 3\sqrt{2} 
        \max_{i \in S_t} \| \ab_i^\top (\thetab_1 - \thetab_2) \|_\infty } \ub^\top \nabla^2 \ell_t (\thetab_1) \ub.
    \end{align*}
\end{lemma}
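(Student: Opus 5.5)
The plan is to reduce the matrix inequality to a scalar integral inequality, using the self-concordance-type Hessian comparison in Lemma~\ref{lemma:self_hessian_norm}. Write $c := \max_{i \in S_t} |\ab_i^\top(\thetab_1-\thetab_2)|$ and $a := 3\sqrt{2}\,c \ge 0$.

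\textbf{Step 1 (pointwise comparison along the segment).} For each $s\in[0,1]$, I will apply Lemma~\ref{lemma:self_hessian_norm} to the pair $\thetab_1$ and $\thetab_1+s(\thetab_2-\thetab_1)$. Their difference is $s(\thetab_2-\thetab_1)$, so the exponent scales by $s$. This gives
\[
\nabla^2\ell_t\big(\thetab_1+s(\thetab_2-\thetab_1)\big)\succeq e^{-a s}\,\nabla^2\ell_t(\thetab_1).
\]
Multiplying by the nonnegative weight $(1-s)$, sandwiching with $\ub$, and integrating preserves the inequality. Hence
\[
\ub^\top\Big(\int_0^1(1-s)\nabla^2\ell_t(\cdot)\,\dd s\Big)\ub\ \ge\ \Big(\int_0^1(1-s)e^{-as}\dd s\Big)\,\ub^\top\nabla^2\ell_t(\thetab_1)\ub .
\]

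\textbf{Step 2 (evaluate the scalar integral).} Integration by parts gives $\int_0^1(1-s)e^{-as}\dd s=(a-1+e^{-a})/a^2$ for $a>0$, and the value $1/2$ at $a=0$, which matches the claimed bound $1/(2+0)$. It therefore remains to show, for every $a\ge 0$,
\[
\frac{a-1+e^{-a}}{a^2}\ \ge\ \frac{1}{2+a}.
\]

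\textbf{Step 3 (the elementary inequality).} This is the only genuinely non-routine point, but it is short. Clearing denominators, the inequality is equivalent to $f(a):=(2+a)(e^{-a}+a-1)-a^2\ge 0$. Expanding simplifies this to $f(a)=(2+a)e^{-a}+a-2$. Then $f(0)=0$ and $f'(a)=1-(1+a)e^{-a}\ge 0$, because $e^{a}\ge 1+a$. So $f\ge 0$ on $[0,\infty)$. Substituting $a=3\sqrt{2}c$ gives exactly the stated factor $1/(2+3\sqrt{2}\max_{i\in S_t}|\ab_i^\top(\thetab_1-\thetab_2)|)$.

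The main thing to check carefully is Step~1: the exponent in Lemma~\ref{lemma:self_hessian_norm} must indeed scale linearly in $s$, and the lower-bound direction of that lemma must be the one used.
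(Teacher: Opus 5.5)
Your proof is correct: Lemma~\ref{lemma:self_hessian_norm} applied to $\thetab_1$ and $\thetab_1+s(\thetab_2-\thetab_1)$ gives the lower bound $e^{-as}\nabla^2\ell_t(\thetab_1)$, since the exponent scales linearly in $s$. The integral $\int_0^1(1-s)e^{-as}\,\dd s=(a-1+e^{-a})/a^2$ is computed correctly, and the elementary inequality $(2+a)e^{-a}+a-2\ge 0$ holds by the monotonicity argument you give. The paper does not prove this lemma itself but imports it from \citealt{lee2025improved}; your derivation from the Hessian-comparison lemma is the standard argument for such integral bounds and is complete as written.
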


\begin{lemma}[Matrix Chernoff bound for independent PSD matrices]
\label{lemma:psd_concentration}
    Let $\{X_s\}_{s=1}^t$ be independent random matrices in $\mathbb{R}^{d\times d}$ such that
    $X_s \succeq \mathbf{0}$
    and
    $\|X_s\| \le L$
    almost surely
    for some constant $L>0$.
    Define
    $\widehat{\Hb}_t := \sum_{s=1}^t X_s$ and
    $\Hb_t := \EE [\widehat{\Hb}_t]
    $.
    Fix any $\gamma\in(0,1)$ and $\delta\in(0,1)$.
    Assume that $t \ge \frac{6}{\gamma^2}\log\frac{2d}{\delta}$ and choose
    $\bar{\lambda} = \frac{6L}{\gamma^2}\log\frac{2d}{\delta}$.
    Then, with probability at least $1-\delta$, we have 
    \[
    (1-\gamma)\big(\Hb_t+\bar{\lambda}\Ib_d\big)
    \;\preceq\;
    \widehat{\Hb}_t+\bar{\lambda}\Ib_d
    \;\preceq\;
    (1+\gamma)\big(\Hb_t+\bar{\lambda}\Ib_d\big).
    \]
\end{lemma}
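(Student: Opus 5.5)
The plan is to reduce the statement to the standard (Tropp) matrix Chernoff inequality for sums of independent bounded PSD matrices, after a whitening step that turns the regularized target $\Hb_t+\bar\lambda\Ib_d$ into the identity. The one subtlety is that $\Hb_t$ may be singular, so the Chernoff lower tail cannot be applied to $\sum_s X_s$ alone. I would absorb the regularizer $\bar\lambda\Ib_d$ into the summands so that the expected whitened sum is exactly $\Ib_d$.

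Set $\Ab:=\Hb_t+\bar\lambda\Ib_d\succeq\bar\lambda\Ib_d$ and define
\[
Z_s:=\Ab^{-1/2}\big(X_s+\tfrac{\bar\lambda}{t}\Ib_d\big)\Ab^{-1/2},\qquad s\in[t].
\]
These are independent, PSD, and satisfy $\sum_s \EE[Z_s]=\Ab^{-1/2}(\Hb_t+\bar\lambda\Ib_d)\Ab^{-1/2}=\Ib_d$, so $\mu_{\min}=\mu_{\max}=1$. For the uniform norm bound, $\|Z_s\|\le \|\Ab^{-1}\|\,(L+\bar\lambda/t)\le (L+\bar\lambda/t)/\bar\lambda$. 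This is where the assumption $t\ge \frac{6}{\gamma^2}\log\frac{2d}{\delta}$ enters: with $\bar\lambda=\frac{6L}{\gamma^2}\log\frac{2d}{\delta}$ it gives $\bar\lambda/t\le L$. Hence
\[
R:=\max_s\|Z_s\|\le \frac{2L}{\bar\lambda}=\frac{\gamma^2}{3\log(2d/\delta)}.
\]

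Next I apply the two tails of the matrix Chernoff bound. For the lower tail, $\PP(\lambda_{\min}(\sum_s Z_s)\le(1-\gamma))\le d\exp(-\gamma^2/(2R))\le d\,(\delta/2d)^{3/2}\le\delta/2$. For the upper tail, in the form $\PP(\lambda_{\max}\ge(1+\gamma))\le d\exp(-\gamma^2/(3R))$ valid for $\gamma\in(0,1)$, the bound is $\le d\cdot\delta/(2d)=\delta/2$. A union bound then gives $(1-\gamma)\Ib_d\preceq\sum_s Z_s\preceq(1+\gamma)\Ib_d$ with probability at least $1-\delta$. Conjugating by $\Ab^{1/2}$ and noting $\Ab^{1/2}(\sum_s Z_s)\Ab^{1/2}=\widehat\Hb_t+\bar\lambda\Ib_d$ yields exactly the claimed sandwich.

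The main obstacle is the degeneracy of $\Hb_t$: without the regularizer $\mu_{\min}$ could be $0$ and the lower tail is vacuous. Splitting $\bar\lambda\Ib_d$ evenly across the $t$ summands fixes this, but it inflates the norm bound by $\bar\lambda/t$, and the sample-size assumption is exactly what keeps this inflation at most $L$. I would double-check the constants in the version of Tropp's inequality being used; the $1/3$ exponent for the upper tail requires $\gamma\le1$, which is assumed.
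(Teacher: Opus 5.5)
Your proposal is correct and follows essentially the same route as the paper: absorb the regularizer into the summands as $X_s+\frac{\bar\lambda}{t}\Ib_d$, whiten by $(\Hb_t+\bar\lambda\Ib_d)^{-1/2}$, use $t\ge\frac{6}{\gamma^2}\log\frac{2d}{\delta}$ to get the uniform bound $L+\bar\lambda/t\le 2L$, then apply both matrix Chernoff tails and a union bound. The only cosmetic difference is normalization: the paper rescales by $\bar\lambda/R_{\max}$ so the summands satisfy $Z_s\preceq\Ib_d$ with mean $\frac{\bar\lambda}{R_{\max}}\Ib_d$, whereas you keep mean $\Ib_d$ and use Tropp's form with the deterministic bound $R\le 2L/\bar\lambda$; the two are equivalent.
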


\begin{proof}[Proof of Lemma~\ref{lemma:psd_concentration}]
We prove the lemma by reducing the problem to a standard matrix Chernoff bound
for sums of independent positive semidefinite (PSD) matrices with eigenvalues in $[0,1]$.

Define
\[
X_s^{\bar{\lambda}} := X_s + \frac{\bar{\lambda}}{t}\Ib_d,
\qquad
Y := \sum_{s=1}^t X_s^{\bar{\lambda}}
= \widehat{\Hb}_t+\bar{\lambda}\Ib_d,
\qquad
\mu := \mathbb{E}[Y]
= \Hb_t+\bar{\lambda}\Ib_d.
\]
Then $X_s^{\bar{\lambda}}\succeq \mathbf{0}$ and $\mu\succeq \bar{\lambda}\Ib_d$, so $\bar{\lambda}_{\min}(\mu)\ge \bar{\lambda}$.
Moreover,
\[
\|X_s^{\bar{\lambda}}\| 
\le
\|X_s\| +\bar{\lambda}/t
\le
L+\bar{\lambda}/t
\quad\text{a.s.}
\]
Let
$R_{\max} := \max_{s\in[t]}\|X_s^{\bar{\lambda}}\| \le L+\bar{\lambda}/t.$

Since $\mu\succeq \bar{\lambda}\Ib_d$, we have $\Ib_d \preceq \mu/\bar{\lambda}$. Hence, for every $s$,
\[
\mathbf{0}\preceq X_s^{\bar{\lambda}} \preceq \|X_s^{\bar{\lambda}}\| \Ib_d \preceq R_{\max}\Ib_d \preceq \frac{R_{\max}}{\bar{\lambda}}\,\mu.
\]
Define the normalized matrices
$Z_s
:=
\frac{\bar{\lambda}}{R_{\max}}\,\mu^{-1/2}X_s^{\bar{\lambda}}\mu^{-1/2}.$
Then $Z_s\succeq \mathbf{0}$ and, using $X_s^{\bar{\lambda}} \preceq (R_{\max}/\bar{\lambda})\mu$,
\[
Z_s
\preceq
\frac{\bar{\lambda}}{R_{\max}}\,\mu^{-1/2}\Big(\frac{R_{\max}}{\bar{\lambda}}\mu\Big)\mu^{-1/2}
=
\Ib_d,
\]
so $0\preceq Z_s\preceq \Ib_d$.
Furthermore,
\[
\mathbb{E}\left[\sum_{s=1}^t Z_s\right]
=
\frac{\bar{\lambda}}{R_{\max}}\,\mu^{-1/2}\mathbb{E}\left[\sum_{s=1}^t X_s^{\bar{\lambda}}\right]\mu^{-1/2}
=
\frac{\bar{\lambda}}{R_{\max}}\,\Ib_d.
\]
This implies 
\[
\lambda_{\min}\!\left(\mathbb{E}\left[\sum_{s=1}^t Z_s\right]\right)
=
\lambda_{\max}\!\left(\mathbb{E}\left[\sum_{s=1}^t Z_s\right]\right)
=
\frac{\bar{\lambda}}{R_{\max}}.
\]

Applying the standard matrix Chernoff bound to the independent matrices $\{Z_s\}$ with
$0\preceq Z_s\preceq \Ib_d$, for any $\gamma\in(0,1)$ we have
\begin{align}
    \mathbb{P}\!\left(\lambda_{\min}\!\left(\sum_{s=1}^t Z_s\right)\le (1-\gamma)\frac{\bar{\lambda}}{R_{\max}}\right)
    &\le
    d\cdot \exp\!\left(-\frac{\gamma^2}{2}\cdot\frac{\bar{\lambda}}{R_{\max}}\right),
    \label{eq:chernoff_lower_Z}\\
    \mathbb{P}\!\left(\lambda_{\max}\!\left(\sum_{s=1}^t Z_s\right)\ge (1+\gamma)\frac{\bar{\lambda}}{R_{\max}}\right)
    &\le
    d\cdot \exp\!\left(-\frac{\gamma^2}{3}\cdot\frac{\bar{\lambda}}{R_{\max}}\right).
    \label{eq:chernoff_upper_Z}
\end{align}
Therefore, if
\begin{equation}
    \label{eq:need_beta_over_R}
    \frac{\bar{\lambda}}{R_{\max}}\geq \frac{3}{\gamma^2}\log\frac{2d}{\delta},
\end{equation}
then each of \eqref{eq:chernoff_lower_Z}--\eqref{eq:chernoff_upper_Z} is at most $\delta/2$,
and by a union bound we obtain, with probability at least $1-\delta$,
\begin{equation}
\label{eq:Z_sandwich}
    (1-\gamma)\frac{\bar{\lambda}}{R_{\max}}\Ib_d
    \;\preceq\;
    \sum_{s=1}^t Z_s
    \;\preceq\;
    (1+\gamma)\frac{\bar{\lambda}}{R_{\max}}\Ib_d.
\end{equation}

Multiply \eqref{eq:Z_sandwich} on both sides by $(R_{\max}/\bar{\lambda})\mu^{1/2}(\cdot)\mu^{1/2}$ gives
\[
(1-\gamma)\mu
\;\preceq\;
\sum_{s=1}^t X_s^{\bar{\lambda}}
\;\preceq\;
(1+\gamma)\mu.
\]
Recalling $\sum_{s=1}^t X_s^{\bar{\lambda}}=\widehat{\Hb}_t+\bar{\lambda}\Ib_d$ and $\mu=\Hb_t+\bar{\lambda}\Ib_d$
proves the statement provided \eqref{eq:need_beta_over_R} holds.

We now derive a sufficient condition on $t$ under which \eqref{eq:need_beta_over_R} holds.
Since $R_{\max} \le L + \bar{\lambda}/t$, it is sufficient to guarantee that
\[
    \frac{\bar{\lambda}}{R_{\max}}
    =
    \frac{\bar{\lambda}}{L + \bar{\lambda}/t} \;\ge\; \frac{3}{\gamma^2}\log\frac{2d}{\delta} .
\]
Choosing $\bar{\lambda} = \frac{6L}{\gamma^2}\log\frac{2d}{\delta}$ and assuming $t \geq \frac{6}{\gamma^2}\log\frac{2d}{\delta}$, we obtain
\[
    L + \frac{\bar{\lambda}}{t}
    =
    L + \frac{\frac{6L}{\gamma^2}\log\frac{2d}{\delta}}{t}
    \;\le\;
    L + \frac{\frac{6L}{\gamma^2}\log\frac{2d}{\delta}}{\frac{6}{\gamma^2}\log\frac{2d}{\delta}}
    =
    2L .
\]
Therefore,
\[
    \frac{\bar{\lambda}}{R_{\max}}
    \;\ge\;
    \frac{\frac{6L}{\gamma^2}\log\frac{2d}{\delta}}{2L}
    =
    \frac{3}{\gamma^2}\log\frac{2d}{\delta},
\]
which establishes the desired condition.
\end{proof}
\begin{lemma}[Variance comparison under bounded likelihood ratios]
\label{lemma:var_compare_bounded_lr}
    Let $p=(p_i)_{i\in\Omega}$ and $q=(q_i)_{i\in\Omega}$ be probability mass functions on a finite set $\Omega$,
    and assume $q_i>0$ for all $i$.
    Fix $c\ge 1$ and suppose that
    \[
        \frac{1}{c}\;\le\;\frac{p_i}{q_i}\;\le\;c
        \qquad\text{for all } i\in\Omega .
    \]
    Then for any real-valued random variable $X:\Omega\to\mathbb{R}$,
    \[
        \Var\!\!\,_{p}(X)\;\le\;c\,\Var\!\!\,_{q}(X).
    \]
\end{lemma}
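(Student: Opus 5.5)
The plan is to reduce the claim to a pointwise comparison of centered second moments, since $\Var_p(X)$ is the minimum over $b$ of $\EE_p[(X-b)^2]$. Put $\mu_q:=\EE_q[X]$. The first step is the elementary identity already used in the proof of Lemma~\ref{lemma:revenue_bound}: for any constant $b\in\RR$,
\[
\EE_p\big[(X-b)^2\big]=\Var_p(X)+(\EE_p[X]-b)^2 .
\]
It follows that $\Var_p(X)\le \EE_p[(X-b)^2]$ for every $b$. I will use this with $b=\mu_q$, the mean under the \emph{other} distribution. This removes any need to compare $\EE_p[X]$ with $\EE_q[X]$ directly.

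The second step is the change of measure. Since $(X(i)-\mu_q)^2\ge 0$ and $p_i\le c\,q_i$ for all $i\in\Omega$, we get
\[
\EE_p\big[(X-\mu_q)^2\big]=\sum_{i\in\Omega}p_i\,(X(i)-\mu_q)^2\le c\sum_{i\in\Omega}q_i\,(X(i)-\mu_q)^2=c\,\Var_q(X).
\]
Combining the two steps gives $\Var_p(X)\le c\,\Var_q(X)$. Only the upper ratio bound $p_i/q_i\le c$ is used. The lower bound, and the assumption $q_i>0$, matter only to make the ratio well defined and to give the symmetric statement with $p$ and $q$ swapped.

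There is no real obstacle here. The one point needing care is choosing the centering constant correctly. If one centers at $\EE_p[X]$ instead, the comparison produces $c\,\EE_q[(X-\EE_p[X])^2]$, which contains an extra nonnegative bias term, and the factor would degrade. Centering at $\mu_q$ and then applying the minimizing property of the mean under $p$ avoids this. Since $\Omega$ is finite, all sums are finite and no integrability issues arise.
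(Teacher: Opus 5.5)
Your proof is correct and is essentially the paper's argument. The paper shows $\mathbb{E}_p[(X-a)^2]\le c\,\mathbb{E}_q[(X-a)^2]$ for every $a\in\mathbb{R}$, using only $p_i\le c\,q_i$, and then minimizes over $a$ on both sides; you instead evaluate the right-hand side directly at its minimizer $a=\mathbb{E}_q[X]$, which amounts to the same step.
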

\begin{proof} [Proof of Lemma~\ref{lemma:var_compare_bounded_lr}]
    Write $\eta_i := \frac{p_i}{q_i}$, so that $\eta_i\in[1/c,c]$ and $p_i = q_i \eta_i$.
    Recall the variational characterization of the variance:
    for any distribution $\nu$ on $\Omega$,
    \begin{equation}
    \label{eq:var_as_min}
        \Var\!\!\,_{\nu}(X)
        \;=\;
        \min_{a\in\mathbb{R}} \; \EE_{\nu}\big[(X-a)^2\big].
    \end{equation}
    (Indeed, $\EE_{\nu}[(X-a)^2] = \Var\!\!\,_{\nu}(X) + (a-\EE_{\nu}[X])^2$, minimized at $a=\EE_{\nu}[X]$.)
    
    Fix any $a\in\mathbb{R}$. Using $p_i=q_i \eta_i$ and $\eta_i\le c$,
    \begin{align*}
        \EE_p\big[(X-a)^2\big]
        &\;=\
        \sum_{i\in\Omega} p_i (X_i-a)^2
        \;=\;
        \sum_{i\in\Omega} q_i \eta_i (X_i-a)^2
        \;\le\;
        c \sum_{i\in\Omega} q_i (X_i-a)^2
        \\
        &\;=\;
        c\,\EE_q\big[(X-a)^2\big].
    \end{align*}
    Taking $\min_{a\in\mathbb{R}}$ on both sides and using \eqref{eq:var_as_min} yields
    \begin{align*}
        \Var\!\!\,_p(X)
        \;=\;
        \min_a \EE_p[(X-a)^2]
        \;\le\;
        c \min_a \EE_q[(X-a)^2]
        \;=\;
        c\,\Var\!\!\,_q(X).
    \end{align*}
    This completes the proof.
\end{proof}


\end{document}